\pgfplotsset{compat=1.9}
\def\M{\mathcal{M}}
\def\X{\mathcal{X}}
\def\Y{\mathcal{Y}}
\def\S{\mathcal{S}}
\def\D{\mathcal{D}}
\def\U{\mathcal{U}}
\def\R{\mathbb{R}}
\def\de{\overset{\Delta}{=}}
\def\R{\mathbb{R}}
\newcommand{\mathdefault}[1][]{}
\theoremstyle{definition}
\newtheorem{remark}{Remark}
\newtheorem{theorem}{Theorem}
  \providecommand\BibTeX{{%
    \normalfont B\kern-0.5em{\scshape i\kern-0.25em b}\kern-0.8em\TeX}}}
\begin{document}

\author{Qi Le}
\affiliation{%
  \institution{University of Minnesota-Twin Cities}
  \city{Minneapolis,  MN 55455}
  \country{USA}}
\email{le000288@umn.edu}

\author{Enmao Diao}
\affiliation{%
  \institution{Duke University}
  \city{Durham, NC 27705}
  \country{USA}}
\email{enmao.diao@duke.edu}

\author{Xinran Wang}
\affiliation{%
  \institution{University of Minnesota-Twin Cities}
  \city{Minneapolis,  MN 55455}
  \country{USA}}
\email{wang8740@umn.edu}

\author{Vahid Tarokh}
\affiliation{%
  \institution{Duke University}
  \city{Durham, NC 27705}
  \country{USA}}
\email{vahid.tarokh@duke.edu}

\author{Jie Ding}
\affiliation{%
  \institution{University of Minnesota-Twin Cities}
  \city{Minneapolis,  MN 55455}
  \country{USA}}
\email{dingj@umn.edu}

\author{Ali Anwar}
\affiliation{%
  \institution{University of Minnesota-Twin Cities}
  \city{Minneapolis,  MN 55455}
  \country{USA}}
\email{aanwar@umn.edu}

\title{DynamicFL: Federated Learning with Dynamic Communication Resource Allocation}
\setcopyright{none}

\begin{abstract}

Federated Learning (FL) is a collaborative machine learning framework that allows multiple users to train models utilizing their local data in a distributed manner. However, considerable statistical heterogeneity in local data across devices often leads to suboptimal model performance compared with independently and identically distributed (IID) data scenarios. In this paper, we introduce \textit{DynamicFL}, a new FL framework that investigates the trade-offs between global model performance and communication costs for two widely adopted FL methods: Federated Stochastic Gradient Descent (FedSGD) and Federated Averaging (FedAvg). Our approach allocates diverse communication resources to clients based on their data statistical heterogeneity, considering communication resource constraints, and attains substantial performance enhancements compared to uniform communication resource allocation. Notably, our method bridges the gap between FedSGD and FedAvg, providing a flexible framework leveraging communication heterogeneity to address statistical heterogeneity in FL. 
Through extensive experiments, we demonstrate that DynamicFL surpasses current state-of-the-art methods with up to a 10\% increase in model accuracy, demonstrating its adaptability and effectiveness in tackling data statistical heterogeneity challenges. Our code is available \href{https://github.com/Qi-Le1/DynamicFL-Federated-Learning-with-Dynamic-Communication-Resource-Allocation}{here}.
\end{abstract}

\setcopyright{none}
\maketitle
\setcopyright{none}
\section{Introduction}

\paragraph{\textbf{Motivation}}
Federated Learning (FL)~\cite{konevcny2016federated, mcmahan2017communication} is a collaborative distributed machine learning approach that facilitates model training on data distributed across multiple devices or locations~\cite{kaissis2020secure, bonawitz2019towards}. However, data on each device often exhibit high levels of statistical heterogeneity~\cite{mao2017survey, kairouz2021advances}, failing to represent the overall population accurately. This statistical heterogeneity in FL can lead to a federated model that poorly generalizes to the overall population. 

Federated Averaging (FedAvg), the widely-known method, performs multiple gradient updates before averaging, which can result in client drift issues~\cite{karimireddy2020scaffold, hsu2019measuring, khaled2020tighter}. Updating the local model using only local data may result in solutions optimized for local data, but misaligned with global optimal solutions in statistically heterogeneous FL scenarios~\cite{li2019convergence, malinovskiy2020local}. This misalignment can cause a considerable decline in overall model performance. Alternatively, Federated Stochastic Gradient Descent (FedSGD)\cite{mcmahan2017communication, li2019fair} addresses performance degradation in statistically heterogeneous situations by aggregating a single gradient update from selected clients per aggregation. Nonetheless, FedSGD demands a substantially higher number of client-server communications for convergence compared to FedAvg, trading off global model performance for increased communication costs\cite{mcmahan2017communication}. 
\begin{figure*}[htbp]
\centering
{\includegraphics[width=0.75\linewidth]{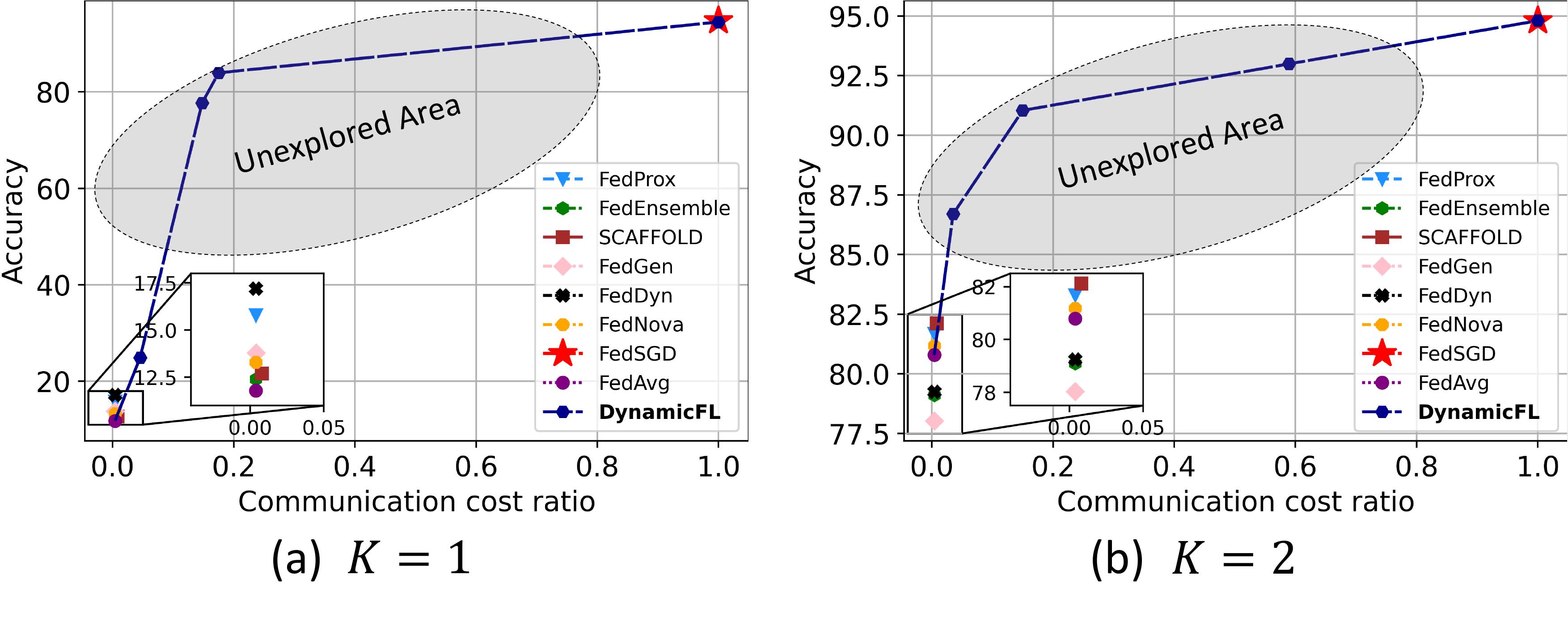}}
\vspace{-0.5cm}
\caption{Experiment with ResNet-18 on CIFAR-10 in a statistically heterogeneous scenario: (a) each client has one class label data, (b) each client has two classes label data. A significant, unexplored performance-communication cost gap exists between FedSGD and FedAvg.}
\label{fig:motivation}
\vspace{-0.5cm}
\end{figure*}

\paragraph{\textbf{Limitation of state-of-art approaches}}
Numerous advanced FedAvg-based methods, such as SCAFFOLD~\cite{karimireddy2020scaffold}, FedGen~\cite{zhuzhuangdi2021data}, and FedDyn~\cite{acar2021federated}, have been proposed to address the statistical heterogeneity in FL. While these methods claim to deliver high performance in statistically heterogeneous settings, their performance often remains relatively close to that of FedAvg. In some instances, they exhibit only a marginal improvement over FedAvg, and in other cases, their performance might be even worse~\cite{li2022federated}. This could be attributed to various factors, such as dependency on an accurate global model, the need for fine-tuning different hyper-parameters for specific scenarios, and computation constraints~\cite{diao2020heterofl,diao2023pruning}. As shown in Figure~\ref{fig:motivation}, despite these advancements, a significant performance gap between FedSGD and FedAvg global models still exists in statistically heterogeneous settings. Furthermore, the per-round communication costs of these methods are generally fixed by their respective algorithms. However, methods such as FedGen and SCAFFOLD require additional resources for generative models and gradient transmission, respectively.

\paragraph{\textbf{Key insights and contributions}}
The significant disparities in performance and communication costs between FedSGD and FedAvg motivate us to examine the trade-offs between global model performance and communication costs in statistically heterogeneous scenarios. In practical settings, communication resources for clients and servers can differ substantially, resulting in \textit{Communication Heterogeneity} of FL~\cite{li2020federated}. We define \textit{Communication Heterogeneity} as the variability in communication resources across different devices and networks. In real-world applications, it is natural for some clients to have a larger communication budget but also exhibit extreme statistical heterogeneity. We define the communication budget as a equal weighted average of the user's perceived communication budget and the current bandwidth. We leave the weights as tuneable parameters users can decide the weights based on whether they want the communication budget to reflect available network resources or their own budget priority. Effectively leveraging the diverse communication resources of clients and servers to address statistical heterogeneity is essential for successfully deploying FL in real-world situations~\cite{bonawitz2019towards, kairouz2021advances, ding2022federated}. We primarily focus on two scenarios~\cite{lim2020federated, dinh2020federated, brik2020federated, wang2022uav, ye2020federated, xu2021bandwidth}: client communication heterogeneity and server communication heterogeneity. The first scenario concerns disparate communication resources among clients; for instance, smartphones and tablets generally have substantial communication resources owing to high-speed internet access and advanced features, whereas wearable devices might be more constrained. The second scenario, server communication heterogeneity, arises when the server, such as an edge server or data relay station (e.g., Unmanned Aerial Vehicles functioning as mobile base stations), can only allocate minimal resources to FL due to bandwidth and network reliability constraints~\cite{rodrigues2010survey,rahman2010survey, wireless,diao2019restricted,diao2020drasic} or challenges in handling dynamically changing data volumes~\cite{han2017slants,huang2017distributed,xiang2019estimation}.
\vspace{-0.3cm}
\paragraph{\textbf{Experimental methodology and artifact availability}}
In this work, we present a novel FL framework, \textit{DynamicFL}, which bridges the gap between FedSGD and FedAvg by efficiently and adaptively utilizing communication resources according to clients’ data statistical heterogeneity and resource availability, ultimately improving performance in statistically heterogeneous scenarios. This flexibility allows for a more robust method that accommodates diverse communication resources and adapts to the dynamic nature of federated environments. However, several questions arise, such as: \textit{How can we determine the trade-off relationship between FedSGD and FedAvg, and could this relationship involve a non-linear association?} Furthermore, \textit{how can we develop strategies to utilize varying communication resources across devices effectively?} Our work addresses these challenges, summarized as follows:
\begin{itemize}
\item We propose \textit{DynamicFL}, a novel FL framework that flexibly bridges FedSGD and FedAvg, adaptively accommodating varying client-server communication capabilities and data statistical heterogeneity. This framework significantly enhances global model performance by strategically allocating diverse communication resources to clients based on their data statistical heterogeneity ranking, subject to their respective communication resource budgets.

\item We introduce an approach called \textit{Dynamic Communication Optimizer} for efficiently ranking data statistical heterogeneity and utilizing varying communication resources. DynamicFL boosts client collaboration with diverse communication resources, fostering a more resilient FL system.

\item Through comprehensive experiments, we demonstrate the effectiveness of the DynamicFL framework in various statistically heterogeneous scenarios. 
DynamicFL surpasses current state-of-the-art methods with up to a 10\% increase in accuracy, demonstrating its adaptability and effectiveness in tackling data statistical heterogeneity challenges.
Additionally, the experiments highlight the non-linear trade-off relationship between FedSGD and FedAvg, emphasizing DynamicFL's potential to enhance performance across diverse federated environments.
\end{itemize}

\subsubsection*{\textbf{ Limitations}}
In this paper, we specifically study the method under a fully synchronized scenario, allowing us to \textit{address statistical heterogeneity by utilizing communication heterogeneity} and explore the trade-offs between global model performance and communication costs when statistical heterogeneity is large. However, imbalances in synchronization can lead to certain challenges such as delayed model updates and skewed data representation in the learning process. We plan to tackle these challenges in the future work.

\section{Related works}
Federated Learning (FL) is a distributed machine learning paradigm enabling large-scale, collaborative model training across multiple clients. The primary challenges in FL include statistical heterogeneity and system heterogeneity~\cite{li2020federated,zhuhangyu2021federated, ding2022federated}. Strategies have been proposed to enhance global model performance under these scenarios. To address statistical heterogeneity, regularization of the local model is often employed. In contrast, Asynchronous FL addresses the straggler problem inherent to system heterogeneity by permitting more frequent aggregation from a subset of clients, which, in turn, positively influences global model performance. Other strategies include local fine-tuning~\cite{yu2021adaptive, chen2022self,le2022personalized}, personalized model layers~\cite{arivazhagan2019federated, ma2022layer,collins2022perfedsi}, neural architecture search~\cite{Fednas,mushtaq2021spider}, multi-task learning~\cite{smith2017federated, corinzia2019variational}, client clustering~\cite{chai2020tifl,ye2022meta}, and among others.

\paragraph{\textbf{Regularized FL}}Local model regularization is performed in both \textit{parameter space} and \textit{representation space}. In \textit{parameter space}, the global model serves to constrain the local model. FedProx~\cite{li2020federated} incorporates a proximal term based on the server model in local training to constrain the local model. SCAFFOLD~\cite{karimireddy2020scaffold} adopts a variance reduction approach to rectify locally drifted updates, and FedDyn~\cite{acar2021federated} incorporates linear and quadratic penalty terms to dynamically adjust local objectives, ensuring alignment between local and global objectives. In \textit{representation space}, Data-free knowledge distillation has emerged as a promising solution. FedGen~\cite{zhuzhuangdi2021data} develops a lightweight server-side generator leveraging clients' mapping from output to latent space, using this generator to refine local training in the hidden \textit{representation space}. FedFTG~\cite{zhang2022fine} leverages generative models to explore the input \textit{representation spaces} of local models, utilizing synthesized data to fine-tune the global model.

\paragraph{\textbf{Compression techniques}}Some works have also explored compression-based techniques such as Quantization~\cite{reisizadeh2020fedpaq} and Pruning~\cite{diao2023pruning} in FL to reduce the communication overhead. These studies are orthogonal to our work and can be incorporated into DynamicFL, however, these model compression techniques have an adverse effect on the training accuracy of the models~\cite{FLOAT} while adaptive communication frequency in DynamicFL does not directly reduce accuracy performance.

\begin{figure*}[htbp]
\centering
{\includegraphics[width=0.97\linewidth]{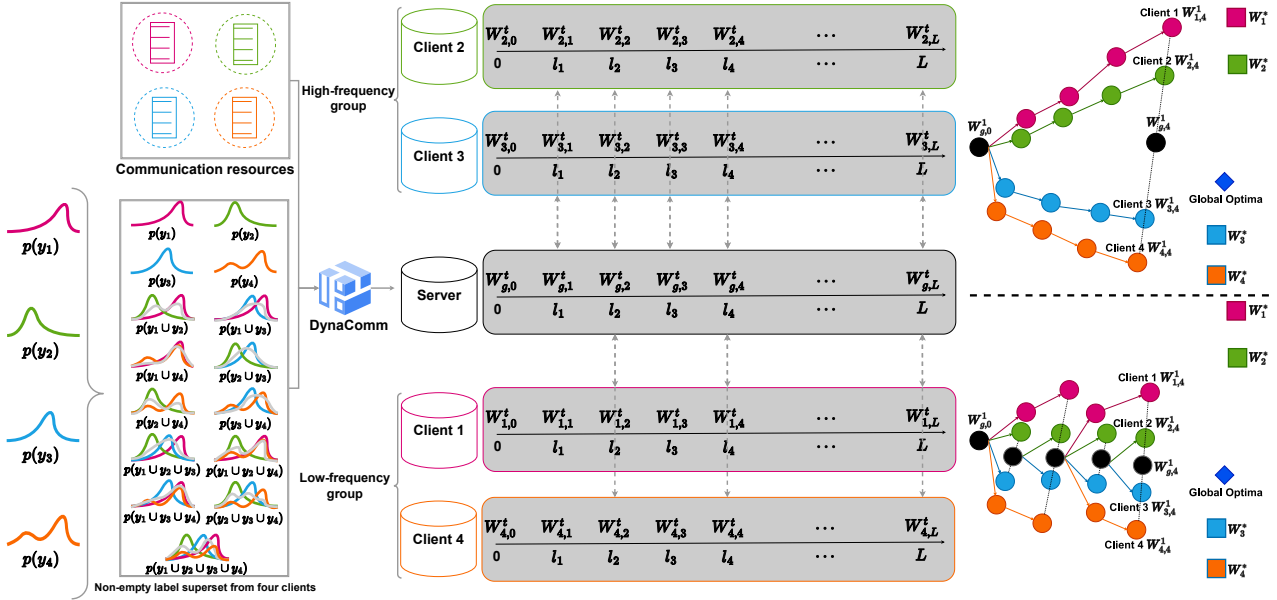}}
\vspace{-0.3cm}
\caption{Overall Design: DynamicFL's operation over one global communication round. DynaComm identifies clients $2$ and $3$ for the high-frequency group based on their statistical heterogeneity and communication resource budgets, emulating FedSGD to guide optimizations towards global optima. The figure's right side shows the gradient update correction process for all four clients over four local steps, where $W_{1, 0}^t$ signifies client 1's model at round $t$ after $0$ local updates.}
\label{fig:schema}
\end{figure*}

\paragraph{\textbf{Asynchronous FL}} Asynchronous FL approaches have been developed to tackle straggler issues in FL. For instance, FedAsync~\cite{xie2019asynchronous} enables asynchronous client updates, while FedAT~\cite{chai2021fedat} utilizes asynchronous tiers to boost convergence speed and accuracy. Further, HFL~\cite{li2021stragglers} attempts to accommodate delayed gradient updates. FedBuff~\cite{nguyen2022federated} employs a secure buffer for local updates, updating the global model once the buffer is full. These approaches generally result in fast clients contributing more frequently to global updates than slower devices.

However, existing advanced Regularized FL techniques aim to solve statistical heterogeneity by using uniform communication resource allocation, disregarding the potential communication heterogeneity. Conversely, Asynchronous FL methods, while attempting to utilize communication heterogeneity, mainly focus on the straggler issue and neglect statistical heterogeneity. To our best knowledge, no existing work has extensively investigated the relationship between handling statistical heterogeneity by leveraging naturally existing communication heterogeneity. Our work steps in to fill this gap. We significantly improve global model performance by bridging the gap between FedSGD and FedAvg, inspired by the Asynchronous FL's strategy of allowing more frequent server-client aggregation for certain clients. We propose the first method that effectively leverages communication heterogeneity to address statistical heterogeneity, thereby optimizing global model performance and boosting real-world applicability.

\section{Design}
\subsection{Problem setting}

\paragraph{Local data statistical heterogeneity.} We first introduce the problem setting and notation. A single batch is a mini-batch with a batch size of $B$, and one gradient update refers to a mini-gradient update. We define a set of clients $\M$ consisting of $M$ total clients, with each local model represented by a model parameter $W_{m}$ for $m\in \M$. The global dataset $\D$, consisting of feature component $\X$ and label component $\Y$, is distributed across the clients in $\M$. For a generic client $m$, its local data is represented by $\D_m$, which consists of $\left| \D_{m} \right|$ training instances with feature vectors $x_m$ and corresponding labels $y_{m}$. Here, we let $| \cdot |$ indicate the cardinality of a set. Therefore, $\D = \bigcup_{m \in \M} \D_{m}$ and $\Y = \bigcup_{m \in \M} y_{m}$. At communication round $t$, a random subset of clients, $\M^t$, is selected from $\M$. We define $\S^{t}$ as the non-empty power set of $\M^t$ at communication round $t$, having a cardinality of $2^{\left| \M^t \right| - 1}$. For each element $z$ in $S^{t}$, we refer to the joint labels $\bar{y}_{z} \de \bigcup_{m \in z} y_{m}$ as the combined set of labels from all clients in $z$. We use $p(y_m)$ to represent the probability distribution of labels in client $m$, and the probability distribution $p(\bar{y}_{z})$ of joint labels $\bar{y}_{z}$ is defined as:
\begin{equation}
p(\bar{y}_{z}) \de \sum_{m \in z}\frac{\left| y_{m} \right|}{\left| \bar{y}_{z} \right|}p(y_{m}) .
\label{prob_distribution_of_subset}
\end{equation}
\vspace{-0.3cm}

Although label sharing has been explored in various works in FL~\cite{luo2021fedsld}, if the clients prefer to keep labels private DynamicFL also includes other mechanisms to discern the probability distribution via similarity-based techniques such as cosine similarity used for clustering clients based on privacy-aware inferred data information~\cite{auxo}.
\paragraph{\textbf{Communication resource}} Next, we simplify the communication resource settings for a single round of analysis. Let $L$ be the number of gradient updates in a round, and $\nu_{\text{high}}$ and $\nu_{\text{low}}$ represent the high and low client-server aggregation frequencies, respectively. The communication intervals, $I_{\text{high}} = \lceil \frac{L}{\nu_{\text{high}}}\rceil$ and $I_{\text{low}} = \lceil \frac{L}{\nu_{\text{low}}}\rceil$, determine the number of local gradient updates executed by a client between server aggregations. Shorter intervals indicate more frequent communication with the server. Each of the $M$ clients is allocated a communication resource budget $[\tau_{1}, \tau_{2}, \cdots, \tau_{M}]$ per round, for each round, while the server receives a budget of $\tau_{g}$. This budget, $\tau$, represents the maximum amount of data that can be transferred in one round and as aforementioned is a weighted average of user's budget priority and network bandwidth to reflect both user's choice and available network resources. In round $t$, DynamicFL assigns a communication frequency $\nu_{m}^{t}$ to each client $m \in \M^{t}$. The communication cost $\kappa_{m}^{t}$ is then computed as twice the product of the model size (number of parameters, $|W_{m}|$) and the assigned communication frequency $\nu_{m}^{t}$, with the factor of two accounting for both upload and download operations. The server's communication cost $\kappa_{g}^{t}$ is the cumulative cost of all client communication costs in $\M^{t}$:
\vspace{-0.1cm}
\begin{equation}
\kappa_{g}^{t} \de \sum_{m \in \M^{t}} \kappa_{m}^{t} = \sum_{m \in \M^{t}} 2 \times |W_{m}| \times \nu_{m}^{t}.
\label{server_cost_of_communication_resources}
\end{equation}

\subsection{Dynamic Federated Learning}
\paragraph{\textbf{Overall design}} Dynamic Federated Learning (DynamicFL) improves global model performance by strategically allocating communication resources to clients, taking into account data statistical heterogeneity and communication resource budgets. In each communication round $t$, we partition the client set $\M^{t}$ into two groups: a high-frequency communication group with frequency $\nu_{\text{high}}$ and a low-frequency communication group with frequency $\nu_{\text{low}}$, as shown in Figure~\ref{fig:schema}. In the high-frequency group, our objective is to emulate the behavior of FedSGD. To this end, we choose a client subset $z$ from $\M^{t}$ that closely resembles the global label distribution $\Y$ while adhering to their communication resource constraints. The remaining active clients $\mathcal{M}^t \setminus z$ are assigned to the low-frequency group. Both groups perform intra-group averaging, and their resulting models are combined through inter-group averaging. This inter-group averaging occurs when the current number of local gradient updates is a multiple of communication intervals $I_{\text{high}}$ and $I_{\text{low}}$. Through this approach, as illustrated in Figure~\ref{fig:schema}, we facilitate the high-frequency group to function similarly to FedSGD, thereby correcting the direction and magnitude of divergent gradients, while the low-frequency group, mirroring FedAvg, aids in accelerating dataset iteration. As a result, DynamicFL enhances the convergence and accuracy of the FL process by effectively managing communication resources based on data statistical heterogeneity and the communication budgets of clients and the server. Theoretical analysis is provided in Appendix~\ref{sec:theory}.
\begin{remark}[Two distinct groups]
\textit{A high-frequency group, aligning with the FedSGD for gradient correction, and a low-frequency group, patterned after FedAvg for efficient dataset iteration.}
\end{remark}

\paragraph{\textbf{Bridging the gap between FedSGD and FedAvg}} We aim to develop an algorithm that bridges the gap between FedSGD and FedAvg. To ensure fairness, we establish a fixed total number of gradient updates, $L_{T}$, for both methods, as depicted in Equation~\ref{total_gradient_updates}:
\vspace{-0.1cm}
\begin{align}
L_{T} = \sum_{t=1}^{T}\sum_{m \in \M^{t}}L_{m}^{t} .
\label{total_gradient_updates}
\end{align}
Here, $L_{m}^{t}$ signifies the number of local gradient updates for client $m$ at communication round $t$, and $T$ denotes the total global communication rounds. For both FedSGD and FedAvg,  the batch size $B$ and the fraction $C$ of active clients per communication round are the same. For FedSGD, only one batch of local data is trained by the model at each communication round, and thus $L_{m}^{t}$ is invariably one. Conversely, for FedAvg, $L_{m}^{t}$ is calculated as $\frac{|\mathcal{D}_{m}| \cdot E}{B}$, with $\left|\mathcal{D}_{m}\right|$ being the size of client $m$'s data, and $E$ representing the local epoch. Generally, $\frac{|\D_{m}| \cdot E}{B} \gg 1$. Hence, with $L_{T}$ fixed, FedSGD requires a greater number of global communication rounds $T$ compared to FedAvg, complicating direct comparative analysis. To overcome this, our DynamicFL method fixes $L_{T}$ and $T$ and adjusts the number of local gradient updates based on the strategy proposed by Wang et al.\cite{wang2020tackling}. We ensure that the number of gradient updates is consistent across all clients and communication rounds, i.e., $L = L_{1} = L_{2} = \cdots = L_{M}$, represented in Figure~\ref{fig:schema}. This strategy mitigates the objective inconsistency due to varying numbers of local gradient updates across clients\cite{wang2020tackling}. Consequently, $L$ can be expressed as:
\begin{align}
L = \frac{\sum_{m=1}^{M}L_{m}}{M} = \frac{\sum_{m=1}^{M}|\D_{m}| \cdot E}{M \cdot B}.
\label{local_gradient_update_convert}
\end{align}
Under this configuration, by fixing $T$, we achieve equal total gradient updates for both FedSGD and FedAvg, effectively bridging the distinction between them. We refer to this modified version of FedSGD, which performs $L$ local gradient updates for client $m$ instead of one, as DynamicSGD. While DynamicSGD subtly diverges from traditional FedSGD, it maintains comparable performance, as shown in Figure~\ref{fig:motivation}. Furthermore, we define DynamicAvg as consisting of a group of clients mirror FedSGD's behavior, while the remaining clients adopt a behavior similar to FedAvg. It is important to note that Equation~\ref{local_gradient_update_convert} is used primarily in our experiments to ensure DynamicFL and competing methods based on FedAvg share the same total gradient updates. In practical applications, the number of gradient updates can be flexibly adjusted, providing a more adaptable approach compared to using local epochs as the iteration criteria.
\begin{remark}[Bridging DynamicSGD and FedAvg]
\textit{DynamicSGD and DynamicAvg are special cases of DynacmiFL.  When the communication interval equals one, DynamicFL becomes DynamicSGD. Conversely, when the interval equals $L$, it behaves like FedAvg. Additionally, when communication heterogeneity is present, DynamicFL evolves into DynamicAvg.}
\end{remark}

\paragraph{\textbf{Workflow}} We outline the execution flow of the DynamicFL system as depicted in Algorithm~\ref{alg:dynamicfl}. At each communication round $t$, a client subset $\mathcal{M}^{t}$ is selected from the total client set $\M$ based on a fraction $C$ (line 4). From $\M^{t}$, a further high-frequency client subset $z$ is selected using the DynaComm (lines 22-37). For every client $m$ in $\mathcal{M}^t$, a communication frequency $\nu_{m}^{t}$ and a communication interval $I_{m}$ are assigned, based on whether the client $m$ is part of $z$ (lines 7-8). The server model is subsequently distributed to these clients (line 9). Concurrently,  each client $m$ performs local gradient updates (lines 14-15). When the local gradient updates $l$ reach a multiple of the client's communication interval $I_{m}$, the client $m$ sends updated model parameters $W_{m, l}^t$ to the server, with its index added to $\U_{l}^{t}$(line 16). The server then updates the global model parameters $W_{g, l}^t$ by performing a weighted average of received models and distributes it to all clients listed in $\U_{l}^{t}$ (line 18). Upon completing $L$ gradient updates, all active clients transmit their models to the server. The server aggregates these models using a weighted method (line 20). This process is repeated across $T$ communication rounds.

\begin{algorithm}[htp]
\begin{footnotesize}
\SetAlgoLined
\DontPrintSemicolon
\Input{Data $\D$ and labels $\Y$ across $M$ clients, fraction $C$, minibatch size $B$, communication frequencies $\nu_{\text{high}}$ and $\nu_{\text{low}}$, learning rate $\eta$, total global communication rounds $T$, gradient updates $L$, global model $W_g$, client communication budgets $[\tau_{1}, \tau_{2}, \cdots, \tau_{M}]$, server communication budget $\tau_{g}$, DynaComm ensemble times $\eta_{\text{ens}}$.}

\kwSystem{}{

Initialize $W_g^0$\;\For{\textup{each communication round }$t = 1, \dots, T$}{
$\M^{t} \leftarrow \max(C \cdot M, 1)$ clients sampled from $\M$\;

Select high-frequency client subset $z$ using \textbf{DynaComm}($\M^{t}$, $[\tau_{1}, \cdots, \tau_{M}]$, $\tau_{g}$)

\For{\textup{each client }$m \in \M^{t}$\textup{ \textbf{in parallel}}}{
Communication frequency $\nu_{m}^{t} \leftarrow \nu_{\text{high}}$ if $m \in z$ else $\nu_{\text{low}}$

Communication interval $I_{m}^{t} \leftarrow \lceil \frac{L}{\nu_{m}^{t}} \rceil$

$W_{m, 0}^{t} \leftarrow W_{g}^{t-1}$
}
\For{\textup{each gradient update }$l = 1, \dots, L$}{
Server initializes client indices set  $\U_{l}^{t} \leftarrow  \varnothing$

\For{\textup{each client }$m \in \M^{t}$\textup{ \textbf{in parallel}}}{
\textup{Randomly select batch $b_{m} \in \D_{m}$}

$W_{m, l}^t \leftarrow W_{m, l-1}^t\eta \nabla \ell(W_{m, l-1}^t; b_m)$

\textbf{if} {$l \bmod I_{m}^{t} = 0$ \textup{or} $l = L$} \textbf{then} $\U_{l}^{t} \leftarrow \U_{l}^{t} \cup \{ m\}$ 

}
$W_{g, l}^{t} \leftarrow \sum_{m \in \U_{l}^{t}}\frac{\left| \D_{m}\right|}{\sum_{m \in \U_{l}^{t}}\left| \D_{m}\right|} W_{m, l}^{t}$, \quad
$W_{m, l}^{t} \leftarrow W_{g, l}^{t}$ for each client $m \in \U_{l}^{t}$\;
}
$W_{g}^{t} \leftarrow W_{g, L}^{t}$\;
}
}
\kwDynaComm{$(\M^{t}$, $[\tau_{1}, \tau_{2}, \cdots, \tau_{M}]$, $\tau_{g})$}
{
Minimum KL-divergence $KL_{\text{min}}$ $\leftarrow \infty$;~ High-frequency client subset $z$ $\leftarrow$ An empty set $\varnothing$

\For{$\eta \leftarrow 1~\textup{to}~\eta_{\textup{ens}}$}{
Shuffle $\M^{t}$; 

Initialize $(|\mathcal{M}^{t}|+1) \times (|\mathcal{M}^{t}|+1)$ matrix $O$, where each element $O[i,j]$ has two attributes: $O[i,j]$.KL with infinite KL divergence, and $O[i,j]$.clients with empty client subsets.

\For{$(i, j) \leftarrow (1, 1)$ \textup{to} $(O$\textup{.rows}, $O$\textup{.columns}$)$}{
    \textbf{if} $i < j$ \textbf{then continue;} 
    
    $O[i, j]$.KL $\leftarrow$ $O[i-1, j]$.KL;~ $O[i, j]$.clients $\leftarrow$ $O[i-1, j]$.clients

    Initialize a high-frequency client subset $z_{i, j} \leftarrow$ $O[i-1, j-1]$.clients $\cup$ $\{\M^{t}[i-1]\}$
    
    Calculate KL-divergence $d_{\textup{KL}}$ for client labels in $z$
    
    Compute server communication cost $\kappa_{g}^{t}$ (See Eq.~\ref{server_cost_of_communication_resources})
    
    \textbf{if} $\kappa_{\M^{t}[i-1]}^{t} > \tau_{\M^{t}[i-1]}$ \text{or} $\kappa_{g}^{t} > \tau_{g}$ \textbf{then continue}
    
    \textbf{if} $d_{\textup{KL}} < O[i, j]$\textup{.KL} \textup{and} $|c| = j$ \textbf{then} $O[i, j]$.KL $\leftarrow d_{\textup{KL}}$;~$O[i, j]$.clients $\leftarrow z_{i, j} $
    
    \textbf{if} $d_{\textup{KL}} < KL_{\text{min}}$ \textbf{then} $KL_{\text{min}} \leftarrow d_{\textup{KL}}$;~ $z \leftarrow z_{i, j} $\;
}
}
Return $z$ to server

\vspace{3pt}
}
\vspace{+0.1cm}
\caption{DynamicFL: Dynamic Federated Learning}
\label{alg:dynamicfl}
\end{footnotesize}
\end{algorithm}

\paragraph{\textbf{Dynamic Communication Optimizer (DynaComm)}} In each communication round $t$, DynamicFL optimally allocates communication resources to clients based on their data statistical heterogeneity, considering communication resource constraints. The first step in DynamicFL involves identifying the subset of clients from $\M^t$, denoted as $z \in \S^t$, whose sampling distribution $p(\bar{y})$ closely approximates the population distribution $p(\Y)$, thereby imitating FedSGD's behavior. It is worth noting that we are using only the sampling label distribution (i.e., the count of data points per class, easily retrievable in practice.) but not the population label distribution of each client. Addressing statistical heterogeneity often involves leveraging data distribution properties, such as FedGen~\cite{zhuzhuangdi2021data}, which utilizes clients’ label distribution, and FedFTG~\cite{zhang2022fine}, focusing on client input distribution. 

It is crucial that all clients within this subset possess sufficient communication resources to engage in high-frequency communication. To achieve this selection, we minimize the Kullback-Leibler (KL) divergence~\cite{kullback1951information, hinton2015distilling}, denoted as $D_{\text{KL}}$, between the two distributions, taking into consideration the communication resource constraints, as below:
\begin{align}
&\min D_{\text{KL}} \left[p(\bar{y_{z}})\left|\right|p(\Y)\right] \label{optimization_problem} \\
&\textrm{subject to:} \quad z \in \S^t, \nonumber \\
&\kappa_{m}^{t} \leq \tau_{m} \quad \forall m \in \M^{t}, \nonumber \\
&\kappa_{g}^{t} = \sum_{m \in \M^{t}} \kappa_{m}^{t} \leq \tau_g . \nonumber
\end{align}
A smaller KL-divergence indicates a higher similarity between the distributions. Different communication resource budgets are used to balance the trade-off between model performance and communication resources. Specifically, a larger communication resource budget emphasizes model performance, while a smaller budget aims to minimize communication resource consumption.

Addressing the optimization problem (Eq.\ref{optimization_problem}) poses a mixed-integer nonlinear programming challenge, particularly when considering communication resource budgets. To this end, we propose an ensemble dynamic programming-based method, DynaComm, shown in Algorithm~\ref{alg:dynamicfl} (lines 22-37). The algo computes the new KL-divergence value for the current client subset, and checks if adding the current client would exceed the communication resource budgets (lines 28-32). If the new client subset improves the KL-divergence and meets the communication constraints, DynaComm updates the current cell in the matrix (line 33). Throughout this process, DynaComm tracks the client subset with the smallest KL-divergence (lines 34) and returns the client subset to the server (line 37).

\begin{table}[htb]
\centering
\caption{DynaComm consistently outperforms random client selection in terms of accuracy, 
starting from a minimal selection of 0.1 portions of clients (akin to FedAvg) and extending to 
include all active clients (DynamicSGD). Two random experiments with standard error are conducted under the specified parameters: communication interval 'a-g', CIFAR10, and CNN model.}
\label{tab:dynacomm_random_ablation_l1l2}
\begin{small}
\addtolength{\tabcolsep}{-0.4em}
\begin{tabular}{@{}cccccc@{}}
\toprule
\multirow{2}{*}{Method}     & \multirow{2}{*}{Freq.} & \multicolumn{2}{c}{$K=1$}                   & \multicolumn{2}{c}{$K=2$}                   \\ \cmidrule(l){3-6} 
                            & group                                           & Random               & DynaComm             & Random               & DynaComm             \\ \midrule
DynSGD                  & 1.0                                         & \multicolumn{2}{c}{73.9 (1.0)}              & \multicolumn{2}{c}{77.2 (0.3)}              \\ \midrule
\multirow{9}{*}{DynAvg} & 0.9                                          & 69.6 (0.2)           & \textbf{72.1 (0.5)}  & 76.8 (0.5)           & \textbf{76.9 (0.3)}  \\
                            & 0.8                                          & 67.5 (0.1)           & 67.0 (0.1)           & 76.0 (0.2)           & 75.6 (0.2)           \\
                            & 0.7                                          & 64.7 (1.7)           & \textbf{68.0 (1.4)}  & 74.5 (0.4)           & \textbf{75.6 (0.2)}  \\
                            & 0.6                                          & 65.9 (1.5)           & 65.6 (0.7)           & 74.3 (0.3)           & \textbf{75.2 (0.3)}  \\
                            & 0.5                                          & 64.9 (0.2)           & \textbf{67.4 (0.8)}  & 73.7 (0.1)           & \textbf{74.2 (0.6)}  \\
                            & 0.4                                          & 63.7 (1.4)           & \textbf{64.9 (0.4)}  & 71.6 (0.8)           & \textbf{72.0 (0.1)}  \\
                            & 0.3                                          & 61.4 (0.7)           & \textbf{61.6 (0.0)}  & 69.2 (0.1)           & \textbf{70.8 (0.2)}  \\
                            & 0.2                                          & 44.4 (0.3)           & \textbf{49.8 (0.1)}  & 68.6 (0.0)           & 68.1 (0.4)           \\
                            & 0.1                                          & 20.45 (0.1) & 20.2 (0.7) & 65.5 (0.4) & 65.6 (0.5) \\ \midrule
FedAvg                      & 0.0                                          & \multicolumn{2}{c}{19.1 (1.4)}              & \multicolumn{2}{c}{66.3(0.0)}               \\ \bottomrule
\end{tabular}
\end{small}
\end{table}
		
Our DynaComm method provides a better trade-off between global model performance and communication cost than the random client selection does. In particular, our method optimizes the selection of the high-frequency group by making the joint label distribution of the high-frequency group close to the global label distribution. Because the joint label distribution of the high-frequency group is close to the global label distribution, it can rectify the gradient divergence~\cite{hsu2019measuring} caused by the low-frequency group. For instance, a client may only contain certain classes of labels, but the joint label distribution of multiple clients could result in a uniform distribution (balanced classification task). Our method aggregates the selected clients with high frequency to approximate FedSGD which does not have any gradient divergence issue. On the other hand, the low-frequency group, mirroring FedAvg’s behavior for efficient dataset iteration can indeed enhance model performance.

We conducted an ablation study to further validate DynaComm's performance against random client selection, focusing on different subset sizes. We start with FedAvg, then 0.1 portions of active clients $|\M^{t}|$ into the high-frequency group and extend to DynamicSGD. DynaComm outperforms random client selection in accuracy in most cases, as shown in Table~\ref{tab:dynacomm_random_ablation_l1l2}. For instance, under $K=1$, a 5.4\% accuracy boost is observed when the high-frequency group is 0.2, and under $K=2$, a 1.6\% accuracy boost is noted when the high-frequency group is 0.3. Notably, with 0.1 high-frequency group portion, DynamicAvg reduces to FedAvg, and thus both methods yield similar results to FedAvg.

Additionally, we observe that KL-divergence plateaus or even increases after the subset size reaches a certain threshold, suggesting a non-linear relationship between the sampling and global label distributions. Factors such as label class, client numbers, and data statistical heterogeneity contribute to this phenomenon, indicating a nonlinear relationship between FedSGD and FedAvg. 
\begin{remark}[Efficient DynaComm]
\textit{DynaComm efficiently resolves the optimization problem (Eq.~\ref{optimization_problem}), optimizing the selection of the high-frequency group, and thereby achieving superior performance compared to random client selection.}
\end{remark}

\section{Experiments}
\subsection{Experimental setup}
\paragraph{\textbf{Baselines}} We benchmark DynamicFL against FedAvg~\cite{mcmahan2017communication}, FedProx~\cite{li2020federated}, SCAFFOLD~\cite{karimireddy2020scaffold}, FedDyn~\cite{acar2021federated}, FedGen~\cite{zhuzhuangdi2021data}, FedEnsemble~\cite{zhuzhuangdi2021data}, FedNova~\cite{wang2020tackling}, utilizing DynamicSGD as the upper bound for comparison with DynamicAvg. FedFTG~\cite{zhang2022fine} is excluded due to unavailable code. All methods undergo equal total gradient updates.
\paragraph{\textbf{Datasets and models}} We test DynamicFL on CIFAR10 and CIFAR100 datasets~\cite{netzer2011reading, li2022federated} using CNN and ResNet-18~\cite{he2016deep}, and FEMNIST dataset~\cite{caldas2018leaf} exclusively using CNN due to size constraints. Our experiments encompass 100 clients with 10\% active per round. We use layer normalization~\cite{ba2016layer} to address batch normalization issues~\cite{acar2021federated}. 

We examine both balanced and unbalanced data partitions for CIFAR10 and CIFAR100. For balanced partitions, clients hold equal data volumes from $K$ classes, where $K = \{1, 2\}$. but exhibit a skewed label distribution~\cite{diao2020heterofl, diao2022semifl}. In unbalanced partitions, both data size and label distribution are skewed, determined by a Dirichlet distribution (Dir($\alpha$)), where $\alpha = \{0.1, 0.3\}$~\cite{acar2021federated, yu2022spatl, diao2022semifl}. A smaller $\alpha$ indicates higher data statistical heterogeneity. For FEMNIST, balanced partitions are unfeasible due to significant class size differences. Hence, we conduct our experiments to unbalanced partitions with $\alpha = \{0.01, 0.1, 0.3\}$.
\paragraph{\textbf{Configurations}} To evaluate DynamicFL's performance, we define seven communication interval levels, $\{a, b, c, d, e, f, g\}$, each corresponding to 1, 4, 16, 32, 64, 128, 256 gradient updates between server aggregations during each communication round. Our experiments illustrate the dynamic adjustability of these intervals. For example, the configuration ’a-g’ denotes a specific combination of communication 
intervals, with high-frequency group clients following the ’a’ interval and low-frequency group clients 
adhering to the ’g’ interval. The notation ’a*’ represents training exclusively with high-frequency 
clients, while ’a’ encompasses all active clients. To address statistical heterogeneity, we consider two configurations: \textit{Fix}, leveraging client communication heterogeneity, such as disparate resources among devices like smartphones and tablets~\cite{bonawitz2019towards}, and \textit{Dynamic}, emphasizing server, like an edge server, strategically allocates limited FL resources according to bandwidth and network reliability~\cite{rodrigues2010survey,rahman2010survey, wireless,diao2019restricted,diao2020drasic}. We test three communication resource budget levels $\beta = \{0.3, 0.6, 0.9\}$ in both configurations. For example, in \textit{Fix-0.3}, 30\% of the total clients' budgets accommodate high-frequency communication, while the remaining 70\% support low-frequency communication. In the \textit{Dynamic} configuration, active clients are dynamically assigned communication resources in each round according to DynaComm. We standardize the communication cost, $\kappa_{g}^{t}$, relative to the cost of DynamicSGD for comparison.

\begin{table}[h!]
\caption{ResNet-18 accuracy results comparing DynamicSGD, DynamicAvg (our approach) with three communication interval combinations, and baselines across various configurations, with $K=2$, on CIFAR10 and CIFAR100 datasets.}
\centering
\begin{footnotesize}
\begin{tabular}{ccccccc}
\toprule
\multicolumn{3}{c}{\multirow{2}{*}{Method}}                      & \multicolumn{2}{c}{CIFAR10}   & \multicolumn{2}{c}{CIFAR100}  \\ \cline{4-7} 
\multicolumn{3}{c}{}                                             & \textit{Fix}           & \textit{Dynamic}       & \textit{Fix}            & \textit{Dynamic}      \\ \midrule
\multicolumn{3}{c}{DynamicSGD}                                   & \multicolumn{2}{c}{94.9(0.0)} & \multicolumn{2}{c}{71.1(0.1)} \\ \midrule
\multirow{9}{*}{DynAvg} & \multirow{3}{*}{a-g} & $\beta=0.3$ & 87.5(0.0)     & 89.5(0.0)     & 34.9(0.0)      & 36.1(0.0)    \\
                            &                      & $\beta=0.6$ & 91.2(0.0)     & 92.7(0.0)     & 53.3(0.0)      & 53.1(0.0)    \\
                            &                      & $\beta=0.9$ & 92.5(0.0)     & 93.0(0.0)     & 66.6(0.0)      & 63.2(0.0)    \\ \cmidrule{3-7}
                            & \multirow{3}{*}{b-g} & $\beta=0.3$ & 85.5(0.0)     & 86.5(0.2)     & 26.9(0.2)      & 26.0(0.5)    \\
                            &                      & $\beta=0.6$ & 88.9(0.0)     & 90.5(0.0)     & 42.2(0.2)      & 44.7(0.3)    \\
                            &                      & $\beta=0.9$ & 91.0(0.0)     & 91.0(0.0)     & 51.6(0.4)      & 50.1(0.5)    \\ \cmidrule{3-7}
                            & \multirow{3}{*}{c-g} & $\beta=0.3$ & 83.6(0.3)     & 84.2(0.0)     & 24.5(0.5)      & 24.2(0.0)    \\
                            &                      & $\beta=0.6$ & 85.7(0.0)     & 86.8(0.0)     & 31.5(1.5)      & 33.7(1.0)    \\
                            &                      & $\beta=0.9$ & 87.0(0.0)     & 86.7(0.5)     & 38.3(0.4)      & 38.8(1.5)    \\ \midrule
\multicolumn{3}{c}{FedProx}                                      & \multicolumn{2}{c}{81.7(0.8)} & \multicolumn{2}{c}{21.5(1.0)} \\
\multicolumn{3}{c}{FedEnsemble}                                  & \multicolumn{2}{c}{79.1(1.2)} & \multicolumn{2}{c}{18.2(0.1)} \\
\multicolumn{3}{c}{SCAFFOLD}                                     & \multicolumn{2}{c}{82.1(0.3)} & \multicolumn{2}{c}{15.7(3.7)} \\
\multicolumn{3}{c}{FedGen}                                       & \multicolumn{2}{c}{78.0(0.0)} & \multicolumn{2}{c}{19.6(0.4)} \\
\multicolumn{3}{c}{FedDyn}                                       & \multicolumn{2}{c}{79.3(0.0)} & \multicolumn{2}{c}{15.3(0.3)} \\
\multicolumn{3}{c}{FedNova}                                      & \multicolumn{2}{c}{81.2(0.1)} & \multicolumn{2}{c}{20.5(0.5)} \\
\multicolumn{3}{c}{FedAvg}                                       & \multicolumn{2}{c}{80.8(0.1)} & \multicolumn{2}{c}{20.4(0.4)} \\ \bottomrule
\end{tabular}
\label{tab:cifar10_cifar100_k2_resnet_main}
\end{footnotesize}
\end{table}

\begin{table*}[htb]
\caption{CNN accuracy results for three communication intervals across different configurations, with $K=2$, on CIFAR10 and CIFAR100 datasets, and $Dir(0.01)$ on FEMNIST dataset.}
\centering
\begin{small}
\begin{tabular}{@{}ccccccccc@{}}
\toprule
\multicolumn{3}{c}{\multirow{2}{*}{Method}}                      & \multicolumn{2}{c}{CIFAR10}                                       & \multicolumn{2}{c}{CIFAR100}                                      & \multicolumn{2}{c}{FEMNIST}                                       \\ \cmidrule(l){4-9} 
\multicolumn{3}{c}{}                                             & \textit{Fix} & \textit{Dynamic} & \textit{Fix} & \textit{Dynamic} & \textit{Fix} & \textit{Dynamic} \\ \midrule
\multicolumn{3}{c}{DynamicSGD}                                   & \multicolumn{2}{c}{77.2(0.3)}                                     & \multicolumn{2}{c}{27.0(0.5)}                                     & \multicolumn{2}{c}{75.4(1.8)}                                     \\ \midrule
\multirow{9}{*}{DynamicAvg} & \multirow{3}{*}{a-g} & $\beta=0.3$ & 70.1(0.2)                     & 70.9(0.0)                         & 25.5(0.0)                     & 24.9(0.1)                         & 60.3(1.7)                     & 64.4(0.3)                         \\
                            &                      & $\beta=0.6$ & 72.7(0.2)                     & 75.4(0.4)                         & 27.5(1.2)                     & 25.4(0.3)                         & 67.0(1.7)                     & 70.4(0.3)                         \\
                            &                      & $\beta=0.9$ & 74.6(0.1)                     & 75.7(0.0)                         & 28.9(0.2)                     & 29.4(0.1)                         & 70.3(0.0)                     & 73.3(0.2)                         \\ \cmidrule{3-9}
                            & \multirow{3}{*}{b-g} & $\beta=0.3$ & 66.9(1.1)                     & 67.4(0.0)                         & 21.8(0.2)                     & 21.7(0.2)                         & 60.6(2.5)                     & 63.3(0.2)                         \\
                            &                      & $\beta=0.6$ & 68.8(0.1)                     & 68.4(0.6)                         & 20.8(0.2)                     & 17.2(0.1)                         & 60.4(1.2)                     & 61.3(1.5)                         \\
                            &                      & $\beta=0.9$ & 68.3(0.1)                     & 68.6(1.0)                         & 16.5(1.1)                     & 14.0(0.1)                         & 59.6(1.3)                     & 59.9(2.5)                         \\ \cmidrule{3-9}
                            & \multirow{3}{*}{c-g} & $\beta=0.3$ & 66.9(1.6)                     & 66.2(0.8)                         & 20.6(0.2)                     & 19.7(0.1)                         & 60.9(0.5)                     & 60.7(0.5)                         \\
                            &                      & $\beta=0.6$ & 66.7(0.3)                     & 68.2(1.1)                         & 21.3(0.1)                     & 17.5(0.1)                         & 60.8(3.0)                     & 61.1(0.4)                         \\
                            &                      & $\beta=0.9$ & 66.3(1.1)                     & 68.2(0.7)                         & 15.9(0.3)                     & 14.3(0.2)                         & 57.3(1.3)                     & 59.2(1.3)                         \\ \midrule
\multicolumn{3}{c}{FedProx}                                      & \multicolumn{2}{c}{65.4(0.2)}                                     & \multicolumn{2}{c}{18.1(0.3)}                                     & \multicolumn{2}{c}{50.8(0.0)}                                     \\
\multicolumn{3}{c}{FedEnsemble}                                  & \multicolumn{2}{c}{66.0(0.9)}                                     & \multicolumn{2}{c}{18.0(0.3)}                                     & \multicolumn{2}{c}{31.1(0.0)}                                     \\
\multicolumn{3}{c}{SCAFFOLD}                                     & \multicolumn{2}{c}{66.2(0.0)}                                     & \multicolumn{2}{c}{23.8(0.1)}                                     & \multicolumn{2}{c}{N/A}                                      \\
\multicolumn{3}{c}{FedGen}                                       & \multicolumn{2}{c}{64.3(0.1)}                                     & \multicolumn{2}{c}{17.3(0.3)}                                     & \multicolumn{2}{c}{N/A}                                           \\
\multicolumn{3}{c}{FedDyn}                                       & \multicolumn{2}{c}{63.6(0.3)}                                     & \multicolumn{2}{c}{16.4(0.0)}                                     & \multicolumn{2}{c}{N/A}                                           \\
\multicolumn{3}{c}{FedNova}                                      & \multicolumn{2}{c}{66.5(0.6)}                                     & \multicolumn{2}{c}{17.7(0.6)}                                     & \multicolumn{2}{c}{45.5(0.0)}                                     \\
\multicolumn{3}{c}{FedAvg}                                       & \multicolumn{2}{c}{66.3(0.0)}                                     & \multicolumn{2}{c}{18.4(0.6)}                                     & \multicolumn{2}{c}{37.7(0.8)}                                     \\ \bottomrule
\end{tabular}
\end{small}
\label{tab:cifar10_cifar100_femnist_k2_d001_cnn_main}
\end{table*}

\begin{figure*}[h]
\centering
{\includegraphics[width=1\linewidth]{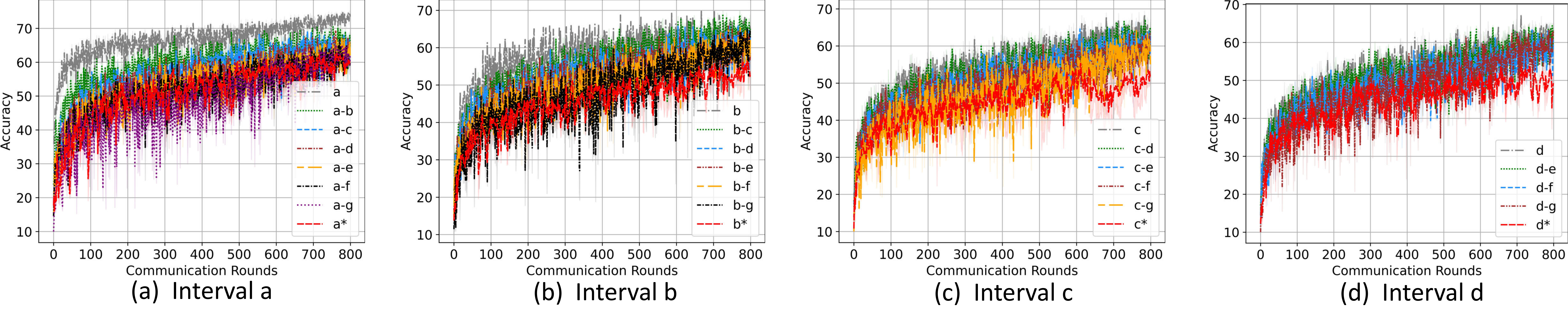}}
\vspace{-0.8cm}
\caption{Learning curves for all communication interval combinations in Table~\ref{tab:freq_ablation_cifar10_main}, $Dir(0.1)$ setting.}
\label{fig:learning curve}
\end{figure*}

\begin{figure*}[h]
\centering
\includegraphics[width=1\linewidth]{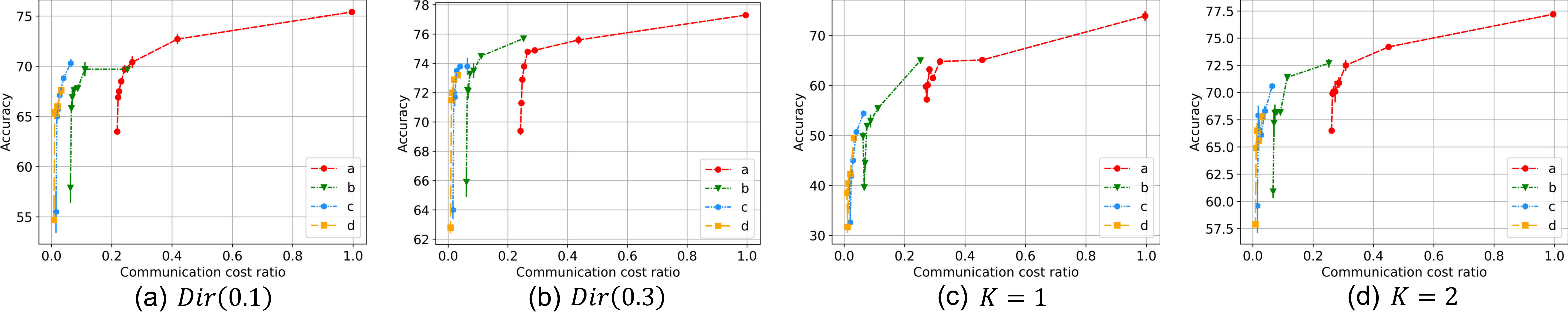}
\vspace{-0.5cm}
\caption{This figure illustrates the trend from Table~\ref{tab:freq_ablation_cifar10_main}, comparing the high-frequency group only, extended intervals, and DynamicSGD, using a CNN with CIFAR-10. For the interval 'a', we depict a, a-b, a-c, a-d, a-e, a-g, and a*. The same pattern is replicated for intervals b, c, and d.}
\label{fig:ablation_trend}
\end{figure*}

\subsection{Experimental results}
\paragraph{\textbf{Data statistical heterogeneity}} The effectiveness of DynamicFL is demonstrated across various experimental setups, as shown in Tables~\ref{tab:cifar10_cifar100_k2_resnet_main}--\ref{tab:freq_ablation_cifar10_main}. By effectively leveraging communication heterogeneity, DynamicFL yields substantial enhancements in global model performance, with improvements ranging from minor percentages to a striking 46.2\%. This performance range depends on the degree of statistical heterogeneity and the specific configurations applied. 
Our results reveal a notable performance gap between DynamicSGD (the upper bound) and the best baseline across all statistically heterogeneous settings. This gap, which equals 12.8\%, 49.6\%, and 24.6\% performance differences in CIFAR10 and CIFAR100's ResNet $K=2$ settings, and the CNN $Dir(0.01)$ setting for FEMNIST, respectively, underscores the necessity of DynamicAvg. When dealing with statistical heterogeneity, no advanced FL algorithm consistently outperforms FedAvg, emphasizing the challenges associated with such heterogeneity. This statistical heterogeneity considerably diminishes the accuracy of all FL algorithms across all experiments, particularly when each client's data is sampled from a single class, with CIFAR100 being the most complex among the datasets examined. Interestingly, due to its robust network structure, ResNet-18 shows a greater capacity for knowledge gain via DynamicFL than CNN, suggesting that more robust structures may derive greater benefits from DynamicAvg.

\begin{table*}[h!]
\caption{Accuracy results in comparing various communication interval combinations, and the standardized server total communication cost $\kappa_{g}^{t}$, using CNN under the CIFAR-10, \textit{Fix-0.3} configuration. 'a' signifies that all active clients train for 'a' intervals, while 'a*' indicates training exclusively involving high-frequency group clients. The same pattern is replicated for intervals b, c, and d.}
\centering
\begin{small}
\begin{tabular}{ccccccccc}
\toprule
\multirow{2}{*}{\begin{tabular}[c]{@{}c@{}}Communication   \\ interval\end{tabular}} & \multicolumn{2}{c}{$Dir(0.1)$} & \multicolumn{2}{c}{$Dir(0.3)$} & \multicolumn{2}{c}{$K=1$} & \multicolumn{2}{c}{$K=2$} \\ \cline{2-9} 
                                                                                     & Accuracy      & $\kappa_{g}^{t}$            & Accuracy      & $\kappa_{g}^{t}$              & Accuracy   & $\kappa_{g}^{t}$           & Accuracy   & $\kappa_{g}^{t}$            \\ \midrule
a                                                                                    & 75.4(0.1)     & 0.996   & 77.3(0.1)     & 0.996   & 73.9(1.0)  & 0.996 & 77.2(0.3)  & 0.996 \\
a-b                                                                                  & 72.7(0.5)     & 0.418   & 75.6(0.3)     & 0.436   & 65.1(0.0)  & 0.457 & 74.2(0.3)  & 0.450 \\
a-e                                                                                  & 68.5(0.1)     & 0.231   & 73.8(0.1)     & 0.254   & 63.2(0.7)  & 0.282 & 70.1(1.0)  & 0.273 \\
a*                                                                                   & 63.5(0.3)     & 0.218   & 69.4(0.3)     & 0.242   & 59.8(0.4)  & 0.270 & 66.5(0.1)  & 0.261 \\ \midrule
b                                                                                    & 69.7(0.0)     & 0.252   & 75.7(0.0)     & 0.252   & 65.0(0.0)  & 0.252 & 72.7(0.4)  & 0.252 \\
b-c                                                                                  & 69.7(0.7)     & 0.111   & 74.5(0.0)     & 0.110   & 55.4(0.3)  & 0.111 & 71.4(0.0)  & 0.114 \\
b-e                                                                                  & 67.6(0.4)     & 0.075   & 73.3(0.2)     & 0.073   & 51.9(0.6)  & 0.075 & 68.2(0.1)  & 0.079 \\
b*                                                                                   & 57.9(1.5)     & 0.063   & 65.9(1.0)     & 0.061   & 49.8(0.8)  & 0.062 & 60.9(0.6)  & 0.067 \\ \midrule
c                                                                                    & 70.3(0.4)     & 0.064   & 73.8(0.6)     & 0.064   & 54.4(0.1)  & 0.064 & 70.6(0.0)  & 0.064 \\
c-d                                                                                  & 68.8(0.1)     & 0.040   & 73.8(0.0)     & 0.040   & 50.7(0.1)  & 0.040 & 68.3(0.5)  & 0.040 \\
c-e                                                                                  & 67.1(0.4)     & 0.027   & 73.5(0.2)     & 0.028   & 45.0(0.4)  & 0.029 & 66.1(0.2)  & 0.028 \\
c*                                                                                   & 55.5(2.1)     & 0.015   & 64.0(0.6)     & 0.016   & 40.3(1.3)  & 0.017 & 59.6(2.5)  & 0.015 \\ \midrule
d                                                                                    & 67.6(0.3)     & 0.032   & 73.2(0.2)     & 0.032   & 49.5(0.5)  & 0.032 & 67.8(0.0)  & 0.032 \\
d-e                                                                                  & 66.0(0.4)     & 0.020   & 72.9(0.1)     & 0.020   & 42.3(0.4)  & 0.020 & 65.6(0.4)  & 0.020 \\
d*                                                                                   & 54.7(0.3)     & 0.008   & 62.8(0.4)     & 0.008   & 38.5(1.0)  & 0.009 & 57.9(0.3)  & 0.008 \\ \midrule
FedProx                                                                              & 64.8(0.8)    & 0.004      & 71.1(0.5)    & 0.004      & 20.9(0.3)  & 0.004   & 65.4(0.2)  & 0.004   \\
FedEnsemble                                                                          & 65.3(0.3)    & 0.004      & 71.7(0.2)    & 0.004      & 19.9(0.7)  & 0.004   & 66.0(0.9)  & 0.004   \\
SCAFFOLD                                                                             & 52.6(1.8)    & 0.008      & 71.0(1.0)    & 0.008      & 14.9(1.7)  & 0.008   & 66.2(0.0)  & 0.008   \\
FedGen                                                                               & 62.4(1.8)    & 0.009      & 70.4(0.7)    & 0.009      & 14.5(0.6)  & 0.009   & 64.3(0.1)  & 0.009   \\
FedDyn                                                                               & 66.2(0.0)    & 0.004      & 70.5(0.0)    & 0.004      & 17.3(2.2)  & 0.004   & 63.6(0.3)  & 0.004   \\
FedNova                                                                              & 63.9(0.6)    & 0.004      & 71.5(0.2)    & 0.004      & 20.2(0.8)  & 0.004   & 66.5(0.6)  & 0.004   \\
FedAvg                                                                               & 65.0(0.5)    & 0.004      & 69.6(0.5)    & 0.004      & 19.1(1.4)  & 0.004   & 66.3(0.0)  & 0.004   \\ \bottomrule
\end{tabular}
\end{small}
\label{tab:freq_ablation_cifar10_main}
\end{table*}

\paragraph{\textbf{Communication resources}} In a variety of communication heterogeneities, we demonstrate the robustness and flexibility of DynamicFL. Our findings highlight that, by fully capitalizing on communication heterogeneity, the high-frequency group resembling FedSGD can consistently bolster global model performance, thereby validating our design. Our evaluation reveals a consistent trend of enhanced performance when more communication resources are utilized, achieving results on par with DynamicSGD with less than 60\% communication cost, and in some instances, as little as 15\%. Remarkably, the degree of performance improvement appears to be closely linked to the level of statistical heterogeneity in the data. Specifically, settings marked by higher degrees of data heterogeneity typically yielded more substantial performance enhancements compared to the baselines. This highlights the effectiveness and flexibility of DynamicAvg in highly statistically heterogeneous settings. The robustness of our methodology is further validated by the consistency of our learning curves, as depicted in Figure~\ref{fig:learning curve}. However, in certain scenarios such as those presented in Table~\ref{tab:cifar10_cifar100_femnist_k2_d001_cnn_main}, minor improvements in performance are observed for smaller intervals 'b' and 'c' as we increase the communication resource budgets for clients and server. This might imply a non-linear correlation, suggesting that even with a small number of clients involved in high-frequency training, a performance comparable to that of the majority of clients can be achieved. Additionally, this might suggest that for simpler networks, a higher communication frequency is required to address statistical heterogeneity, serving as compensation for the disparities inherent in the network structure. When comparing \textit{Fix} and \textit{Dynamic} configurations, similar performance levels are achieved. A slightly lower accuracy in some \textit{Fix} cases may be due to bias, as only a subset of clients can maintain high-frequency communication. Although DynaComm alleviates bias, it still remains a factor.

\begin{figure*}[hbt]
\centering
{\includegraphics[width=1\linewidth]{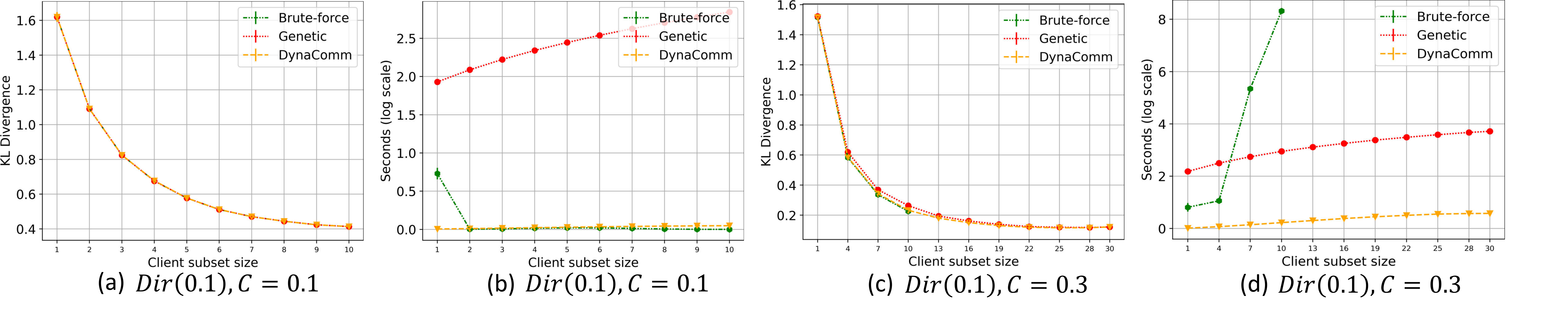}}
\caption{Comparison of KL-divergence and Runtime for CIFAR100 with $Dir(0.1)$.}
\label{fig-main:dp_cifar100_d01}
\end{figure*}
\begin{figure*}[hbt]
\centering
{\includegraphics[width=1\linewidth]{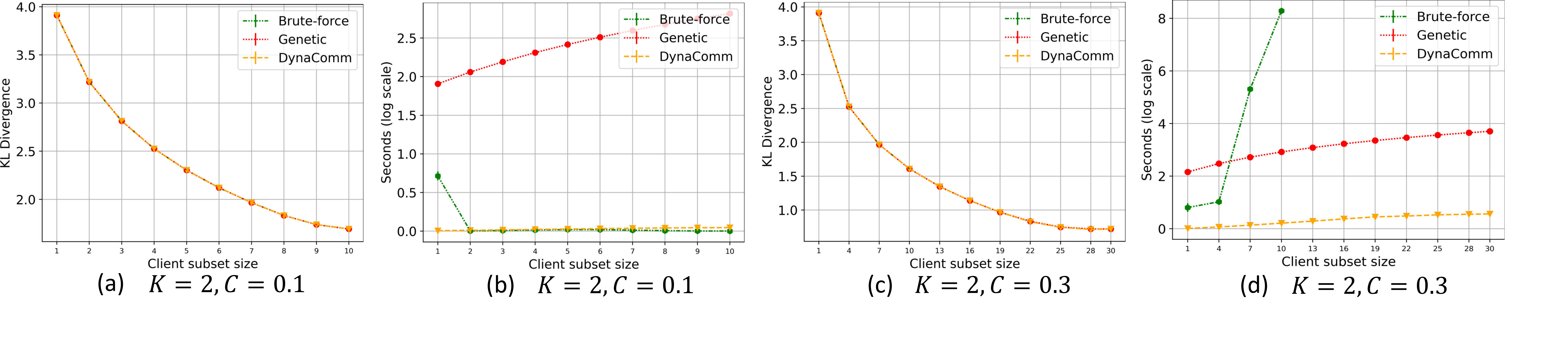}}
\caption{Comparison of KL-divergence and Runtime for CIFAR100 with $K=2$.}
\vspace{-0.3cm}
\label{fig-main:dp_cifar100_l2}
\end{figure*}
\begin{figure*}[hbt]
\centering
{\includegraphics[width=1\linewidth]{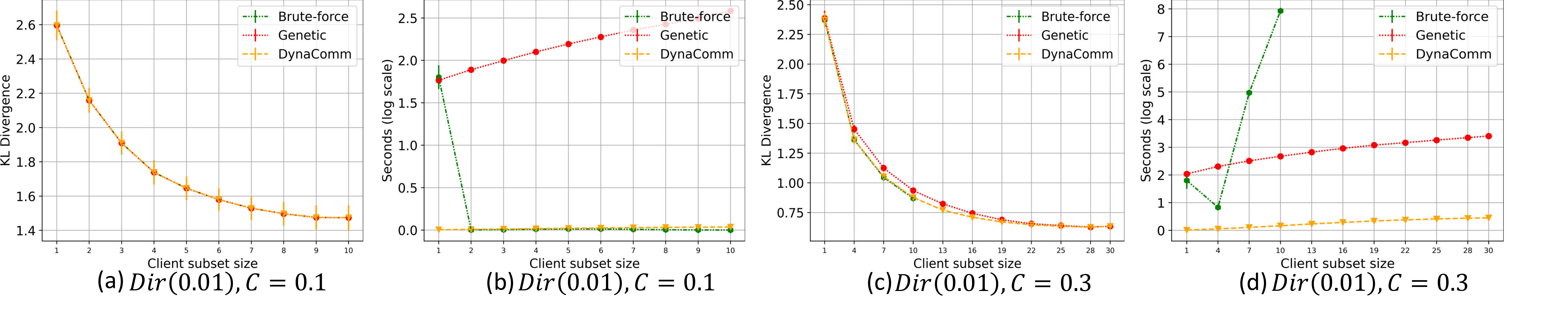}}
\vspace{-0.6cm}
\caption{Comparison of KL-divergence and Runtime for FEMNIST with $Dir(0.01)$.}
\vspace{-0.3cm}
\label{fig-main:dp_femnist_d001}
\end{figure*}
\begin{figure*}[hbt]
\centering
{\includegraphics[width=1\linewidth]{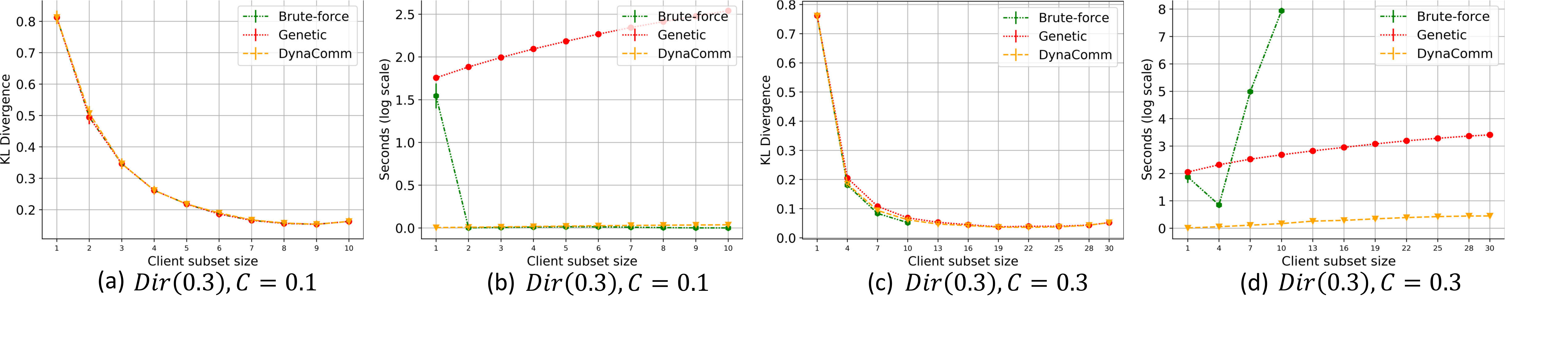}}
\vspace{-0.7cm}
\caption{Comparison of KL-divergence and Runtime for FEMNIST with $Dir(0.3)$.}
\vspace{-0.5cm}
\label{fig-main:dp_femnist_d03}
\end{figure*}

\paragraph{\textbf{Communication intervals}} We further demonstrate the robustness and flexibility of DynamicFL under various combinations of communication intervals. We emphasize that a low-frequency group, mirroring FedAvg's behavior for efficient dataset iteration, can indeed enhance model performance. Moreover, we observe a non-linear relationship between global model performance and communication cost. As depicted in Figure~\ref{fig:ablation_trend} and Table~\ref{tab:freq_ablation_cifar10_main}, we note that considerable performance enhancements can be achieved by reducing the long interval of the low-frequency group, while maintaining the short interval for the high-frequency group, with only a minimal increase in communication costs. Notably, utilizing the shortest interval isn't always necessary. Moderately longer intervals, akin to FedSGD, can also benefit the global model, yielding similar performance to shorter intervals. For instance, in Table~\ref{tab:freq_ablation_cifar10_main}, 'c-e' under $Dir(0.3)$ achieves 73.5\% accuracy with a 0.028 communication cost ratio, similar to 'a-e' with a 0.254 communication cost ratio. Furthermore, in the $K=1$ setting, using 'd-e' yields an accuracy boost of over 17.6\% against the top baseline by leveraging communication heterogeneity. Table~\ref{tab:freq_ablation_cifar10_main} shows that including a low-frequency group like FedAvg in training results in higher performance than using only high-frequency clients, as seen in four statistical heterogeneity instances (e.g., 'a*' to 'a'). Under $K = 1$, for CIFAR-10 with \textit{Fix-0.3}, 'd-g', 'd-f', and 'd-e' achieve accuracies of 31.7\%, 40.4\%, and 42.3\%, respectively. These results outperform baselines by 10.8-17.2\%, 19.5-25.9\%, and 21.4-27.8\% with 1.22-2.75, 1.55-3.5, and 2.22-5.0 times the communication cost, respectively. Similarly, under $K = 1$ for CIFAR-100 with \textit{Fix-0.3}, 'd-g', 'd-f', and 'd-e' report accuracies of 2.7\%, 2.9\%, and 3.4\%, respectively. Compared to the baselines, our results achieve improved accuracies by factors of 1.8-2.25, 1.93-2.42, and 2.27-2.83 and come with 1.18-3.25, 1.36-3.75, and 1.91-5.25 times the communication cost, respectively. Lastly, under $Dir(0.01)$ for FEMNIST with \textit{Fix-0.3}, 'd-g', 'd-f', and 'd-e' achieve accuracies of 58.2\%, 60.8\%, and 62.6\%, respectively. These results surpass baselines by 7.4-27.1\%, 10.0-29.7\%, and 11.8-31.5\% with 3, 3.75, and 5 times the communication costs, respectively. However, in the most heterogeneous $K=1$ scenario depicted in Figure~\ref{fig:ablation_trend} (c), training exclusively with high-frequency group clients outperforms adding 'g' interval low-frequency group, implying FedAvg's potential to degrade performance in statistical heterogeneity.

\subsection{Dynamic Communication Optimizer}
In this section, we highlight the effectiveness of DynaComm, as described in Section 3 (Method section) of the main text. In summary, DynaComm is capable of managing communication resource constraints while achieving results comparable to the brute-force method but with significantly lower time costs, and it surpasses the Genetic Algorithm in performance. Furthermore, we note that KL-divergence tends to plateau or even increase once the subset size surpasses a specific threshold. This observation suggests a non-linear relationship between the sampling and global label distributions, which further affirms the design correctness of DynamicFL.


We implement the Genetic Algorithm~\cite{genetic} using the scikit-opt library, adhering to the default parameter configuration as outlined in the library's documentation. This standard configuration guarantees the algorithm's wide-ranging applicability and preserves comparability across diverse problem settings. The hyperparameters encompass population size, maximum iterations, and mutation probability, all of which are pivotal elements impacting the algorithm's search process and convergence performance. We use population size of 50, max iterations as 200, and mutation probability of 0.01.

Next, we provide a comparison of the KL-divergence and Runtime for the brute-force method, Genetic Algorithm, and DynaComm when applied to CIFAR100, and FEMNIST, as shown in Figures~\ref{fig-main:dp_cifar100_d01}--\ref{fig-main:dp_femnist_d03}. The figures illustrate the KL-divergence and Runtime at 0.1 (a-b) and 0.3 (c-d) active client rates for four types of statistical heterogeneity: $Dir(0.1)$, $Dir(0.3)$, $K=1$, and $K=2$. 

We first explore the effect of the active rate. DynaComm offers more considerable benefits when the active rate $C$ is high, as the union of a small fraction of active clients' labels can achieve a strikingly low KL-divergence that decreases swiftly. For example, when the client subset size is 10 and $C=0.3$ for the CIFAR10 dataset, the KL-divergence is almost zero. Furthermore, when using DynamicFL, even with a low active rate, selecting only part of the active clients might yield results comparable to those obtained by engaging all active clients, due to the plateau or even increase effect.

Next, when evaluating the effect of target classes, we found that as the number of target classes increases, a larger client subset size is required to achieve the same KL-divergence as the dataset with fewer target classes. For example, when comparing CIFAR10 and CIFAR100 under $Dir(0.1)$ and with a client subset size of 5, CIFAR10 can achieve a KL-divergence of 0.25, while CIFAR100 reaches 0.55. This suggests that CIFAR100 requires a more substantial client subset size, implying that CIFAR100 is the more challenging dataset.

Lastly, we assessed the effect of statistical heterogeneity within a single dataset and found that higher statistical heterogeneity correlates with increased KL-divergence. For instance, in the FEMNIST dataset with $C=0.1$, we found KL-divergence values of 1.5 for $Dir(0.01)$, 0.4 for $Dir(0.1)$, and 0.15 for $Dir(0.3)$ when the client subset size is 10. This correlation suggests a predictable decline in performance with increased levels of statistical heterogeneity.

\vspace{-4pt}
\section{Conclusion}
We introduce DynamicFL, an innovative FL framework that addresses statistical heterogeneity by leveraging communication heterogeneity. Our method successfully bridges FedSGD and FedAvg, demonstrating its robustness and adaptability through extensive experiments across diverse settings. To the best of our knowledge, \textit{DynamicFL is the first FL method that flexibly bridges gap between FedSGD and FedAvg, adaptively accommodating varying client-server communication capabilities and data statistical heterogeneity.} 

Our evaluation shows up to 10\% accuracy boost compared to baseline methods.
Future research directions involve the development of advanced applications or methodologies designed to leverage communication heterogeneity more effectively.

\bibliography{arxiv}
\bibliographystyle{acm}

\clearpage
\appendix
\onecolumn

\centerline{\LARGE Appendix for ``Dynamic Federated Learning''}
The \textbf{Appendix} provides additional information on the DynamicFL framework, including details of the Dynamic Communication Optimizer in Section~\ref{sec: DynaComm}, experimental setup in Section~\ref{sec:experimental_setup}, supplementary experimental results in Section~\ref{sec:experimental_results}, and technical analysis in Section~\ref{sec:theory}.

\vspace{-0.3cm}
\section{Dynamic Communication Optimizer}
\label{sec: DynaComm}
In this section, we highlight the effectiveness of DynaComm, as described in Section 3 (Method section) of the main text. In summary, DynaComm is capable of managing communication resource constraints while achieving results comparable to the brute-force method but with significantly lower time costs, and it surpasses the Genetic Algorithm in performance. Furthermore, we note that KL-divergence tends to plateau or even increase once the subset size surpasses a specific threshold. This observation suggests a non-linear relationship between the sampling and global label distributions, which further affirms the design correctness of DynamicFL.

We provide a summary of the essential hyperparameters used for the Genetic Algorithm, an optimization method inspired by natural selection, as depicted in Table \ref{tab:genetic_algo}. We implement the Genetic Algorithm using the scikit-opt library, adhering to the default parameter configuration as outlined in the library's documentation. This standard configuration guarantees the algorithm's wide-ranging applicability and preserves comparability across diverse problem settings. The hyperparameters encompass population size, maximum iterations, and mutation probability, all of which are pivotal elements impacting the algorithm's search process and convergence performance.
\begin{table}[htbp]
\caption{Hyperparameters used for the Genetic Algorithm. The Genetic Algorithm is implemented by https://github.com/guofei9987/scikit-opt. We employ the default parameter configuration, which is available at: https://scikit-opt.github.io/scikit-opt/\#/en/args}
\centering
\begin{tabular}{@{}ccccc@{}}
\toprule
The Genetic Algorithm      \\ \midrule
Population   Size    & 50     \\ \midrule
Maximum iteration    & 200    \\ \midrule
Mutation Probability & 0.01   \\ \bottomrule
\end{tabular}
\label{tab:genetic_algo}
\end{table}

Next, we provide a thorough comparison of the KL-divergence and Runtime for the brute-force method, Genetic Algorithm, and DynaComm when applied to CIFAR10, CIFAR100, and FEMNIST, as shown in Figures~\ref{fig:dp_cifar10_d01}--\ref{fig:dp_femnist_d03}. The figures illustrate the KL-divergence and Runtime at 0.1 (a-b) and 0.3 (c-d) active client rates for four types of statistical heterogeneity: $Dir(0.1)$, $Dir(0.3)$, $K=1$, and $K=2$. 

We first explore the effect of the active rate. our DynaComm offers more considerable benefits when the active rate $C$ is high, as the union of a small fraction of active clients' labels can achieve a strikingly low KL-divergence that decreases swiftly. For example, when the client subset size is 10 and $C=0.3$ for the CIFAR10 dataset, the KL-divergence is almost zero. Furthermore, when using DynamicFL, even with a low active rate, selecting only part of the active clients might yield results comparable to those obtained by engaging all active clients, due to the plateau or even increase effect.

Next, when evaluating the effect of target classes, we found that as the number of target classes increases, a larger client subset size is required to achieve the same KL-divergence as the dataset with fewer target classes. For example, when comparing CIFAR10 and CIFAR100 under $Dir(0.1)$ and with a client subset size of 5, CIFAR10 can achieve a KL-divergence of 0.25, while CIFAR100 reaches 0.55. This suggests that CIFAR100 requires a more substantial client subset size, implying that CIFAR100 is the more challenging dataset.

Lastly, we assessed the effect of statistical heterogeneity within a single dataset and found that higher statistical heterogeneity correlates with increased KL-divergence. For instance, in the FEMNIST dataset with $C=0.1$, we found KL-divergence values of 1.5 for $Dir(0.01)$, 0.4 for $Dir(0.1)$, and 0.15 for $Dir(0.3)$ when the client subset size is 10. This correlation suggests a predictable decline in performance with increased levels of statistical heterogeneity.
\begin{figure}[h!]
\vspace{-0.5cm}
\centering
{\includegraphics[width=0.95\linewidth]{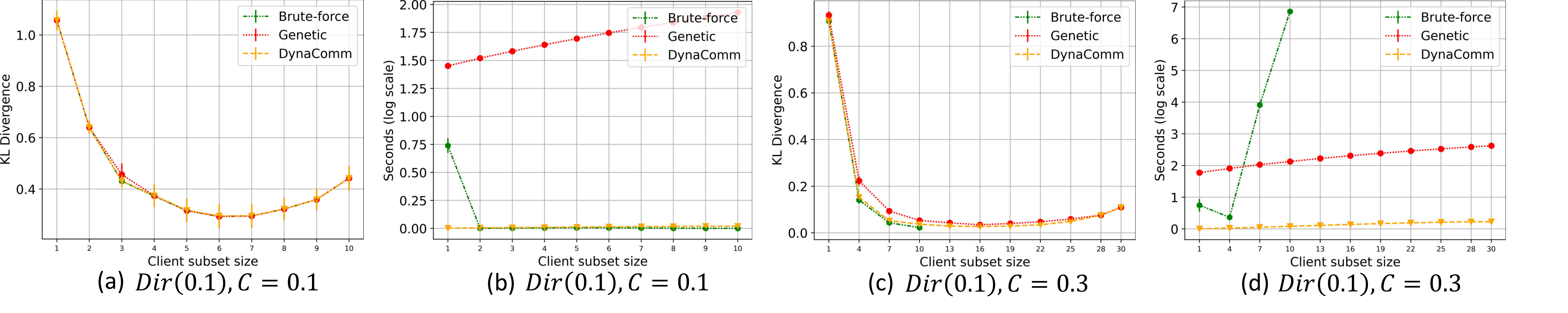}}
\vspace{-0.35cm}
\caption{Comparison of KL-divergence and Runtime for CIFAR10 with $Dir(0.1)$.}
\label{fig:dp_cifar10_d01}
\end{figure}
\begin{figure}[h!]
\centering
{\includegraphics[width=0.95\linewidth]{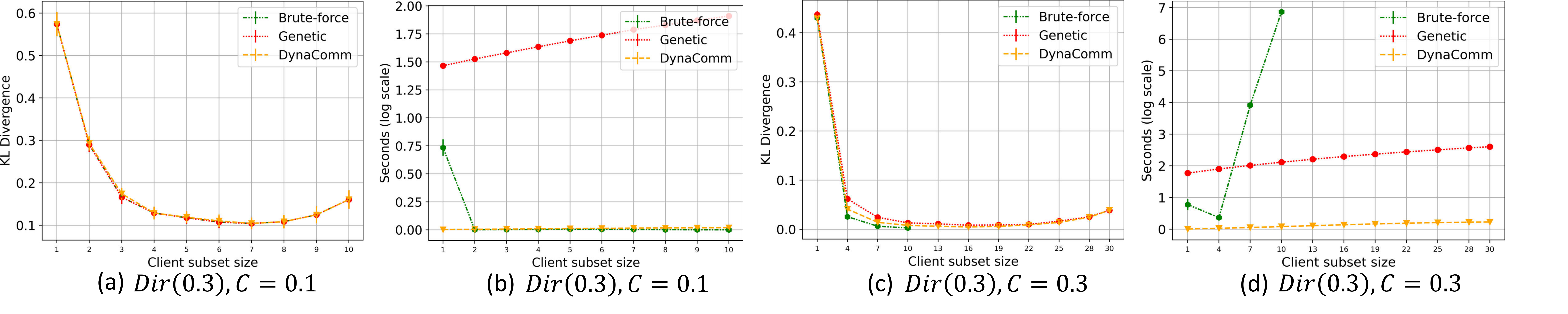}}
\vspace{-0.35cm}
\caption{Comparison of KL-divergence and Runtime for CIFAR10 with $Dir(0.3)$.}
\label{fig:dp_cifar10_d03}
\end{figure}
\begin{figure}[h!]
\centering
{\includegraphics[width=0.95\linewidth]{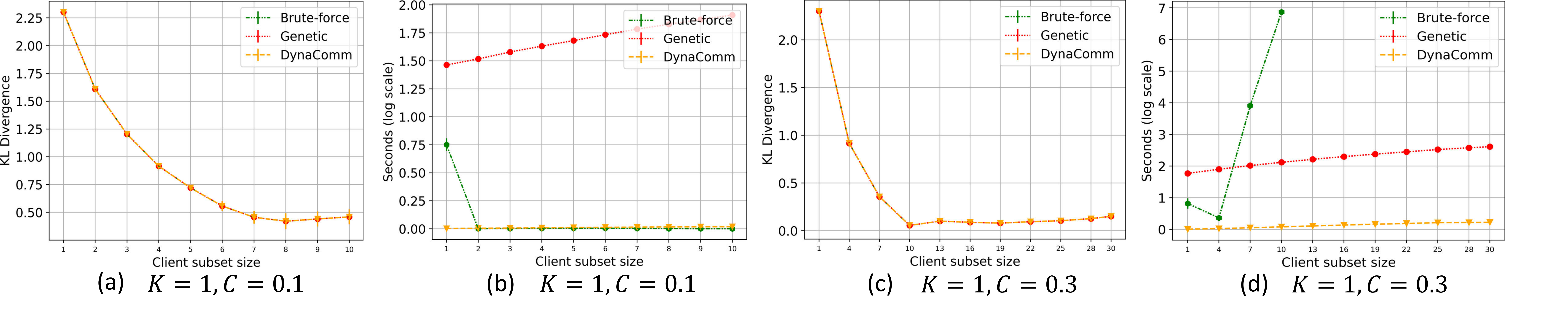}}
\vspace{-0.35cm}
\caption{Comparison of KL-divergence and Runtime for CIFAR10 with $K=1$.}
\label{fig:dp_cifar10_l1}
\end{figure}
\begin{figure}[h!]
\centering
{\includegraphics[width=0.95\linewidth]{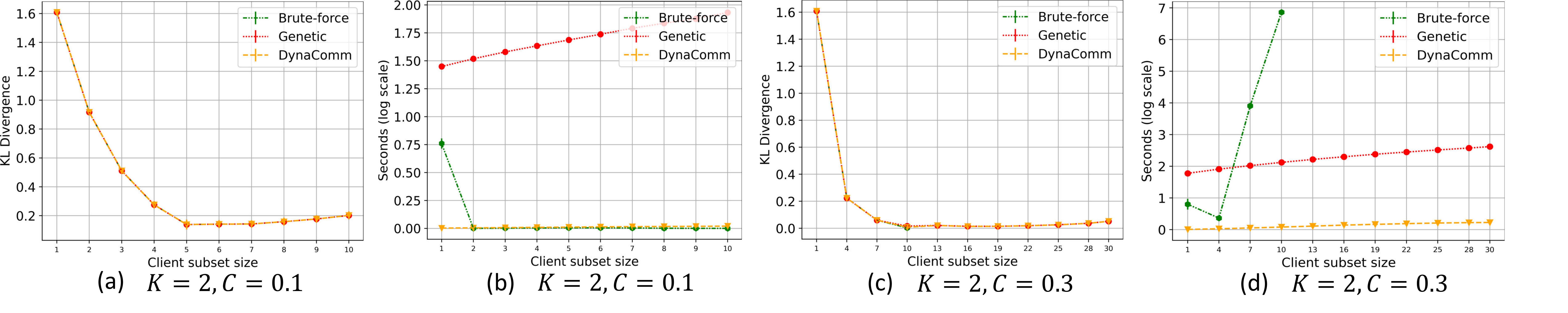}}
\vspace{-0.35cm}
\caption{Comparison of KL-divergence and Runtime for CIFAR10 with $K=2$.}
\label{fig:dp_cifar10_l2}
\end{figure}
\begin{figure}[h!]
\centering
{\includegraphics[width=0.95\linewidth]{Figures/dynacomm/dp_cifar100_d01.pdf}}
\vspace{-0.35cm}
\caption{Comparison of KL-divergence and Runtime for CIFAR100 with $Dir(0.1)$.}
\label{fig:dp_cifar100_d01}
\end{figure}
\begin{figure}[h!]
\centering
{\includegraphics[width=0.95\linewidth]{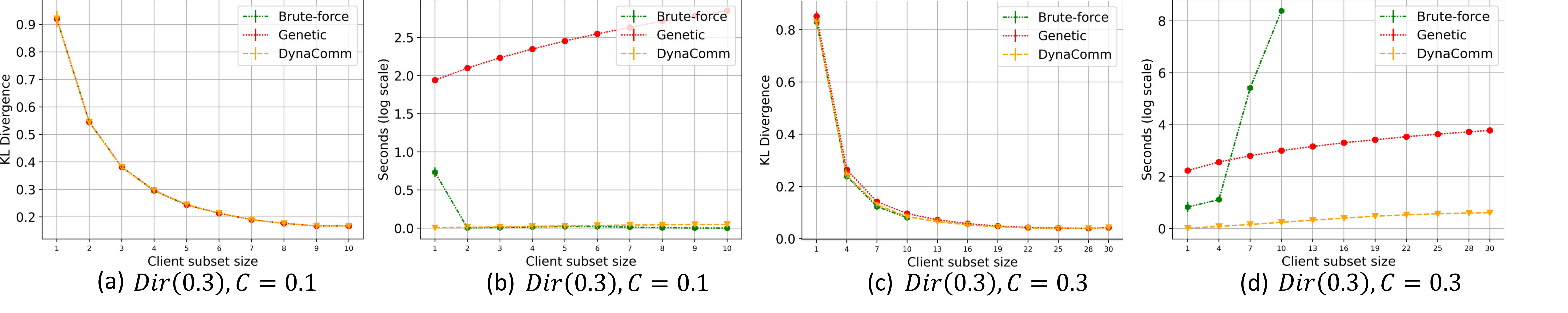}}
\vspace{-0.35cm}
\caption{Comparison of KL-divergence and Runtime for CIFAR100 with $Dir(0.3)$.}
\label{fig:dp_cifar100_d03}
\end{figure}
\begin{figure}[h!]
\centering
{\includegraphics[width=0.95\linewidth]{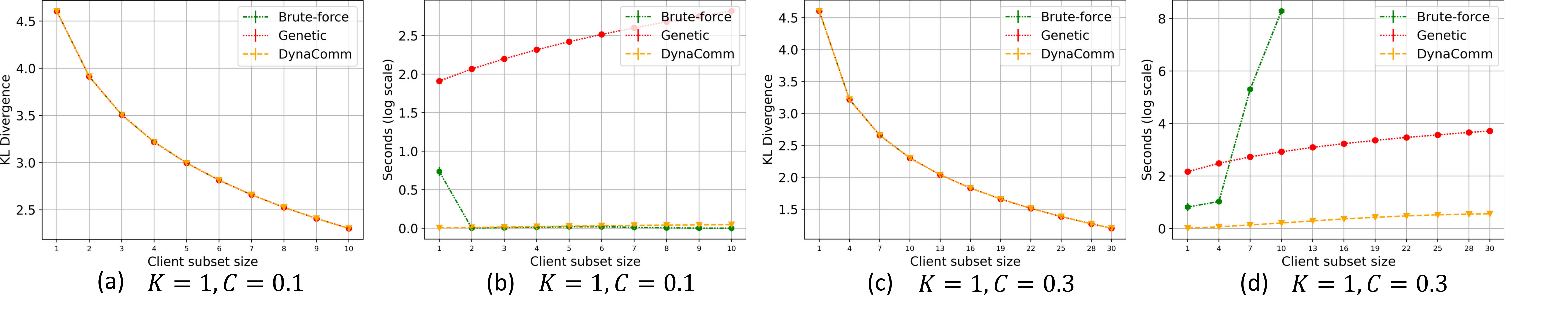}}
\vspace{-0.35cm}
\caption{Comparison of KL-divergence and Runtime for CIFAR100 with $K=1$.}
\label{fig:dp_cifar100_l1}
\end{figure}
\begin{figure}[h!]
\centering
{\includegraphics[width=0.95\linewidth]{Figures/dynacomm/dp_cifar100_l2.pdf}}
\vspace{-0.35cm}
\caption{Comparison of KL-divergence and Runtime for CIFAR100 with $K=2$.}
\label{fig:dp_cifar100_l2}
\end{figure}
\begin{figure}[h!]
\centering
{\includegraphics[width=0.95\linewidth]{Figures/dynacomm/dp_femnist_d001.pdf}}
\vspace{-0.35cm}
\caption{Comparison of KL-divergence and Runtime for FEMNIST with $Dir(0.01)$.}
\label{fig:dp_femnist_d001}
\end{figure}
\begin{figure}[h!]
\centering
{\includegraphics[width=0.95\linewidth]{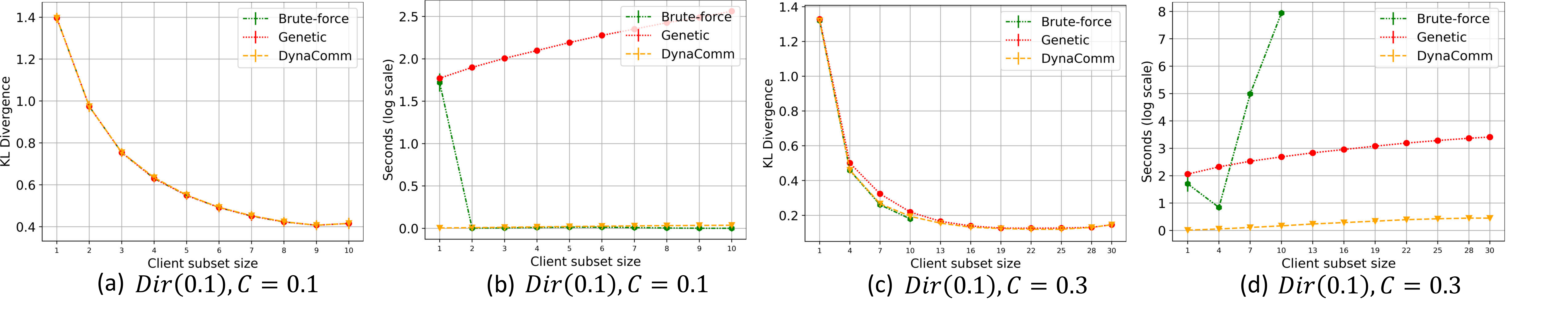}}
\vspace{-0.35cm}
\caption{Comparison of KL-divergence and Runtime for FEMNIST with $Dir(0.1)$.}
\label{fig:dp_femnist_d01}
\end{figure}
\begin{figure}[h!]
\centering
{\includegraphics[width=0.95\linewidth]{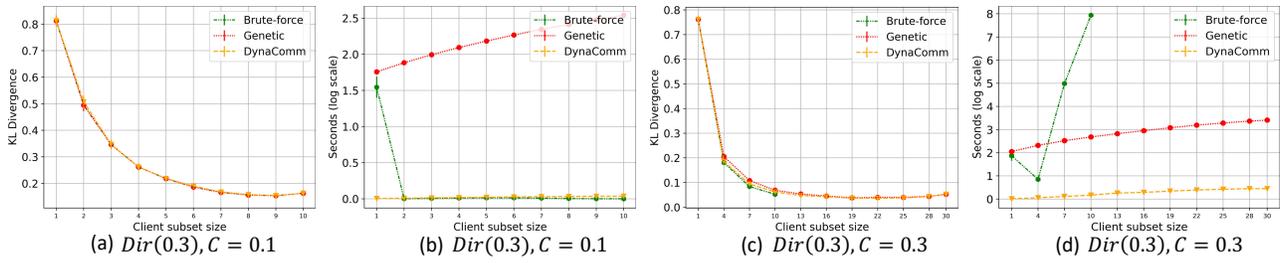}}
\vspace{-0.35cm}
\caption{Comparison of KL-divergence and Runtime for FEMNIST with $Dir(0.3)$.}
\vspace{-0.35cm}
\label{fig:dp_femnist_d03}
\end{figure}

\clearpage
\section{Experimental setup}
\label{sec:experimental_setup}
In this section, we introduce the experimental setup. We employ the top-1 accuracy on the test dataset as a metric to compare the studied algorithms. For a fair comparison, we execute all the examined algorithms for an equal total number of gradient updates. The number of rounds is set to 800 to allow the model to converge fully. For all FedAvg-based methods, we set the local epoch $E$ to 5.

In Table~\ref{tab:network_arch}, we provide the configurations for hidden sizes across different models and datasets. 
\begin{table}[h!]
\caption{Network architecture for all methods}
\centering
\begin{tabular}{@{}cccc@{}}
\toprule
Data                          & CIFAR10              & CIFAR100             & FEMNIST  \\ \midrule
CNN Hidden Size        & \multicolumn{3}{c}{{[}120, 84{]}}                      \\ \midrule
ResNet-18 Hidden Size & \multicolumn{2}{c}{{[}64, 128, 256, 512{]}} & N/A      \\ \bottomrule
\end{tabular}
\label{tab:network_arch}
\end{table}

In Table~\ref{tab:dataset_info}, we present the key characteristics of the CIFAR10, CIFAR100~\cite{netzer2011reading, li2022federated}, and FEMNIST~\cite{he2016deep} datasets used in our experiments. These diverse datasets, with varying numbers of features and classes, allow for a comprehensive evaluation of DynamicFL's performance and robustness.
\begin{table}[h!]
\caption{Statistics of datasets in our experiments}
\centering
\begin{tabular}{@{}cccc@{}}
\toprule
Datasets           & CIFAR10 & CIFAR100 & FEMNIST \\ \midrule
Training instances & 50,000  & 50,000   & 644,210 \\ \midrule
Test instances     & 10,000  & 10,000   & 161,053 \\ \midrule
Features           & 1,024   & 1,024    & 784     \\ \midrule
Classes            & 10      & 100      & 62      \\ \bottomrule
\end{tabular}
\label{tab:dataset_info}
\end{table}

In Table \ref{tab:hyperparameter}, we present the shared hyperparameters used across our experiments for both CNN and ResNet-18 models and different methods. Moreover, in the experiments, the hyperparameters are not tuned for optimal performance in the federated setting. Thus, the experimental results could potentially be further enhanced.
\begin{table}[htbp]
\centering
\caption{Hyperparameters}
\begin{tabular}{@{}ccc@{}}
\toprule
\multirow{9}{*}{Shared parameter} & Batch size                   & 10                        \\
                                  & Optimizer                    & SGD                       \\
                                  & Learning rate for CNN        & 1.00E-02                  \\
                                  & Learning rate for Resnet-18  & 3.00E-02                  \\
                                  & Weight decay                 & 5.00E-04                  \\
                                  & Momentum                     & 9.00E-01                  \\
                                  & Nesterov                     & \checkmark \\
                                  & Communication round          & 800                       \\
                                  & Scheduler                    & Cosine Annealing          \\ \midrule
\multirow{2}{*}{FedGen}           & Generator optimizer          & adam                      \\
                                  & Generator learning rate      & 1.00E-04                  \\
                                  \midrule
FedDyn                            & Regularization   parameter   & 0.1                       \\ \midrule
FedProx                           & Regularization parameter     & 0.01                      \\ \bottomrule
\end{tabular}
\label{tab:hyperparameter}
\end{table}

\clearpage
\section{Experimental results}
\label{sec:experimental_results}
In this section, we present the experimental results obtained from extensive experimental settings. These substantial results further affirm the robustness and flexibility of DynamicFL, as discussed in Section 4.3 (Experimental Results). We demonstrate that statistical heterogeneity can be effectively managed by fully capitalizing on communication heterogeneity.

In Subsection~\ref{appendix:Communication_resources}, we illustrate the superiority of DynamicFL in terms of communication resources, as depicted in Tables~\ref{tab:cifar10d01communication_resource}--\ref{tab:femnistd03communication_resource}. We present the performance of CNN and ResNet-18, along with the standardized total server communication cost $\kappa_{g}^{t}$ for three communication intervals across five degrees of statistical heterogeneity for three datasets (CIFAR10, CIFAR100, FEMNIST). We underscore three \textbf{key findings}: 

\textbf{(1) The high-frequency group, similar to FedSGD, can consistently enhance global model performance by fully exploiting communication heterogeneity.}

\textbf{(2) There is a consistent pattern of performance boost when additional communication resources are deployed}, matching DynamicSGD's results with less than 60\% communication cost, and in some cases, only 15\%. In the CIFAR10 dataset, under the conditions of $Dir(0.1)$, \textit{Fix-0.9}, and 'a-g', the CNN architecture in DynamicAvg achieves an accuracy of 73.1\% compared to DynamicSGD's accuracy of 75.4\%, using only 58.9\% of the DynamicSGD's communication cost. Notably, this surpasses the baseline performance, which ranges in accuracy from 52.6\% to 66.2\% 
 (Table~\ref{tab:cifar10d01communication_resource}). 
 Similarly, under the conditions of $Dir(0.1)$, \textit{Dynamic-0.6}, and 'a-g', the ResNet-18 architecture in DynamicAvg achieves an accuracy of 91.6\%, compared to DynamicSGD's accuracy of 94.3\%, using only 56.2\% of the DynamicSGD's communication cost. This performance exceeds the baselines, which range from 73.9\% to 82.4\% in accuracy (Table~\ref{tab:cifar10d01communication_resource}).
 In the CIFAR100 dataset, under the conditions of $Dir(0.3)$, \textit{Fix-0.9}, and 'b-g', the CNN architecture in DynamicAvg achieves an accuracy of 40.9\%, compared to DynamicSGD's accuracy of 43.9\%, requiring only 22.3\% of the DynamicSGD's communication cost. This performance surpasses the baselines, which range in accuracy from 32.0\% to 38.4\% (Table~\ref{tab:cifar100d03communication_resource}).
 Finally, under the conditions of $K=2$, \textit{Fix-0.6}, and 'a-g', employing the CNN architecture, DynamicAvg achieves an accuracy of 27.5\%, compared to DynamicSGD's accuracy of 27\%, while requiring only 59.4\% of the DynamicSGD's communication cost. This result surpasses the baselines, which range in accuracy from 16.4\% to 23.8\% (Table~\ref{tab:cifar100k2communication_resource}).

\textbf{(3) The magnitude of performance improvement correlates closely with the degree of statistical heterogeneity in the data}, with settings of higher data heterogeneity typically generating more pronounced performance boosts. 
In the CIFAR10 dataset, under the conditions of $K=1$, \textit{Fix-0.9}, and 'b-g', the ResNet-18 architecture in DynamicAvg achieves an accuracy of 82.4\%, compared to DynamicSGD's accuracy of 94.5\%, while consuming only 16\% of the DynamicSGD's communication cost. This result surpasses the baselines, which range in accuracy from 11.8\% to 15.8\% (Table~\ref{tab:cifar10k1communication_resource}).
In the FEMNIST dataset, under the conditions of $Dir(0.01)$, \textit{Dynamic-0.6}, and 'a-g', the CNN architecture in DynamicAvg achieves an accuracy of 70.4\%, compared to DynamicSGD's accuracy of 75.4\%, while utilizing only 59\% of the DynamicSGD's communication cost. This performance outperforms the baselines, which range in accuracy from 31.1\% to 50.8\% (Table~\ref{tab:femnistd001communication_resource}).

In a comparison of Fix and Dynamic configurations, we see that they attain comparable performance. A trivial decrease in accuracy in a few instances of Fix might be resultant of bias, as it's only possible for a limited group of clients to uphold high-frequency communication. While DynaComm mitigates the bias, it continues to be an element to consider.

In Subsection~\ref{appendix:Communication_intervals}, \textbf{we further validate the robustness and flexibility of DynamicFL under a range of communication interval combinations}, as illustrated in Tables~\ref{tab:freq_ablation_cifar10_appendix}--\ref{tab:freq_ablation_femnist_appendix} and Figures~\ref{fig:freq_ablation_cifar10_d01}--\ref{fig:freq_ablation_femnist_d03}. We provide results contrasting various communication interval combinations, including the high-frequency group only, elongated intervals, and DynamicSGD, as well as the standardized server total communication cost $\kappa_{g}^{t}$, employing a CNN under the \textit{Fix-0.3} setup for the CIFAR10, CIFAR100, and FEMNIST datasets. We extract three central points: 

(1) A low-frequency group, mirroring FedAvg's strategy for efficient dataset iteration, can indeed augment model performance.

(2) A non-linear association between global model performance and communication cost is observed, suggesting that substantial performance improvements can be secured with marginal increases in communication costs. For example, in the CIFAR10 dataset with the $K=1$ setting, the 'd-e' condition yields an accuracy boost of over 17.6\% against the top baseline, using only 2\% of DynamicSGD's communication cost (Table~\ref{tab:freq_ablation_cifar10_appendix}). Similarly, in the FEMNIST dataset with the $Dir(0.01)$ setting, the 'd-e' condition results in an accuracy boost of over 11.8\% against the top baseline, using only 2\% of DynamicSGD's communication cost (Table~\ref{tab:freq_ablation_femnist_appendix}).

(3) The utilization of the shortest interval isn't invariably required. Moderately longer intervals, reminiscent of FedSGD, can also contribute positively to the global model, delivering performance on par with shorter intervals. For instance, in the CIFAR10 dataset with the $Dir(0.3)$ setting, 'c-e' achieves 73.5\% accuracy with a communication cost ratio of 0.028, similar to 'a-e' which has a communication cost ratio of 0.254 (Table~\ref{tab:freq_ablation_cifar10_appendix}). In the CIFAR100 dataset with the $K=2$ setting, 'c-g' achieves 20.4\% accuracy with a communication cost ratio of 0.022, comparable to 'a-e' which has a communication cost ratio of 0.316 (Table~\ref{tab:freq_ablation_cifar100_appendix}). In the CIFAR100 dataset with the $Dir(0.1)$ setting, 'b-e' achieves 38.2\% accuracy with a communication cost ratio of 0.09, comparable to 'a-b' which has a communication cost ratio of 0.482 (Table~\ref{tab:freq_ablation_cifar100_appendix}).

\vspace{-0.2cm}
\subsection{Communication resources}
\label{appendix:Communication_resources}
\vspace{-0.3cm}
\begin{table}[htbp]
\centering
\begin{small}
\caption{CNN and ResNet-18 accuracy results for three communication intervals across different configurations, $Dir(0.1)$ on CIFAR10 dataset.}
\vspace{-0.3cm}
\begin{tabular}{@{}ccccccccc@{}}
\toprule
\multicolumn{3}{c}{\multirow{2}{*}{Method}}                      & \multicolumn{2}{c}{CNN}        & \multicolumn{2}{c}{ResNet-18} & \multicolumn{2}{c}{$\kappa_{g}^{t}$}                                                           \\ \cmidrule(l){4-9} 
\multicolumn{3}{c}{}                                             & \textit{Fix} & \textit{Dynamic} & \textit{Fix} & \textit{Dynamic} & \textit{Fix}              & \textit{Dynamic}             \\ \midrule
\multicolumn{3}{c}{DynamicSGD}                                   & \multicolumn{2}{c}{75.4 (0.1)} & \multicolumn{2}{c}{94.3(0.1)} & \multicolumn{2}{c}{0.996}                                                                      \\ \midrule
\multirow{9}{*}{DynamicAvg} & \multirow{3}{*}{a-g} & $\beta=0.3$ & 66.2 (0.2)     & 68.9 (0.9)    & 84.0(0.0)     & 86.8(0.0)     & 0.221                                         & 0.300                                          \\
                            &                      & $\beta=0.6$ & 70.0 (0.0)     & 72.7 (0.9)    & 89.3(0.0)     & 91.6(0.0)     & 0.458                                         & 0.562                                          \\
                            &                      & $\beta=0.9$ & 73.1 (0.0)     & 73.1 (0.4)    & 92.3(0.0)     & 91.9(0.0)     & 0.589                                         & 0.655                                          \\ \cmidrule(l){2-9} 
                            & \multirow{3}{*}{b-g} & $\beta=0.3$ & 66.2 (0.2)     & 66.0 (1.1)    & 84.6(0.2)     & 86.1(0.0)     & 0.066                                         & 0.078                                          \\
                            &                      & $\beta=0.6$ & 69.0 (0.3)     & 70.5 (0.2)    & 87.4(0.3)     & 89.4(0.0)     & 0.116                                         & 0.144                                          \\
                            &                      & $\beta=0.9$ & 70.0 (0.7)     & 70.3 (0.9)    & 89.4(0.1)     & 89.9(0.1)     & 0.157                                         & 0.167                                          \\ \cmidrule(l){2-9} 
                            & \multirow{3}{*}{c-g} & $\beta=0.3$ & 64.4 (0.6)     & 66.7 (0.3)    & 82.7(1.1)     & 83.6(0.9)     & 0.018                                         & 0.022                                          \\
                            &                      & $\beta=0.6$ & 65.9 (0.3)     & 66.7 (0.4)    & 84.7(0.7)     & 86.6(0.2)     & 0.032                                         & 0.038                                          \\
                            &                      & $\beta=0.9$ & 67.1 (0.1)     & 66.9 (0.4)    & 86.2(0.5)     & 87.3(0.3)     & 0.042                                         & 0.044                                          \\ \midrule
\multicolumn{3}{c}{FedProx}                                      & \multicolumn{2}{c}{64.8(0.8)}  & \multicolumn{2}{c}{82.4(0.1)} & \multicolumn{2}{c}{0.004}                                                                      \\
\multicolumn{3}{c}{FedEnsemble}                                  & \multicolumn{2}{c}{65.3(0.3)}  & \multicolumn{2}{c}{81.1(1.3)} & \multicolumn{2}{c}{0.004}                                                                      \\
\multicolumn{3}{c}{SCAFFOLD}                                     & \multicolumn{2}{c}{52.6(1.8)}  & \multicolumn{2}{c}{73.9(3.6)} & \multicolumn{2}{c}{0.008}                                                                      \\
\multicolumn{3}{c}{FedGen}                                       & \multicolumn{2}{c}{62.4(1.8)}  & \multicolumn{2}{c}{78.4(0.0)} & \multicolumn{2}{c}{\begin{tabular}[c]{@{}c@{}}0.009(CNN)\\      0.004(ResNet-18)\end{tabular}} \\
\multicolumn{3}{c}{FedDyn}                                       & \multicolumn{2}{c}{66.2(0.0)}  & \multicolumn{2}{c}{80.2(0.9)} & \multicolumn{2}{c}{0.004}                                                                      \\
\multicolumn{3}{c}{FedNova}                                      & \multicolumn{2}{c}{63.9(0.6)}  & \multicolumn{2}{c}{80.1(1.5)} & \multicolumn{2}{c}{0.004}                                                                      \\
\multicolumn{3}{c}{FedAvg}                                       & \multicolumn{2}{c}{65.0(0.5)}  & \multicolumn{2}{c}{79.6(1.3)} & \multicolumn{2}{c}{0.004}                                                                      \\ \bottomrule
\end{tabular}
\label{tab:cifar10d01communication_resource}
\end{small}
\end{table}
\vspace{-0.5cm}
\begin{table}[htbp]
\centering
\caption{CNN and ResNet-18 accuracy results for three communication intervals across different configurations, $Dir(0.3)$ on CIFAR10 dataset.}
\begin{small}
\begin{tabular}{@{}ccccccccc@{}}
\toprule
\multicolumn{3}{c}{\multirow{2}{*}{Method}}                      & \multicolumn{2}{c}{CNN}        & \multicolumn{2}{c}{ResNet-18} & \multicolumn{2}{c}{$\kappa_{g}^{t}$}                                                           \\ \cmidrule(l){4-9} 
\multicolumn{3}{c}{}                                             & \textit{Fix} & \textit{Dynamic} & \textit{Fix} & \textit{Dynamic} & \textit{Fix}              & \textit{Dynamic}             \\ \midrule
\multicolumn{3}{c}{DynamicSGD}                                   & \multicolumn{2}{c}{77.3 (0.1)} & \multicolumn{2}{c}{94.6(0.1)} & \multicolumn{2}{c}{0.996}                                                                      \\ \midrule
\multirow{9}{*}{DynamicAvg} & \multirow{3}{*}{a-g} & $\beta=0.3$ & 71.2 (0.2)     & 74.3 (0.1)    & 89.9(0.0)     & 90.3(0.0)     & 0.245                                         & 0.299                                          \\
                            &                      & $\beta=0.6$ & 74.5 (0.0)     & 75.5 (0.4)    & 91.5(0.0)     & 92.2(0.0)     & 0.452                                         & 0.539                                          \\
                            &                      & $\beta=0.9$ & 75.4 (0.2)     & 75.4 (0.1)    & 92.0(0.0)     & 92.2(0.0)     & 0.596                                         & 0.636                                          \\ \cmidrule(l){2-9} 
                            & \multirow{3}{*}{b-g} & $\beta=0.3$ & 72.3 (0.2)     & 72.6 (0.1)    & 89.7(0.1)     & 89.9(0.1)     & 0.064                                         & 0.078                                          \\
                            &                      & $\beta=0.6$ & 73.7 (0.5)     & 74.4 (0.3)    & 90.5(0.0)     & 91.8(0.2)     & 0.109                                         & 0.137                                          \\
                            &                      & $\beta=0.9$ & 74.3 (0.4)     & 74.9 (0.3)    & 91.8(0.1)     & 92.0(0.1)     & 0.153                                         & 0.162                                          \\ \cmidrule(l){2-9} 
                            & \multirow{3}{*}{c-g} & $\beta=0.3$ & 71.4 (0.7)     & 72.2 (0.7)    & 89.4(0.3)     & 89.4(0.0)     & 0.019                                         & 0.022                                          \\
                            &                      & $\beta=0.6$ & 72.4 (0.5)     & 72.7 (0.8)    & 90.1(0.1)     & 90.7(0.2)     & 0.031                                         & 0.036                                          \\
                            &                      & $\beta=0.9$ & 73.6 (0.0)     & 73.5 (0.4)    & 90.6(0.1)     & 90.8(0.1)     & 0.039                                         & 0.042                                          \\ \midrule
\multicolumn{3}{c}{FedProx}                                      & \multicolumn{2}{c}{71.1(0.5)}  & \multicolumn{2}{c}{88.5(0.0)} & \multicolumn{2}{c}{0.004}                                                                      \\
\multicolumn{3}{c}{FedEnsemble}                                  & \multicolumn{2}{c}{71.7(0.2)}  & \multicolumn{2}{c}{88.6(0.0)} & \multicolumn{2}{c}{0.004}                                                                      \\
\multicolumn{3}{c}{SCAFFOLD}                                     & \multicolumn{2}{c}{71.0(1.0)}  & \multicolumn{2}{c}{89.0(0.8)} & \multicolumn{2}{c}{0.008}                                                                      \\
\multicolumn{3}{c}{FedGen}                                       & \multicolumn{2}{c}{70.4(0.7)}  & \multicolumn{2}{c}{83.6(0.0)} & \multicolumn{2}{c}{\begin{tabular}[c]{@{}c@{}}0.009(CNN)\\      0.004(ResNet-18)\end{tabular}} \\
\multicolumn{3}{c}{FedDyn}                                       & \multicolumn{2}{c}{70.5(0.0)}  & \multicolumn{2}{c}{88.9(0.0)} & \multicolumn{2}{c}{0.004}                                                                      \\
\multicolumn{3}{c}{FedNova}                                      & \multicolumn{2}{c}{71.5(0.2)}  & \multicolumn{2}{c}{86.4(0.2)} & \multicolumn{2}{c}{0.004}                                                                      \\
\multicolumn{3}{c}{FedAvg}                                       & \multicolumn{2}{c}{69.6(0.5)}  & \multicolumn{2}{c}{88.3(0.3)} & \multicolumn{2}{c}{0.004}                                                                      \\ \bottomrule
\end{tabular}
\end{small}
\label{tab:cifar10d03communication_resource}
\end{table}

\begin{table}[htbp]
\centering
\caption{CNN and ResNet-18 accuracy results for three communication intervals across different configurations, $K=1$ on CIFAR10 dataset.}
\begin{small}
\begin{tabular}{@{}ccccccccc@{}}
\toprule
\multicolumn{3}{c}{\multirow{2}{*}{Method}}                      & \multicolumn{2}{c}{CNN}       & \multicolumn{2}{c}{ResNet-18} & \multicolumn{2}{c}{$\kappa_{g}^{t}$}                                                           \\ \cmidrule(l){4-9} 
\multicolumn{3}{c}{}                                             & \textit{Fix} & \textit{Dynamic} & \textit{Fix} & \textit{Dynamic} & \textit{Fix}              & \textit{Dynamic}             \\ \midrule
\multicolumn{3}{c}{DynamicSGD}                                   & \multicolumn{2}{c}{73.9(1.0)} & \multicolumn{2}{c}{94.5(0.0)} & \multicolumn{2}{c}{0.996}                                                                      \\ \midrule
\multirow{9}{*}{DynamicAvg} & \multirow{3}{*}{a-g} & $\beta=0.3$ & 57.1 (0.3)    & 59.3 (0.3)    & 29.1(0.0)     & 32.4(0.0)     & 0.273                                         & 0.302                                          \\
                            &                      & $\beta=0.6$ & 66.3 (1.2)    & 66.6 (0.2)    & 42.3(0.0)     & 28.5(0.0)     & 0.484                                         & 0.585                                          \\
                            &                      & $\beta=0.9$ & 68.7 (0.3)    & 70.3 (0.4)    & 52.2(0.0)     & 53.3(0.0)     & 0.621                                         & 0.669                                          \\ \cmidrule(l){2-9} 
                            & \multirow{3}{*}{b-g} & $\beta=0.3$ & 40.3 (0.9)    & 42.3 (0.1)    & 22.7(1.5)     & 22.0(0.7)     & 0.066                                         & 0.078                                          \\
                            &                      & $\beta=0.6$ & 44.1 (1.3)    & 43.9 (0.6)    & 53.5(4.5)     & 77.7(2.7)     & 0.121                                         & 0.149                                          \\
                            &                      & $\beta=0.9$ & 59.1 (3.6)    & 46.9 (1.8)    & 82.4(0.1)     & 84.0(1.0)     & 0.160                                         & 0.170                                          \\ \cmidrule(l){2-9} 
                            & \multirow{3}{*}{c-g} & $\beta=0.3$ & 33.9 (0.6)    & 35.2 (0.6)    & 17.0(0.3)     & 17.1(0.2)     & 0.020                                         & 0.022                                          \\
                            &                      & $\beta=0.6$ & 38.1 (0.3)    & 38.9 (1.2)    & 17.9(1.0)     & 19.7(0.4)     & 0.032                                         & 0.039                                          \\
                            &                      & $\beta=0.9$ & 37.9 (1.3)    & 40.2 (0.5)    & 24.0(0.3)     & 24.9(0.6)     & 0.042                                         & 0.044                                          \\ \midrule
\multicolumn{3}{c}{FedProx}                                      & \multicolumn{2}{c}{20.9(0.3)} & \multicolumn{2}{c}{15.8(0.4)} & \multicolumn{2}{c}{0.004}                                                                      \\
\multicolumn{3}{c}{FedEnsemble}                                  & \multicolumn{2}{c}{19.9(0.7)} & \multicolumn{2}{c}{12.4(1.0)} & \multicolumn{2}{c}{0.004}                                                                      \\
\multicolumn{3}{c}{SCAFFOLD}                                     & \multicolumn{2}{c}{14.9(1.7)} & \multicolumn{2}{c}{12.7(0.0)} & \multicolumn{2}{c}{0.008}                                                                      \\
\multicolumn{3}{c}{FedGen}                                       & \multicolumn{2}{c}{14.5(0.6)} & \multicolumn{2}{c}{14.8(0.0)} & \multicolumn{2}{c}{\begin{tabular}[c]{@{}c@{}}0.009(CNN)\\      0.004(ResNet-18)\end{tabular}} \\
\multicolumn{3}{c}{FedDyn}                                       & \multicolumn{2}{c}{17.3(2.2)} & \multicolumn{2}{c}{15.2(1.2)} & \multicolumn{2}{c}{0.004}                                                                      \\
\multicolumn{3}{c}{FedNova}                                      & \multicolumn{2}{c}{20.2(0.8)} & \multicolumn{2}{c}{13.3(1.3)} & \multicolumn{2}{c}{0.004}                                                                      \\
\multicolumn{3}{c}{FedAvg}                                       & \multicolumn{2}{c}{19.1(1.4)} & \multicolumn{2}{c}{11.8(0.7)} & \multicolumn{2}{c}{0.004}                                                                      \\ \bottomrule
\end{tabular}
\end{small}
\label{tab:cifar10k1communication_resource}
\end{table}

\begin{table}[htbp]
\centering
\caption{CNN and ResNet-18 accuracy results for three communication intervals across different configurations, $K=2$ on CIFAR10 dataset.}
\begin{small}
\begin{tabular}{@{}ccccccccc@{}}
\toprule
\multicolumn{3}{c}{\multirow{2}{*}{Method}}                      & \multicolumn{2}{c}{CNN}                                           & \multicolumn{2}{c}{ResNet-18}                                     & \multicolumn{2}{c}{$\kappa_{g}^{t}$}                                                       \\ \cmidrule(l){4-9} 
\multicolumn{3}{c}{}                                             & \textit{Fix} & \textit{Dynamic} & \textit{Fix} & \textit{Dynamic} & \textit{Fix}              & \textit{Dynamic}             \\ \midrule
\multicolumn{3}{c}{DynamicSGD}                                   & \multicolumn{2}{c}{77.2(0.3)}                                     & \multicolumn{2}{c}{94.9(0.0)}                                     & \multicolumn{2}{c}{0.996}                                                                  \\ \midrule
\multirow{9}{*}{DynamicAvg} & \multirow{3}{*}{a-g} & $\beta=0.3$ & 70.1(0.2)                     & 70.9(0.0)                         & 87.5(0.0)                     & 89.5(0.0)                         & 0.264                                      & 0.302                                         \\
                            &                      & $\beta=0.6$ & 72.7(0.2)                     & 75.4(0.4)                         & 91.2(0.0)                     & 92.7(0.0)                         & 0.438                                      & 0.512                                         \\
                            &                      & $\beta=0.9$ & 74.6(0.1)                     & 75.7(0.0)                         & 92.5(0.0)                     & 93.0(0.0)                         & 0.539                                      & 0.567                                         \\ \cmidrule(l){2-9} 
                            & \multirow{3}{*}{b-g} & $\beta=0.3$ & 66.9(1.1)                     & 67.4(0.0)                         & 85.5(0.0)                     & 86.5(0.2)                         & 0.070                                      & 0.078                                         \\
                            &                      & $\beta=0.6$ & 68.8(0.1)                     & 68.4(0.6)                         & 88.9(0.0)                     & 90.5(0.0)                         & 0.112                                      & 0.131                                         \\
                            &                      & $\beta=0.9$ & 68.3(0.1)                     & 68.6(1.0)                         & 91.0(0.0)                     & 91.0(0.0)                         & 0.137                                      & 0.144                                         \\ \cmidrule(l){2-9} 
                            & \multirow{3}{*}{c-g} & $\beta=0.3$ & 66.9(1.6)                     & 66.2(0.8)                         & 83.6(0.3)                     & 84.2(0.0)                         & 0.018                                      & 0.022                                         \\
                            &                      & $\beta=0.6$ & 66.7(0.3)                     & 68.2(1.1)                         & 85.7(0.0)                     & 86.8(0.0)                         & 0.030                                      & 0.035                                         \\
                            &                      & $\beta=0.9$ & 66.3(1.1)                     & 68.2(0.7)                         & 87.0(0.0)                     & 86.7(0.5)                         & 0.036                                      & 0.038                                         \\ \midrule
\multicolumn{3}{c}{FedProx}                                      & \multicolumn{2}{c}{65.4(0.2)}                                     & \multicolumn{2}{c}{81.7(0.8)}                                     & \multicolumn{2}{c}{0.004}                                                                  \\
\multicolumn{3}{c}{FedEnsemble}                                  & \multicolumn{2}{c}{66.0(0.9)}                                     & \multicolumn{2}{c}{79.1(1.2)}                                     & \multicolumn{2}{c}{0.004}                                                                  \\
\multicolumn{3}{c}{SCAFFOLD}                                     & \multicolumn{2}{c}{66.2(0.0)}                                     & \multicolumn{2}{c}{82.1(0.3)}                                     & \multicolumn{2}{c}{0.008}                                                                  \\
\multicolumn{3}{c}{FedGen}                                       & \multicolumn{2}{c}{64.3(0.1)}                                     & \multicolumn{2}{c}{78.0(0.0)}                                     & \multicolumn{2}{c}{\begin{tabular}[c]{@{}c@{}}0.009(CNN) \\ 0.004(ResNet-18)\end{tabular}} \\
\multicolumn{3}{c}{FedDyn}                                       & \multicolumn{2}{c}{63.6(0.3)}                                     & \multicolumn{2}{c}{79.3(0.0)}                                     & \multicolumn{2}{c}{0.004}                                                                  \\
\multicolumn{3}{c}{FedNova}                                      & \multicolumn{2}{c}{66.5(0.6)}                                     & \multicolumn{2}{c}{81.2(0.1)}                                     & \multicolumn{2}{c}{0.004}                                                                  \\
\multicolumn{3}{c}{FedAvg}                                       & \multicolumn{2}{c}{66.3(0.0)}                                     & \multicolumn{2}{c}{80.8(0.1)}                                     & \multicolumn{2}{c}{0.004}                                                                  \\ \bottomrule
\end{tabular}
\end{small}
\label{tab:cifar10k2communication_resource}
\end{table}
\newpage
\vspace{+7cm}
\begin{table}[htbp]
\centering
\caption{CNN and ResNet-18 accuracy results for three communication intervals across different configurations, $Dir(0.1)$ on CIFAR100 dataset.}
\begin{small}
\begin{tabular}{@{}ccccccccc@{}}
\toprule
\multicolumn{3}{c}{\multirow{2}{*}{Method}}                      & \multicolumn{2}{c}{CNN}        & \multicolumn{2}{c}{ResNet-18} & \multicolumn{2}{c}{$\kappa_{g}^{t}$}                                                           \\ \cmidrule(l){4-9} 
\multicolumn{3}{c}{}                                             & \textit{Fix} & \textit{Dynamic} & \textit{Fix} & \textit{Dynamic} & \textit{Fix}              & \textit{Dynamic}             \\ \midrule
\multicolumn{3}{c}{DynamicSGD}                                   & \multicolumn{2}{c}{42.0 (0.4)} & \multicolumn{2}{c}{75.3(0.1)} & \multicolumn{2}{c}{0.996}                                                                      \\ \midrule
\multirow{9}{*}{DynamicAvg} & \multirow{3}{*}{a-g} & $\beta=0.3$ & 35.8 (0.2)     & 35.7 (0.4)    & 60.0(0.0)     & 59.8(0.0)     & 0.307                                         & 0.302                                          \\
                            &                      & $\beta=0.6$ & 39.8 (0.6)     & 39.4 (0.0)    & 67.0(0.0)     & 69.5(0.0)     & 0.591                                         & 0.599                                          \\
                            &                      & $\beta=0.9$ & 42.5 (0.4)     & 41.5 (0.2)    & 72.4(0.0)     & 72.3(0.0)     & 0.851                                         & 0.891                                          \\ \cmidrule(l){2-9} 
                            & \multirow{3}{*}{b-g} & $\beta=0.3$ & 33.7 (0.4)     & 35.4 (0.6)    & 57.9(1.0)     & 58.9(0.2)     & 0.082                                         & 0.078                                          \\
                            &                      & $\beta=0.6$ & 37.0 (0.1)     & 37.2 (0.7)    & 63.2(0.5)     & 64.9(0.4)     & 0.148                                         & 0.153                                          \\
                            &                      & $\beta=0.9$ & 38.0 (0.0)     & 38.0 (0.1)    & 69.0(0.1)     & 69.7(0.1)     & 0.224                                         & 0.226                                          \\ \cmidrule(l){2-9} 
                            & \multirow{3}{*}{c-g} & $\beta=0.3$ & 35.2 (0.2)     & 35.3 (0.0)    & 57.8(0.9)     & 57.3(0.6)     & 0.021                                         & 0.022                                          \\
                            &                      & $\beta=0.6$ & 34.9 (0.3)     & 35.3 (0.8)    & 62.9(0.1)     & 62.5(0.3)     & 0.040                                         & 0.040                                          \\
                            &                      & $\beta=0.9$ & 35.6 (0.5)     & 35.8 (0.5)    & 66.3(0.1)     & 67.1(0.2)     & 0.056                                         & 0.058                                          \\ \midrule
\multicolumn{3}{c}{FedProx}                                      & \multicolumn{2}{c}{33.5(0.3)}  & \multicolumn{2}{c}{55.6(0.3)} & \multicolumn{2}{c}{0.004}                                                                      \\
\multicolumn{3}{c}{FedEnsemble}                                  & \multicolumn{2}{c}{33.4(0.0)}  & \multicolumn{2}{c}{54.9(0.2)} & \multicolumn{2}{c}{0.004}                                                                      \\
\multicolumn{3}{c}{SCAFFOLD}                                     & \multicolumn{2}{c}{35.8(0.5)}  & \multicolumn{2}{c}{63.5(0.6)} & \multicolumn{2}{c}{0.008}                                                                      \\
\multicolumn{3}{c}{FedGen}                                       & \multicolumn{2}{c}{33.5(0.1)}  & \multicolumn{2}{c}{46.2(0.0)} & \multicolumn{2}{c}{\begin{tabular}[c]{@{}c@{}}0.011(CNN)\\      0.004(ResNet-18)\end{tabular}} \\
\multicolumn{3}{c}{FedDyn}                                       & \multicolumn{2}{c}{31.5(0.4)}  & \multicolumn{2}{c}{55.1(0.2)} & \multicolumn{2}{c}{0.004}                                                                      \\
\multicolumn{3}{c}{FedNova}                                      & \multicolumn{2}{c}{34.0(0.1)}  & \multicolumn{2}{c}{54.0(0.1)} & \multicolumn{2}{c}{0.004}                                                                      \\
\multicolumn{3}{c}{FedAvg}                                       & \multicolumn{2}{c}{33.0(0.2)}  & \multicolumn{2}{c}{54.0(0.3)} & \multicolumn{2}{c}{0.004}                                                                      \\ \bottomrule
\end{tabular}
\end{small}
\label{tab:cifar100d01communication_resource}
\end{table}

\begin{table}[htbp]
\centering
\caption{CNN and ResNet-18 accuracy results for three communication intervals across different configurations, $Dir(0.3)$ on CIFAR100 dataset.}
\begin{small}
\begin{tabular}{@{}ccccccccc@{}}
\toprule
\multicolumn{3}{c}{\multirow{2}{*}{Method}}                      & \multicolumn{2}{c}{CNN}        & \multicolumn{2}{c}{ResNet-18} & \multicolumn{2}{c}{$\kappa_{g}^{t}$}                                                           \\ \cmidrule(l){4-9} 
\multicolumn{3}{c}{}                                             & \textit{Fix} & \textit{Dynamic} & \textit{Fix} & \textit{Dynamic} & \textit{Fix}              & \textit{Dynamic}             \\ \midrule
\multicolumn{3}{c}{DynamicSGD}                                   & \multicolumn{2}{c}{43.9 (0.0)} & \multicolumn{2}{c}{75.4(0.1)} & \multicolumn{2}{c}{0.996}                                                                      \\ \midrule
\multirow{9}{*}{DynamicAvg} & \multirow{3}{*}{a-g} & $\beta=0.3$ & 37.2 (0.1)     & 37.0 (0.1)    & 62.6(0.0)     & 63.1(0.0)     & 0.325                                         & 0.302                                          \\
                            &                      & $\beta=0.6$ & 40.2 (0.4)     & 40.4 (0.2)    & 68.4(0.0)     & 68.8(0.0)     & 0.615                                         & 0.599                                          \\
                            &                      & $\beta=0.9$ & 42.8 (0.1)     & 44.2 (0.2)    & 71.9(0.0)     & 72.9(0.0)     & 0.865                                         & 0.893                                          \\ \cmidrule(l){2-9} 
                            & \multirow{3}{*}{b-g} & $\beta=0.3$ & 35.6 (0.2)     & 36.5 (0.5)    & 61.6(0.8)     & 62.5(0.4)     & 0.083                                         & 0.078                                          \\
                            &                      & $\beta=0.6$ & 39.5 (0.1)     & 40.8 (0.1)    & 64.5(0.0)     & 67.1(0.1)     & 0.157                                         & 0.153                                          \\
                            &                      & $\beta=0.9$ & 40.9 (0.6)     & 41.4 (0.0)    & 69.9(0.2)     & 70.5(0.0)     & 0.223                                         & 0.226                                          \\ \cmidrule(l){2-9} 
                            & \multirow{3}{*}{c-g} & $\beta=0.3$ & 35.9 (0.5)     & 36.6 (0.2)    & 61.4(0.6)     & 63.1(0.0)     & 0.021                                         & 0.022                                          \\
                            &                      & $\beta=0.6$ & 36.8 (0.1)     & 38.2 (0.4)    & 63.8(0.3)     & 65.1(0.5)     & 0.039                                         & 0.040                                          \\
                            &                      & $\beta=0.9$ & 39.4 (0.2)     & 39.4 (0.1)    & 69.4(0.0)     & 68.9(0.2)     & 0.057                                         & 0.058                                          \\ \midrule
\multicolumn{3}{c}{FedProx}                                      & \multicolumn{2}{c}{34.7(0.5)}  & \multicolumn{2}{c}{58.7(0.6)} & \multicolumn{2}{c}{0.004}                                                                      \\
\multicolumn{3}{c}{FedEnsemble}                                  & \multicolumn{2}{c}{33.0(0.3)}  & \multicolumn{2}{c}{58.9(0.5)} & \multicolumn{2}{c}{0.004}                                                                      \\
\multicolumn{3}{c}{SCAFFOLD}                                     & \multicolumn{2}{c}{38.4(0.7)}  & \multicolumn{2}{c}{66.0(0.0)} & \multicolumn{2}{c}{0.008}                                                                      \\
\multicolumn{3}{c}{FedGen}                                       & \multicolumn{2}{c}{34.6(0.8)}  & \multicolumn{2}{c}{49.3(0.0)} & \multicolumn{2}{c}{\begin{tabular}[c]{@{}c@{}}0.011(CNN)\\      0.004(ResNet-18)\end{tabular}} \\
\multicolumn{3}{c}{FedDyn}                                       & \multicolumn{2}{c}{32.0(0.0)}  & \multicolumn{2}{c}{60.7(0.0)} & \multicolumn{2}{c}{0.004}                                                                      \\
\multicolumn{3}{c}{FedNova}                                      & \multicolumn{2}{c}{34.7(0.1)}  & \multicolumn{2}{c}{58.9(0.5)} & \multicolumn{2}{c}{0.004}                                                                      \\
\multicolumn{3}{c}{FedAvg}                                       & \multicolumn{2}{c}{33.6(0.7)}  & \multicolumn{2}{c}{58.3(0.2)} & \multicolumn{2}{c}{0.004}                                                                      \\ \bottomrule
\end{tabular}
\end{small}
\label{tab:cifar100d03communication_resource}
\end{table}

\begin{table}[htbp]
\centering
\caption{CNN and ResNet-18 accuracy results for three communication intervals across different configurations, $K=1$ on CIFAR100 dataset.}
\begin{small}
\begin{tabular}{@{}ccccccccc@{}}
\toprule
\multicolumn{3}{c}{\multirow{2}{*}{Method}}                      & \multicolumn{2}{c}{CNN}      & \multicolumn{2}{c}{ResNet-18} & \multicolumn{2}{c}{$\kappa_{g}^{t}$}                                                           \\ \cmidrule(l){4-9} 
\multicolumn{3}{c}{}                                             & \textit{Fix} & \textit{Dynamic} & \textit{Fix} & \textit{Dynamic} & \textit{Fix}              & \textit{Dynamic}             \\ \midrule
\multicolumn{3}{c}{DynamicSGD}                                   & \multicolumn{2}{c}{6.9(0.2)} & \multicolumn{2}{c}{7.1(0.1)}  & \multicolumn{2}{c}{0.996}                                                                      \\ \midrule
\multirow{9}{*}{DynamicAvg} & \multirow{3}{*}{a-g} & $\beta=0.3$ & 9.6 (0.4)     & 8.0 (0.4)    & 3.1(0.0)      & 2.9(0.0)      & 0.311                                         & 0.302                                          \\
                            &                      & $\beta=0.6$ & 15.1 (0.8)    & 12.1 (0.4)   & 4.8(0.0)      & 5.1(0.0)      & 0.583                                         & 0.599                                          \\
                            &                      & $\beta=0.9$ & 13.1 (0.4)    & 10.6 (0.6)   & 5.2(0.0)      & 5.4(0.0)      & 0.899                                         & 0.897                                          \\ \cmidrule(l){2-9} 
                            & \multirow{3}{*}{b-g} & $\beta=0.3$ & 4.8 (0.3)     & 4.4 (0.0)    & 3.8(0.3)      & 2.9(0.1)      & 0.079                                         & 0.078                                          \\
                            &                      & $\beta=0.6$ & 6.2 (0.1)     & 6.4 (0.3)    & 6.5(0.1)      & 3.9(0.0)      & 0.154                                         & 0.153                                          \\
                            &                      & $\beta=0.9$ & 7.3 (0.0)     & 7.2 (0.2)    & 7.2(0.6)      & 5.0(0.0)      & 0.228                                         & 0.227                                          \\ \cmidrule(l){2-9} 
                            & \multirow{3}{*}{c-g} & $\beta=0.3$ & 3.6 (0.2)     & 4.2 (0.0)    & 1.6(0.1)      & 1.7(0.1)      & 0.023                                         & 0.022                                          \\
                            &                      & $\beta=0.6$ & 4.8 (0.2)     & 4.5 (0.0)    & 3.5(0.7)      & 3.4(0.5)      & 0.039                                         & 0.040                                          \\
                            &                      & $\beta=0.9$ & 4.7 (0.2)     & 4.9 (0.3)    & 3.9(0.6)      & 4.7(0.3)      & 0.057                                         & 0.058                                          \\ \midrule
\multicolumn{3}{c}{FedProx}                                      & \multicolumn{2}{c}{1.4(0.1)} & \multicolumn{2}{c}{1.2(0.1)}  & \multicolumn{2}{c}{0.004}                                                                      \\
\multicolumn{3}{c}{FedEnsemble}                                  & \multicolumn{2}{c}{1.5(0.1)} & \multicolumn{2}{c}{1.3(0.1)}  & \multicolumn{2}{c}{0.004}                                                                      \\
\multicolumn{3}{c}{SCAFFOLD}                                     & \multicolumn{2}{c}{1.3(0.1)} & \multicolumn{2}{c}{1.2(0.1)}  & \multicolumn{2}{c}{0.008}                                                                      \\
\multicolumn{3}{c}{FedGen}                                       & \multicolumn{2}{c}{1.2(0.0)} & \multicolumn{2}{c}{1.4(0.0)}  & \multicolumn{2}{c}{\begin{tabular}[c]{@{}c@{}}0.011(CNN)\\      0.004(ResNet-18)\end{tabular}} \\
\multicolumn{3}{c}{FedDyn}                                       & \multicolumn{2}{c}{1.3(0.1)} & \multicolumn{2}{c}{1.4(0.0)}  & \multicolumn{2}{c}{0.004}                                                                      \\
\multicolumn{3}{c}{FedNova}                                      & \multicolumn{2}{c}{1.3(0.1)} & \multicolumn{2}{c}{1.4(0.1)}  & \multicolumn{2}{c}{0.004}                                                                      \\
\multicolumn{3}{c}{FedAvg}                                       & \multicolumn{2}{c}{1.3(0.1)} & \multicolumn{2}{c}{1.2(0.0)}  & \multicolumn{2}{c}{0.004}                                                                      \\ \bottomrule
\end{tabular}
\end{small}
\label{tab:cifar100k1communication_resource}
\end{table}

\begin{table}[htbp]
\centering
\caption{CNN and ResNet-18 accuracy results for three communication intervals across different configurations, $K=2$ on CIFAR100 dataset.}
\begin{small}
\begin{tabular}{@{}ccccccccc@{}}
\toprule
\multicolumn{3}{c}{\multirow{2}{*}{Method}}                      & \multicolumn{2}{c}{CNN}       & \multicolumn{2}{c}{ResNet-18} & \multicolumn{2}{c}{$\kappa_{g}^{t}$}                                                           \\ \cmidrule(l){4-9} 
\multicolumn{3}{c}{}                                             & \textit{Fix} & \textit{Dynamic} & \textit{Fix} & \textit{Dynamic} & \textit{Fix}              & \textit{Dynamic}             \\ \midrule
\multicolumn{3}{c}{DynamicSGD}                                   & \multicolumn{2}{c}{27.0(0.5)} & \multicolumn{2}{c}{71.1(0.1)} & \multicolumn{2}{c}{0.996}                                                                      \\ \midrule
\multirow{9}{*}{DynamicAvg} & \multirow{3}{*}{a-g} & $\beta=0.3$ & 25.5 (0.0)    & 24.9 (0.1)    & 34.9(0.0)     & 36.1(0.0)     & 0.308                                         & 0.302                                          \\
                            &                      & $\beta=0.6$ & 27.5 (1.2)    & 25.4 (0.3)    & 53.3(0.0)     & 53.1(0.0)     & 0.594                                         & 0.599                                          \\
                            &                      & $\beta=0.9$ & 28.9 (0.2)    & 29.4 (0.1)    & 66.6(0.0)     & 63.2(0.0)     & 0.905                                         & 0.897                                          \\ \cmidrule(l){2-9} 
                            & \multirow{3}{*}{b-g} & $\beta=0.3$ & 21.8 (0.2)    & 21.7 (0.2)    & 26.9(0.2)     & 26.0(0.5)     & 0.083                                         & 0.078                                          \\
                            &                      & $\beta=0.6$ & 20.8 (0.2)    & 17.2 (0.1)    & 42.2(0.2)     & 44.7(0.3)     & 0.150                                         & 0.153                                          \\
                            &                      & $\beta=0.9$ & 16.5 (1.1)    & 14.0 (0.1)    & 51.6(0.4)     & 50.1(0.5)     & 0.226                                         & 0.227                                          \\ \cmidrule(l){2-9} 
                            & \multirow{3}{*}{c-g} & $\beta=0.3$ & 20.6 (0.2)    & 19.7 (0.1)    & 24.5(0.5)     & 24.2(0.0)     & 0.022                                         & 0.022                                          \\
                            &                      & $\beta=0.6$ & 21.3 (0.1)    & 17.5 (0.1)    & 31.5(1.5)     & 33.7(1.0)     & 0.041                                         & 0.040                                          \\
                            &                      & $\beta=0.9$ & 15.9 (0.3)    & 14.3 (0.2)    & 38.3(0.4)     & 38.8(1.5)     & 0.058                                         & 0.058                                          \\ \midrule
\multicolumn{3}{c}{FedProx}                                      & \multicolumn{2}{c}{18.1(0.3)} & \multicolumn{2}{c}{21.5(1.0)} & \multicolumn{2}{c}{0.004}                                                                      \\
\multicolumn{3}{c}{FedEnsemble}                                  & \multicolumn{2}{c}{18.0(0.3)} & \multicolumn{2}{c}{18.2(0.1)} & \multicolumn{2}{c}{0.004}                                                                      \\
\multicolumn{3}{c}{SCAFFOLD}                                     & \multicolumn{2}{c}{23.8(0.1)} & \multicolumn{2}{c}{15.7(3.7)} & \multicolumn{2}{c}{0.008}                                                                      \\
\multicolumn{3}{c}{FedGen}                                       & \multicolumn{2}{c}{17.3(0.3)} & \multicolumn{2}{c}{19.6(0.4)} & \multicolumn{2}{c}{\begin{tabular}[c]{@{}c@{}}0.011(CNN)\\      0.004(ResNet-18)\end{tabular}} \\
\multicolumn{3}{c}{FedDyn}                                       & \multicolumn{2}{c}{16.4(0.0)} & \multicolumn{2}{c}{15.3(0.3)} & \multicolumn{2}{c}{0.004}                                                                      \\
\multicolumn{3}{c}{FedNova}                                      & \multicolumn{2}{c}{17.7(0.6)} & \multicolumn{2}{c}{20.5(0.5)} & \multicolumn{2}{c}{0.004}                                                                      \\
\multicolumn{3}{c}{FedAvg}                                       & \multicolumn{2}{c}{18.4(0.6)} & \multicolumn{2}{c}{20.4(0.4)} & \multicolumn{2}{c}{0.004}                                                                      \\ \bottomrule
\end{tabular}
\end{small}
\label{tab:cifar100k2communication_resource}
\vspace{-0.4cm}
\end{table}
\vspace{+10cm}
\begin{table}[htbp]
\centering
\caption{CNN accuracy results for three communication intervals across different configurations, $Dir(0.01)$ on FEMNIST dataset.}
\begin{small}
\begin{tabular}{@{}ccccccc@{}}
\toprule
\multicolumn{3}{c}{\multirow{2}{*}{Method}}                      & \multicolumn{2}{c}{CNN}        & \multicolumn{2}{c}{$\kappa_{g}^{t}$}                                                           \\ \cmidrule(l){4-7} 
\multicolumn{3}{c}{}                                             & \textit{Fix}            & \textit{Dynamic}       & \textit{Fix}                               & \textit{Dynamic}                             \\ \midrule
\multicolumn{3}{c}{DynamicSGD}                                   & \multicolumn{2}{c}{75.4 (1.8)} & \multicolumn{2}{c}{0.996}                                                                      \\ \midrule
\multirow{9}{*}{DynamicAvg} & \multirow{3}{*}{a-g} & $\beta=0.3$ & 60.3(1.7)      & 64.4(0.3)     & 0.281                               & 0.299                                                    \\
                            &                      & $\beta=0.6$ & 67.0(1.7)      & 70.4(0.3)     & 0.571                               & 0.590                                                    \\
                            &                      & $\beta=0.9$ & 70.3(0.0)      & 73.3(0.2)     & 0.817                               & 0.865                                                    \\ \cmidrule(l){2-7} 
                            & \multirow{3}{*}{b-g} & $\beta=0.3$ & 60.6(2.5)      & 63.3(0.2)     & 0.077                               & 0.077                                                    \\
                            &                      & $\beta=0.6$ & 60.4(1.2)      & 61.3(1.5)     & 0.142                               & 0.149                                                    \\
                            &                      & $\beta=0.9$ & 59.6(1.3)      & 59.9(2.5)     & 0.201                               & 0.217                                                    \\ \cmidrule(l){2-7} 
                            & \multirow{3}{*}{c-g} & $\beta=0.3$ & 60.9(0.5)      & 60.7(0.5)     & 0.020                               & 0.021                                                    \\
                            &                      & $\beta=0.6$ & 60.8(3.0)      & 61.1(0.4)     & 0.035                               & 0.039                                                    \\
                            &                      & $\beta=0.9$ & 57.3(1.3)      & 59.2(1.3)     & 0.053                               & 0.055                                                    \\ \midrule
\multicolumn{3}{c}{FedProx}                                      & \multicolumn{2}{c}{50.8(0.0)}  & \multicolumn{2}{c}{0.004}                                                                      \\
\multicolumn{3}{c}{FedEnsemble}                                  & \multicolumn{2}{c}{31.1(0.0)}  & \multicolumn{2}{c}{0.004}                                                                      \\
\multicolumn{3}{c}{SCAFFOLD}                                     & \multicolumn{2}{c}{N/A}   & \multicolumn{2}{c}{0.008}                                                                      \\
\multicolumn{3}{c}{FedGen}                                       & \multicolumn{2}{c}{N/A}        & \multicolumn{2}{c}{0.010} \\
\multicolumn{3}{c}{FedDyn}                                       & \multicolumn{2}{c}{N/A}        & \multicolumn{2}{c}{0.004}                                                                      \\
\multicolumn{3}{c}{FedNova}                                      & \multicolumn{2}{c}{45.5(0.0)}  & \multicolumn{2}{c}{0.004}                                                                      \\
\multicolumn{3}{c}{FedAvg}                                       & \multicolumn{2}{c}{37.7(0.8)}  & \multicolumn{2}{c}{0.004}                                                                      \\ \bottomrule
\end{tabular}
\end{small}
\label{tab:femnistd001communication_resource}
\end{table}

\begin{table}[htbp]
\centering
\caption{CNN accuracy results for three communication intervals across different configurations, $Dir(0.1)$ on FEMNIST dataset.}
\begin{small}
\begin{tabular}{@{}ccccccc@{}}
\toprule
\multicolumn{3}{c}{\multirow{2}{*}{Method}}                      & \multicolumn{2}{c}{CNN}        & \multicolumn{2}{c}{$\kappa_{g}^{t}$}                                                           \\ \cmidrule(l){4-7} 
\multicolumn{3}{c}{}                                             & \textit{Fix}            & \textit{Dynamic}       & \textit{Fix}                               & \textit{Dynamic}                             \\ \midrule
\multicolumn{3}{c}{DynamicSGD}                                   & \multicolumn{2}{c}{85.7 (0.0)} & \multicolumn{2}{c}{0.996}                                                                      \\ \midrule
\multirow{9}{*}{DynamicAvg} & \multirow{3}{*}{a-g} & $\beta=0.3$ & 84.8 (0.0)     & 85.4 (0.1)    & 0.268                               & 0.303                                                    \\
                            &                      & $\beta=0.6$ & 85.0 (0.1)     & 85.4 (0.2)    & 0.538                               & 0.599                                                    \\
                            &                      & $\beta=0.9$ & 86.0 (0.1)     & 86.2 (0.0)    & 0.773                               & 0.819                                                    \\ \cmidrule(l){2-7} 
                            & \multirow{3}{*}{b-g} & $\beta=0.3$ & 84.9 (0.1)     & 84.8 (0.1)    & 0.074                               & 0.078                                                    \\
                            &                      & $\beta=0.6$ & 84.5 (0.2)     & 84.7 (0.1)    & 0.137                               & 0.151                                                    \\
                            &                      & $\beta=0.9$ & 85.3 (0.0)     & 85.1 (0.2)    & 0.195                               & 0.205                                                    \\ \cmidrule(l){2-7} 
                            & \multirow{3}{*}{c-g} & $\beta=0.3$ & 84.5 (0.1)     & 84.6 (0.0)    & 0.020                               & 0.022                                                    \\
                            &                      & $\beta=0.6$ & 85.1 (0.3)     & 84.9 (0.0)    & 0.035                               & 0.039                                                    \\
                            &                      & $\beta=0.9$ & 84.9 (0.0)     & 85.3 (0.1)    & 0.051                               & 0.052                                                    \\ \midrule
\multicolumn{3}{c}{FedProx}                                      & \multicolumn{2}{c}{84.1(0.5)}  & \multicolumn{2}{c}{0.004}                                                                      \\
\multicolumn{3}{c}{FedEnsemble}                                  & \multicolumn{2}{c}{83.6(0.2)}  & \multicolumn{2}{c}{0.004}                                                                      \\
\multicolumn{3}{c}{SCAFFOLD}                                     & \multicolumn{2}{c}{N/A}        & \multicolumn{2}{c}{0.008}                                                                      \\
\multicolumn{3}{c}{FedGen}                                       & \multicolumn{2}{c}{N/A}        & \multicolumn{2}{c}{0.010} \\
\multicolumn{3}{c}{FedDyn}                                       & \multicolumn{2}{c}{N/A}        & \multicolumn{2}{c}{0.004}                                                                      \\
\multicolumn{3}{c}{FedNova}                                      & \multicolumn{2}{c}{83.9(0.0)}  & \multicolumn{2}{c}{0.004}                                                                      \\
\multicolumn{3}{c}{FedAvg}                                       & \multicolumn{2}{c}{83.8(0.3)}  & \multicolumn{2}{c}{0.004}                                                                      \\ \bottomrule
\end{tabular}
\end{small}
\label{tab:femnistd01communication_resource}
\end{table}

\begin{table}[htbp]
\centering
\caption{CNN accuracy results for three communication intervals across different configurations, $Dir(0.3)$ on FEMNIST dataset.}
\begin{small}
\begin{tabular}{@{}ccccccc@{}}
\toprule
\multicolumn{3}{c}{\multirow{2}{*}{Method}}                      & \multicolumn{2}{c}{CNN}        & \multicolumn{2}{c}{$\kappa_{g}^{t}$}                                                           \\ \cmidrule(l){4-7} 
\multicolumn{3}{c}{}                                             & \textit{Fix}            & \textit{Dynamic}       & \textit{Fix}                               & \textit{Dynamic}                             \\ \midrule
\multicolumn{3}{c}{DynamicSGD}                                   & \multicolumn{2}{c}{87.6 (0.0)} & \multicolumn{2}{c}{0.996}                                                                      \\ \midrule
\multirow{9}{*}{DynamicAvg} & \multirow{3}{*}{a-g} & $\beta=0.3$ & 86.7 (0.0)     & 86.7 (0.0)    & 0.297                                         & 0.303                                          \\
                            &                      & $\beta=0.6$ & 87.1 (0.0)     & 87.2 (0.1)    & 0.530                                         & 0.599                                          \\
                            &                      & $\beta=0.9$ & 87.3 (0.0)     & 87.4 (0.0)    & 0.762                                         & 0.813                                          \\ \cmidrule(l){2-7} 
                            & \multirow{3}{*}{b-g} & $\beta=0.3$ & 86.7 (0.0)     & 86.7 (0.0)    & 0.077                                         & 0.078                                          \\
                            &                      & $\beta=0.6$ & 86.8 (0.0)     & 87.0 (0.0)    & 0.144                                         & 0.151                                          \\
                            &                      & $\beta=0.9$ & 87.0 (0.0)     & 86.9 (0.0)    & 0.194                                         & 0.204                                          \\ \cmidrule(l){2-7} 
                            & \multirow{3}{*}{c-g} & $\beta=0.3$ & 86.5 (0.1)     & 86.5 (0.0)    & 0.020                                         & 0.022                                          \\
                            &                      & $\beta=0.6$ & 86.7 (0.1)     & 86.7 (0.0)    & 0.038                                         & 0.039                                          \\
                            &                      & $\beta=0.9$ & 86.9 (0.1)     & 86.7 (0.0)    & 0.049                                         & 0.052                                          \\ \midrule
\multicolumn{3}{c}{FedProx}                                      & \multicolumn{2}{c}{85.1(0.1)}  & \multicolumn{2}{c}{0.004}                                                                      \\
\multicolumn{3}{c}{FedEnsemble}                                  & \multicolumn{2}{c}{83.8(0.0)}  & \multicolumn{2}{c}{0.004}                                                                      \\
\multicolumn{3}{c}{SCAFFOLD}                                     & \multicolumn{2}{c}{N/A}        & \multicolumn{2}{c}{0.008}                                                                      \\
\multicolumn{3}{c}{FedGen}                                       & \multicolumn{2}{c}{N/A}        & \multicolumn{2}{c}{0.010} \\
\multicolumn{3}{c}{FedDyn}                                       & \multicolumn{2}{c}{N/A}        & \multicolumn{2}{c}{0.004}                                                                      \\
\multicolumn{3}{c}{FedNova}                                      & \multicolumn{2}{c}{85.7(0.0)}  & \multicolumn{2}{c}{0.004}                                                                      \\
\multicolumn{3}{c}{FedAvg}                                       & \multicolumn{2}{c}{85.5(0.3)}  & \multicolumn{2}{c}{0.004}                                                                      \\ \bottomrule
\end{tabular}
\end{small}
\label{tab:femnistd03communication_resource}
\end{table}

\clearpage
\subsection{Communication intervals}
\label{appendix:Communication_intervals}


\begin{table}[htbp]
\caption{Accuracy results comparing various communication interval combinations, and the standardized server total communication cost $\kappa_{g}^{t}$, using CNN under the CIFAR-10, \textit{Fix-0.3} configuration. 'a' signifies that all active clients train for 'a' intervals, while 'a*' indicates training exclusively involving high-frequency group clients. The same pattern is replicated for intervals b, c, and d.}
\vspace{-0.2cm}
\centering
\begin{small}
\begin{tabular}{@{}ccccccccc@{}}
\toprule
\multirow{2}{*}{\begin{tabular}[c]{@{}c@{}}Communication   \\ interval\end{tabular}} & \multicolumn{2}{c}{$Dir(0.1)$} & \multicolumn{2}{c}{$Dir(0.3)$} & \multicolumn{2}{c}{$K=1$} & \multicolumn{2}{c}{$K=2$} \\ \cmidrule(l){2-9} 
                                                                                     & Accuracy     & $\kappa_{g}$    & Accuracy     & $\kappa_{g}$    & Accuracy   & $\kappa_{g}$ & Accuracy   & $\kappa_{g}$ \\ \midrule
a                                                                                    & 75.4(0.1)    & 0.996      & 77.3(0.1)    & 0.996      & 73.9(1.0)  & 0.996   & 77.2(0.3)  & 0.996   \\
a-b                                                                                  & 72.7(0.5)    & 0.418      & 75.6(0.3)    & 0.436      & 65.1(0.0)  & 0.457   & 74.2(0.3)  & 0.450   \\
a-c                                                                                  & 70.4(0.6)    & 0.268      & 74.9(0.1)    & 0.290      & 64.8(0.7)  & 0.317   & 72.5(0.5)  & 0.308   \\
a-d                                                                                  & 69.7(0.4)    & 0.243      & 74.8(0.1)    & 0.266      & 61.5(0.5)  & 0.293   & 70.9(0.5)  & 0.285   \\
a-e                                                                                  & 68.5(0.1)    & 0.231      & 73.8(0.1)    & 0.254      & 63.2(0.7)  & 0.282   & 70.1(1.0)  & 0.273   \\
a-f                                                                                  & 67.5(0.1)    & 0.224      & 72.9(0.0)    & 0.248      & 60.1(1.1)  & 0.276   & 70.1(0.5)  & 0.267   \\
a-g                                                                                  & 66.9(0.3)    & 0.221      & 71.3(0.2)    & 0.245      & 57.2(0.2)  & 0.273   & 69.9(0.1)  & 0.264   \\
a*                                                                                   & 63.5(0.3)    & 0.218      & 69.4(0.3)    & 0.242      & 59.8(0.4)  & 0.270   & 66.5(0.1)  & 0.261   \\ \midrule
b                                                                                    & 69.7(0.0)    & 0.252      & 75.7(0.0)    & 0.252      & 65.0(0.0)  & 0.252   & 72.7(0.4)  & 0.252   \\
b-c                                                                                  & 69.7(0.7)    & 0.111      & 74.5(0.0)    & 0.110      & 55.4(0.3)  & 0.111   & 71.4(0.0)  & 0.114   \\
b-d                                                                                  & 67.8(0.1)    & 0.087      & 73.5(0.5)    & 0.085      & 52.9(1.3)  & 0.087   & 68.2(0.2)  & 0.091   \\
b-e                                                                                  & 67.6(0.4)    & 0.075      & 73.3(0.2)    & 0.073      & 51.9(0.6)  & 0.075   & 68.2(0.1)  & 0.079   \\
b-f                                                                                  & 66.9(0.8)    & 0.069      & 72.1(0.5)    & 0.067      & 44.5(1.1)  & 0.069   & 68.1(0.8)  & 0.073   \\
b-g                                                                                  & 65.8(0.2)    & 0.066      & 72.2(0.2)    & 0.064      & 39.6(0.4)  & 0.066   & 67.2(1.3)  & 0.070   \\
b*                                                                                   & 57.9(1.5)    & 0.063      & 65.9(1.0)    & 0.061      & 49.8(0.8)  & 0.062   & 60.9(0.6)  & 0.067   \\ \midrule
c                                                                                    & 70.3(0.4)    & 0.064      & 73.8(0.6)    & 0.064      & 54.4(0.1)  & 0.064   & 70.6(0.0)  & 0.064   \\
c-d                                                                                  & 68.8(0.1)    & 0.040      & 73.8(0.0)    & 0.040      & 50.7(0.1)  & 0.040   & 68.3(0.5)  & 0.040   \\
c-e                                                                                  & 67.1(0.4)    & 0.027      & 73.5(0.2)    & 0.028      & 45.0(0.4)  & 0.029   & 66.1(0.2)  & 0.028   \\
c-f                                                                                  & 65.7(0.1)    & 0.021      & 71.7(0.6)    & 0.022      & 41.9(0.0)  & 0.023   & 66.6(1.0)  & 0.021   \\
c-g                                                                                  & 65.0(0.1)    & 0.018      & 72.0(0.3)    & 0.019      & 32.6(0.3)  & 0.020   & 67.9(0.9)  & 0.018   \\
c*                                                                                   & 55.5(2.1)    & 0.015      & 64.0(0.6)    & 0.016      & 40.3(1.3)  & 0.017   & 59.6(2.5)  & 0.015   \\ \midrule
d                                                                                    & 67.6(0.3)    & 0.032      & 73.2(0.2)    & 0.032      & 49.5(0.5)  & 0.032   & 67.8(0.0)  & 0.032   \\
d-e                                                                                  & 66.0(0.4)    & 0.020      & 72.9(0.1)    & 0.020      & 42.3(0.4)  & 0.020   & 65.6(0.4)  & 0.020   \\
d-f                                                                                  & 65.3(0.0)    & 0.014      & 72.0(0.4)    & 0.014      & 40.4(1.9)  & 0.014   & 66.5(0.7)  & 0.014   \\
d-g                                                                                  & 65.4(0.3)    & 0.011      & 71.5(0.1)    & 0.011      & 31.7(1.2)  & 0.011   & 64.9(0.6)  & 0.011   \\
d*                                                                                   & 54.7(0.3)    & 0.008      & 62.8(0.4)    & 0.008      & 38.5(1.0)  & 0.009   & 57.9(0.3)  & 0.008   \\ \midrule
FedProx                                                                              & 64.8(0.8)    & 0.004      & 71.1(0.5)    & 0.004      & 20.9(0.3)  & 0.004   & 65.4(0.2)  & 0.004   \\
FedEnsemble                                                                          & 65.3(0.3)    & 0.004      & 71.7(0.2)    & 0.004      & 19.9(0.7)  & 0.004   & 66.0(0.9)  & 0.004   \\
SCAFFOLD                                                                             & 52.6(1.8)    & 0.008      & 71.0(1.0)    & 0.008      & 14.9(1.7)  & 0.008   & 66.2(0.0)  & 0.008   \\
FedGen                                                                               & 62.4(1.8)    & 0.009      & 70.4(0.7)    & 0.009      & 14.5(0.6)  & 0.009   & 64.3(0.1)  & 0.009   \\
FedDyn                                                                               & 66.2(0.0)    & 0.004      & 70.5(0.0)    & 0.004      & 17.3(2.2)  & 0.004   & 63.6(0.3)  & 0.004   \\
FedNova                                                                              & 63.9(0.6)    & 0.004      & 71.5(0.2)    & 0.004      & 20.2(0.8)  & 0.004   & 66.5(0.6)  & 0.004   \\
FedAvg                                                                               & 65.0(0.5)    & 0.004      & 69.6(0.5)    & 0.004      & 19.1(1.4)  & 0.004   & 66.3(0.0)  & 0.004   \\ \bottomrule
\end{tabular}
\end{small}
\label{tab:freq_ablation_cifar10_appendix}
\end{table}

\begin{figure}[h]
\centering
\includegraphics[width=1\linewidth]{Figures/freq_ablation/cifar10_freq_ablation_trend.pdf}
\vspace{-0.8cm}
\caption{This figure illustrates the trend from Table~\ref{tab:freq_ablation_cifar10_appendix}, comparing the high-frequency group only, extended intervals, and DynamicSGD, using a CNN with CIFAR-10. For the interval 'a', we depict a, a-b, a-c, a-d, a-e, a-g, and a*. The same pattern is replicated for intervals b, c, and d.}
\label{fig:cirfar10_ablation_trend_appendix}
\end{figure}
\begin{figure}[htbp]
\centering
{\includegraphics[width=1\linewidth]{Figures/freq_ablation/cifar10_freq_ablation_d01.pdf}}
\vspace{-0.8cm}
\caption{Learning curves for all communication interval combinations in Table~\ref{tab:freq_ablation_cifar10_appendix}, $Dir(0.1)$ setting.}
\label{fig:freq_ablation_cifar10_d01}
\end{figure}
\begin{figure}[htbp]
\centering
{\includegraphics[width=1\linewidth]{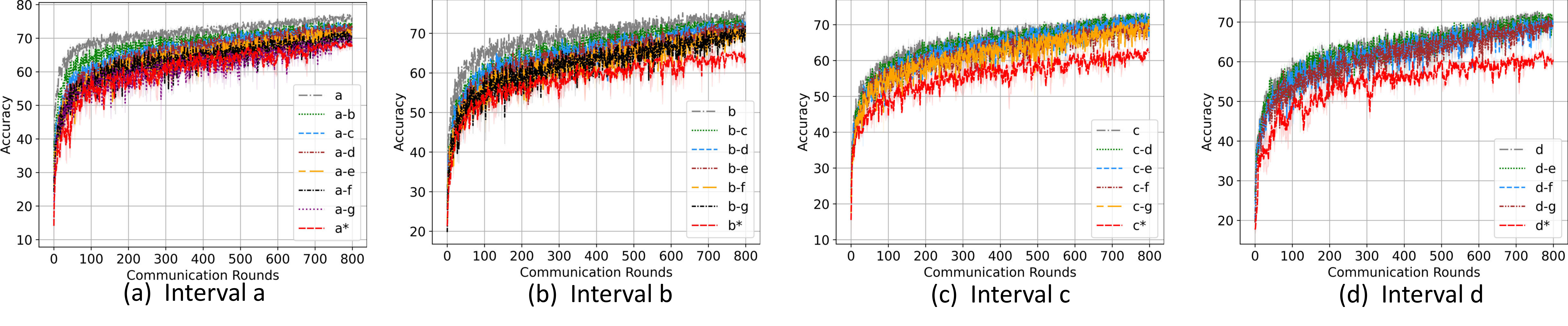}}
\vspace{-0.8cm}
\caption{Learning curves for all communication interval combinations in Table~\ref{tab:freq_ablation_cifar10_appendix}, $Dir(0.3)$ setting.}
\label{fig:freq_ablation_cifar10_d03}
\end{figure}
\begin{figure}[htbp]
\centering
{\includegraphics[width=1\linewidth]{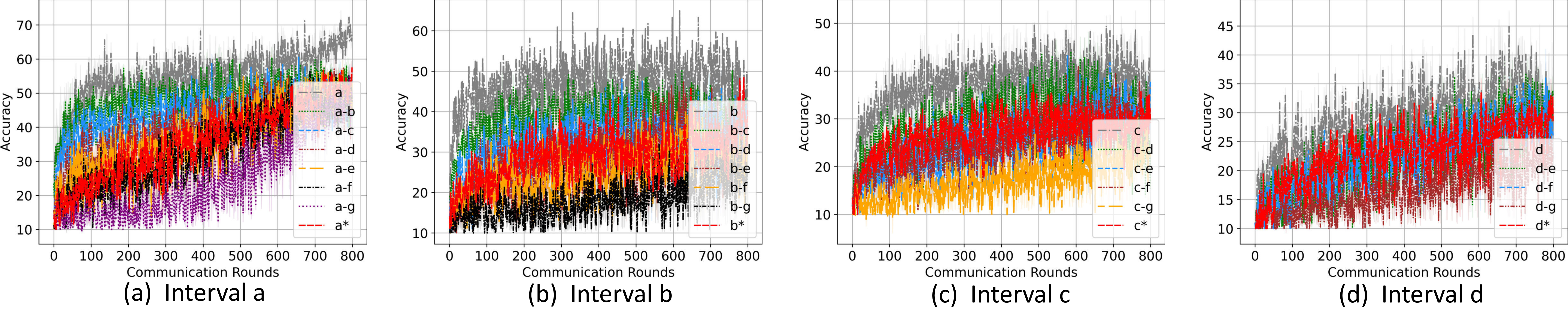}}
\vspace{-0.8cm}
\caption{Learning curves for all communication interval combinations in Table~\ref{tab:freq_ablation_cifar10_appendix}, $K=1$ setting.}
\label{fig:freq_ablation_cifar10_l1}
\end{figure}
\begin{figure}[htbp]
\centering
{\includegraphics[width=1\linewidth]{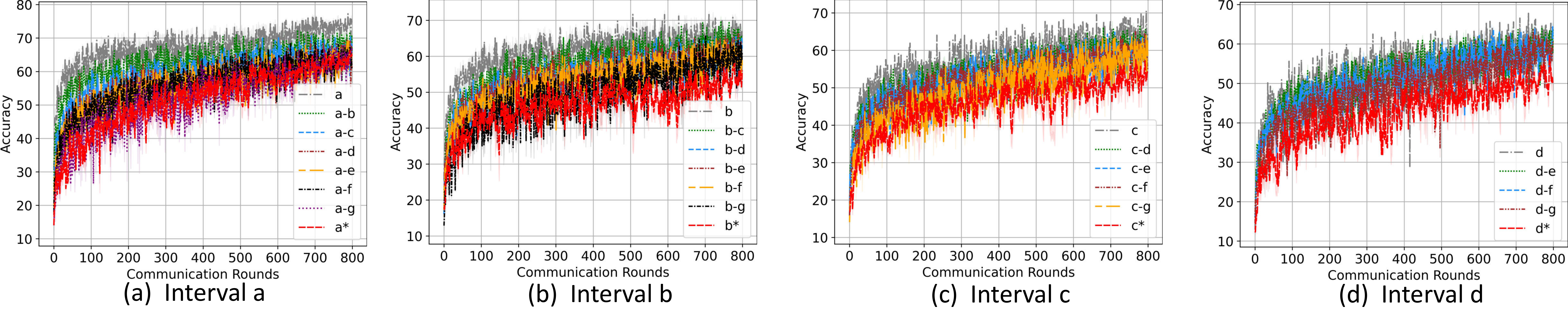}}
\vspace{-0.8cm}
\caption{Learning curves for all communication interval combinations in Table~\ref{tab:freq_ablation_cifar10_appendix}, $K=2$ setting.}
\label{fig:freq_ablation_cifar10_l2}
\end{figure}

\clearpage
\begin{table}[htbp]
\caption{Accuracy results comparing various communication interval combinations, and the standardized server total communication cost $\kappa_{g}^{t}$, using CNN under the CIFAR-100, \textit{Fix-0.3} configuration. 'a' signifies that all active clients train for 'a' intervals, while 'a*' indicates training exclusively involving high-frequency group clients. The same pattern is replicated for intervals b, c, and d.}
\centering
\begin{small}
\begin{tabular}{@{}ccccccccc@{}}
\toprule
\multirow{2}{*}{\begin{tabular}[c]{@{}c@{}}Communication   \\ interval\end{tabular}} & \multicolumn{2}{c}{$Dir(0.1)$} & \multicolumn{2}{c}{$Dir(0.3)$} & \multicolumn{2}{c}{$K=1$}    & \multicolumn{2}{c}{$K=2$}    \\ \cmidrule(l){2-9} 
                                                                                     & Accuracy   & $\kappa_{g}^{t}$  & Accuracy   & $\kappa_{g}^{t}$  & Accuracy  & $\kappa_{g}^{t}$ & Accuracy  & $\kappa_{g}^{t}$ \\ \midrule
a                                                                                    & 42.6  & 0.996             & 44.0  & 0.996             & 6.7  & 0.996            & 27.6 & 0.996            \\
a-b                                                                                  & 38.3  & 0.482             & 43.3  & 0.496             & 14.5 & 0.485            & 22.9 & 0.482            \\
a-c                                                                                  & 38.1  & 0.348             & 41.8  & 0.366             & 7.2  & 0.352            & 17.1 & 0.349            \\
a-d                                                                                  & 36.7  & 0.326             & 40.9  & 0.344             & 8.0  & 0.330            & 17.0 & 0.327            \\
a-e                                                                                  & 37.4  & 0.315             & 39.6  & 0.334             & 8.2  & 0.319            & 19.8 & 0.316            \\
a-f                                                                                  & 36.4  & 0.309             & 39.0  & 0.328             & 9.6  & 0.313            & 23.5 & 0.310            \\
a-g                                                                                  & 35.5  & 0.307             & 37.1  & 0.325             & 9.0  & 0.311            & 25.5 & 0.308            \\
a*                                                                                   & 29.8  & 0.304             & 31.5  & 0.323             & 13.5 & 0.308            & 20.3 & 0.305            \\ \midrule
b                                                                                    & 37.9  & 0.252             & 43.1  & 0.252             & 10.5 & 0.252            & 22.3 & 0.252            \\
b-c                                                                                  & 36.7  & 0.123             & 40.6  & 0.124             & 6.4  & 0.121            & 14.7 & 0.124            \\
b-d                                                                                  & 37.4  & 0.101             & 40.3  & 0.102             & 5.5  & 0.099            & 14.9 & 0.102            \\
b-e                                                                                  & 38.2  & 0.090             & 38.7  & 0.091             & 5.5  & 0.088            & 17.5 & 0.091            \\
b-f                                                                                  & 35.5  & 0.085             & 37.7  & 0.086             & 5.5  & 0.082            & 21.2 & 0.086            \\
b-g                                                                                  & 34.2  & 0.082             & 36.6  & 0.083             & 5.1  & 0.079            & 22.1 & 0.083            \\
b*                                                                                   & 29.3  & 0.079             & 30.1  & 0.080             & 9.8  & 0.076            & 18.2 & 0.080            \\ \midrule
c                                                                                    & 37.2  & 0.064             & 39.6  & 0.064             & 5.2  & 0.064            & 14.7 & 0.064            \\
c-d                                                                                  & 37.1  & 0.041             & 40.9  & 0.041             & 4.2  & 0.042            & 14.9 & 0.042            \\
c-e                                                                                  & 35.4  & 0.029             & 38.7  & 0.030             & 3.9  & 0.031            & 17.5 & 0.030            \\
c-f                                                                                  & 34.0  & 0.023             & 36.3  & 0.024             & 3.8  & 0.026            & 19.3 & 0.025            \\
c-g                                                                                  & 35.0  & 0.021             & 37.3  & 0.021             & 4.1  & 0.023            & 20.4 & 0.022            \\
c*                                                                                   & 34.1  & 0.018             & 36.1  & 0.018             & 4.1  & 0.020            & 21.8 & 0.019            \\ \midrule
d                                                                                    & 35.5  & 0.032             & 39.9  & 0.032             & 3.8  & 0.032            & 15.8 & 0.032            \\
d-e                                                                                  & 36.4  & 0.021             & 38.2  & 0.021             & 3.4  & 0.021            & 17.3 & 0.021            \\
d-f                                                                                  & 35.0  & 0.015             & 37.4  & 0.015             & 2.9  & 0.015            & 19.2 & 0.015            \\
d-g                                                                                  & 33.6  & 0.012             & 35.7  & 0.013             & 2.7  & 0.013            & 19.8 & 0.013            \\
d*                                                                                   & 34.2  & 0.009             & 33.3  & 0.010             & 2.6  & 0.010            & 21.1 & 0.010            \\ \midrule
FedProx                                                                              & 33.5(0.3)  & 0.004             & 34.7(0.5)  & 0.004             & 1.4(0.1)  & 0.004            & 18.1(0.3) & 0.004            \\
FedEnsemble                                                                          & 33.4(0.0)  & 0.004             & 33.0(0.3)  & 0.004             & 1.5(0.1)  & 0.004            & 18.0(0.3) & 0.004            \\
SCAFFOLD                                                                             & 35.8(0.5)  & 0.008             & 38.4(0.7)  & 0.008             & 1.3(0.1)  & 0.008            & 23.8(0.1) & 0.008            \\
FedGen                                                                               & 33.5(0.1)  & 0.011             & 34.6(0.8)  & 0.011             & 1.2(0.0)  & 0.011            & 17.3(0.3) & 0.011            \\
FedDyn                                                                               & 31.5(0.4)  & 0.004             & 32.0(0.0)  & 0.004             & 1.3(0.1)  & 0.004            & 16.4(0.0) & 0.004            \\
FedNova                                                                              & 34.0(0.1)  & 0.004             & 34.7(0.1)  & 0.004             & 1.3(0.1)  & 0.004            & 17.7(0.6) & 0.004            \\
FedAvg                                                                               & 33.0(0.2)  & 0.004             & 33.6(0.7)  & 0.004             & 1.3(0.1)  & 0.004            & 18.4(0.6) & 0.004            \\ \bottomrule
\end{tabular}
\end{small}
\label{tab:freq_ablation_cifar100_appendix}
\end{table}

\begin{figure}[h]
\centering
\includegraphics[width=1\linewidth]{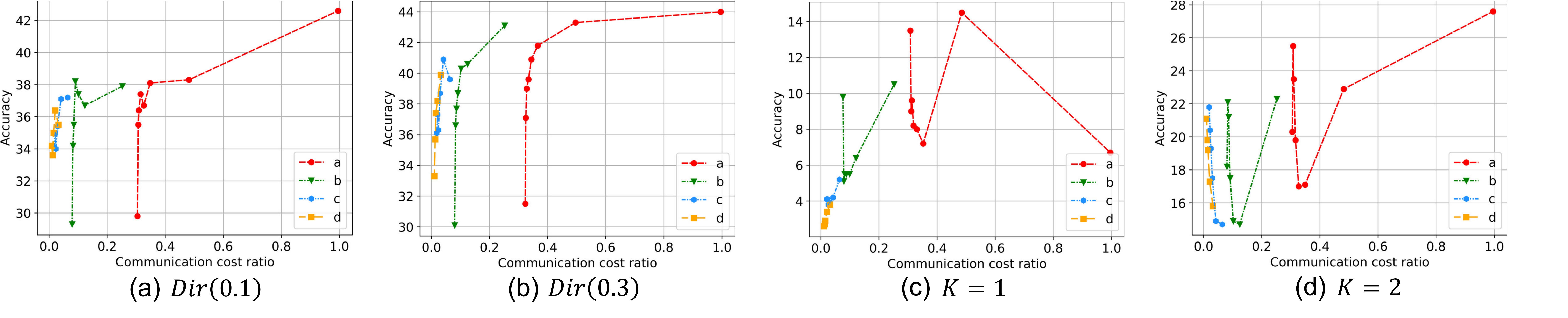}
\vspace{-0.85cm}
\caption{This figure illustrates the trend from Table~\ref{tab:freq_ablation_cifar100_appendix}, comparing the high-frequency group only, extended intervals, and DynamicSGD, using a CNN with CIFAR-100. For the interval 'a', we depict a, a-b, a-c, a-d, a-e, a-g, and a*. The same pattern is replicated for intervals b, c, and d.}
\label{fig:cirfar100_ablation_trend_appendix}
\end{figure}
\begin{figure}[htbp]
\centering
{\includegraphics[width=1\linewidth]{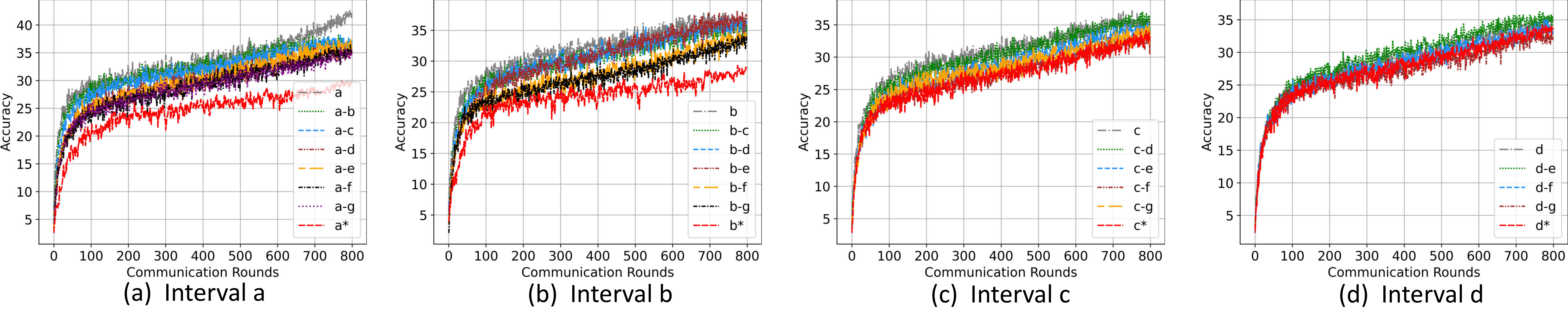}}
\vspace{-1cm}
\caption{Learning curves for all communication interval combinations in Table~\ref{tab:freq_ablation_cifar100_appendix}, $Dir(0.1)$ setting.}
\label{fig:freq_ablation_cifar100_d01}
\end{figure}
\begin{figure}[htbp]
\centering
{\includegraphics[width=1\linewidth]{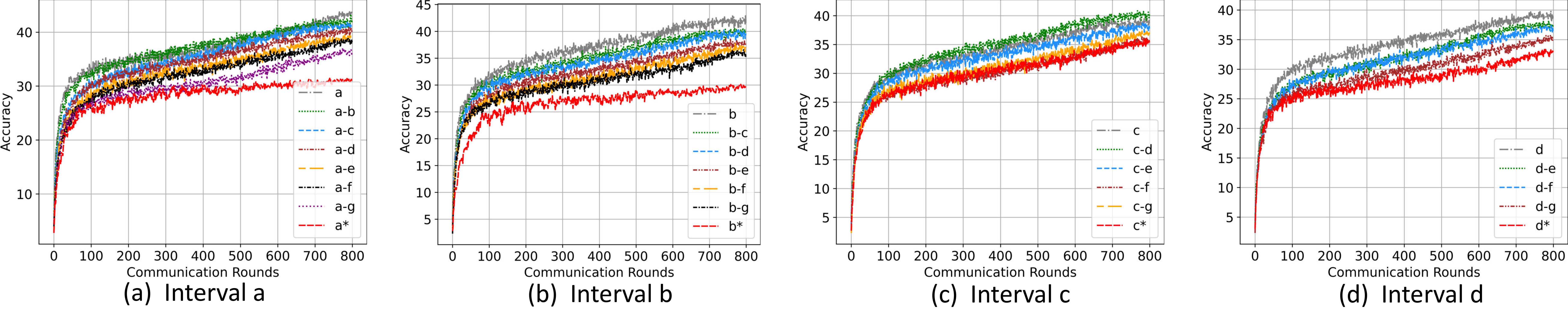}}
\vspace{-1cm}
\caption{Learning curves for all communication interval combinations in Table~\ref{tab:freq_ablation_cifar100_appendix}, $Dir(0.3)$ setting.}
\label{fig:freq_ablation_cifar100_d03}
\end{figure}
\begin{figure}[htbp]
\centering
{\includegraphics[width=1\linewidth]{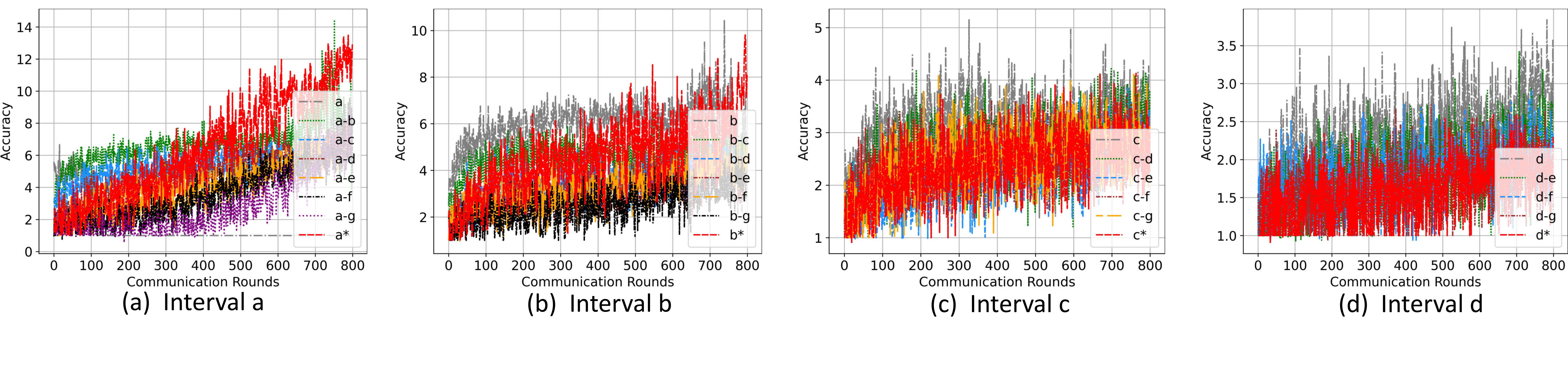}}
\vspace{-1cm}
\caption{Learning curves for all communication interval combinations in Table~\ref{tab:freq_ablation_cifar100_appendix}, $K=1$ setting.}
\label{fig:freq_ablation_cifar100_l1}
\end{figure}
\begin{figure}[htbp]
\centering
{\includegraphics[width=1\linewidth]{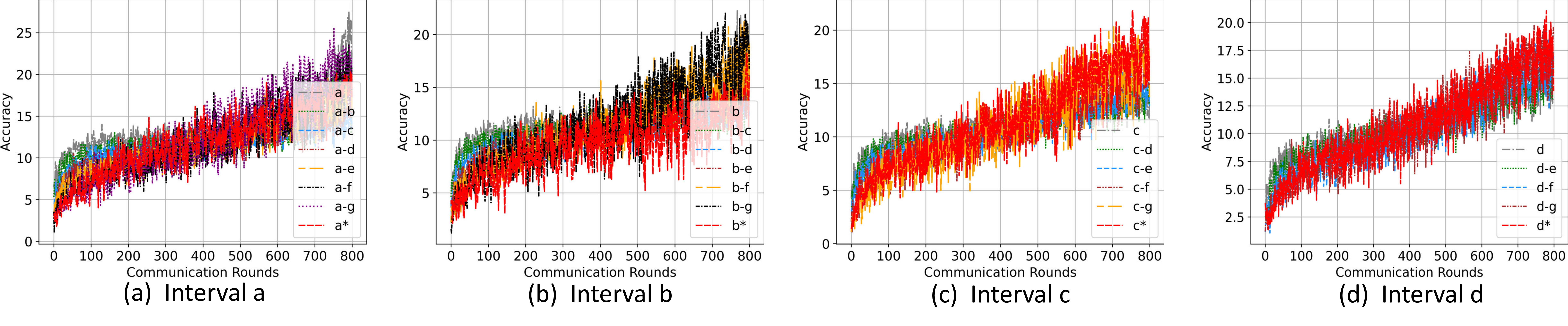}}
\vspace{-1cm}
\caption{Learning curves for all communication interval combinations in Table~\ref{tab:freq_ablation_cifar100_appendix}, $K=2$ setting.}
\label{fig:freq_ablation_cifar100_l2}
\end{figure}

\clearpage

\begin{table}[ht]
\caption{Accuracy results comparing various communication interval combinations, and the standardized server total communication cost $\kappa_{g}^{t}$, using CNN under the FEMNIST, \textit{Fix-0.3} configuration. 'a' signifies that all active clients train for 'a' intervals, while 'a*' indicates training exclusively involving high-frequency group clients. The same pattern is replicated for intervals b, c, and d.}
\centering
\begin{small}
\begin{tabular}{@{}ccccccc@{}}
\toprule
\multirow{2}{*}{Communication   interval} & \multicolumn{2}{c}{$Dir(0.01)$} & \multicolumn{2}{c}{$Dir(0.1)$} & \multicolumn{2}{c}{$Dir(0.3)$} \\ \cmidrule(l){2-7} 
                                          & Accuracy    & $\kappa_{g}^{t}$  & Accuracy   & $\kappa_{g}^{t}$  & Accuracy   & $\kappa_{g}^{t}$  \\ \midrule
a                                         & 77.9   & 1.000             & 85.6  & 1.000             & 87.6  & 1.000             \\
a-b                                       & 61.1   & 0.462             & 85.0  & 0.452             & 87.1  & 0.473             \\
a-c                                       & 62.6   & 0.324             & 84.7  & 0.312             & 86.8  & 0.339             \\
a-d                                       & 61.3   & 0.301             & 85.1  & 0.288             & 86.9  & 0.316             \\
a-e                                       & 61.5   & 0.289             & 85.2  & 0.277             & 87.0  & 0.305             \\
a-f                                       & 67.8   & 0.284             & 84.8  & 0.271             & 86.8  & 0.300             \\
a-g                                       & 62.7   & 0.281             & 84.7  & 0.268             & 86.7  & 0.297             \\
a*                                        & 68.7   & 0.278             & 84.3  & 0.265             & 86.2  & 0.294             \\ \midrule
b                                         & 58.8   & 0.250             & 85.1  & 0.250             & 87.1  & 0.250             \\
b-c                                       & 61.4   & 0.119             & 84.7  & 0.117             & 86.9  & 0.119             \\
b-d                                       & 60.6   & 0.096             & 84.9  & 0.094             & 86.9  & 0.097             \\
b-e                                       & 66.3   & 0.085             & 84.9  & 0.082             & 86.9  & 0.086             \\
b-f                                       & 61.1   & 0.080             & 84.9  & 0.077             & 86.8  & 0.080             \\
b-g                                       & 64.2   & 0.077             & 84.8  & 0.074             & 86.7  & 0.077             \\
b*                                        & 64.2   & 0.074             & 83.2  & 0.071             & 85.9  & 0.075             \\ \midrule
c                                         & 62.0   & 0.063             & 85.0  & 0.063             & 86.7  & 0.063             \\
c-d                                       & 61.0   & 0.040             & 85.1  & 0.040             & 86.9  & 0.040             \\
c-e                                       & 63.0   & 0.028             & 85.2  & 0.028             & 86.7  & 0.029             \\
c-f                                       & 62.9   & 0.022             & 84.7  & 0.023             & 86.8  & 0.023             \\
c-g                                       & 61.5   & 0.020             & 84.7  & 0.020             & 86.7  & 0.020             \\
c*                                        & 67.1   & 0.017             & 84.6  & 0.017             & 85.7  & 0.017             \\ \midrule
d                                         & 62.3   & 0.031             & 84.9  & 0.031             & 86.9  & 0.031             \\
d-e                                       & 62.6   & 0.020             & 84.6  & 0.020             & 86.9  & 0.021             \\
d-f                                       & 60.8   & 0.015             & 84.4  & 0.015             & 86.7  & 0.015             \\
d-g                                       & 58.2   & 0.012             & 84.2  & 0.012             & 86.7  & 0.012             \\
d*                                        & 61.5   & 0.009             & 83.4  & 0.009             & 85.6  & 0.010             \\ \midrule
FedProx                                   & 50.8   & 0.004             & 84.1(0.5)  & 0.004             & 85.1(0.1)  & 0.004             \\
FedEnsemble                               & 31.1   & 0.004             & 83.6(0.2)  & 0.004             & 83.8  & 0.004             \\
SCAFFOLD                                  & N/A         & 0.008             & N/A        & 0.008             & N/A        & 0.008             \\
FedGen                                    & N/A         & 0.01              & N/A        & 0.01              & N/A        & 0.01              \\
FedDyn                                    & N/A         & 0.004             & N/A        & 0.004             & N/A        & 0.004             \\
FedNova                                   & 45.5   & 0.004             & 83.9  & 0.004             & 85.7  & 0.004             \\
FedAvg                                    & 37.7(0.8)   & 0.004             & 83.8(0.3)  & 0.004             & 85.5(0.3)  & 0.004             \\ \bottomrule
\end{tabular}
\end{small}
\label{tab:freq_ablation_femnist_appendix}
\end{table}

\begin{figure}[h]
\centering
\includegraphics[width=1\linewidth]{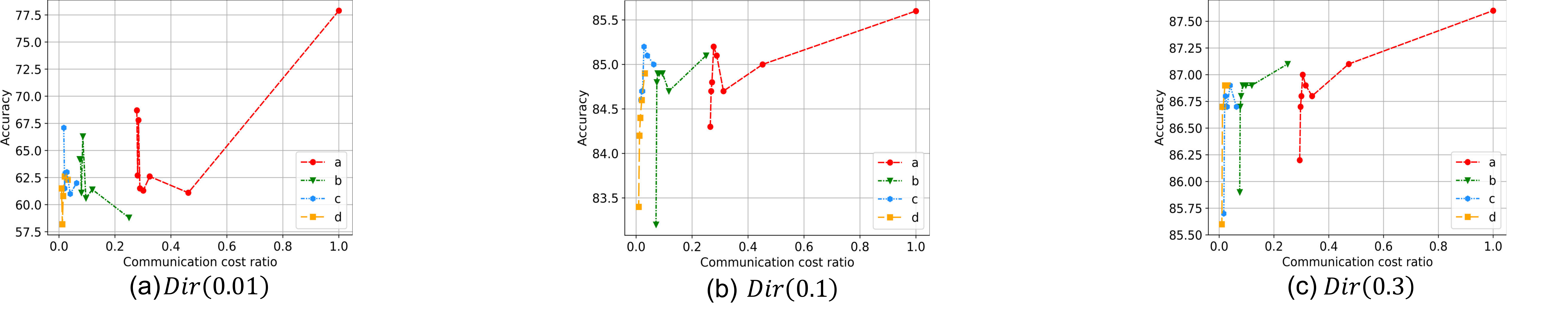}
\vspace{-0.6cm}
\caption{This figure illustrates the trend from Table~\ref{tab:freq_ablation_femnist_appendix}, comparing the high-frequency group only, extended intervals, and DynamicSGD, using a CNN with FEMNIST. For the interval 'a', we depict a, a-b, a-c, a-d, a-e, a-g, and a*. The same pattern is replicated for intervals b, c, and d.}
\label{fig:femnist_ablation_trend_appendix}
\end{figure}
\begin{figure}[htbp]
\centering
{\includegraphics[width=1\linewidth]{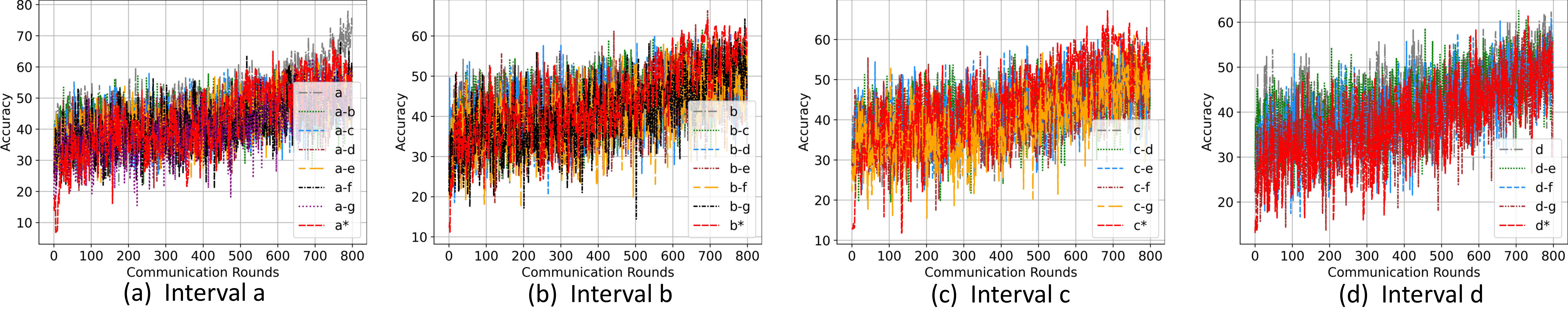}}
\vspace{-0.6cm}
\caption{Learning curves for all communication interval combinations in Table~\ref{tab:freq_ablation_femnist_appendix}, $Dir(0.01)$ setting.}
\label{fig:freq_ablation_femnist_d001}
\end{figure}
\begin{figure}[htbp]
\centering
{\includegraphics[width=1\linewidth]{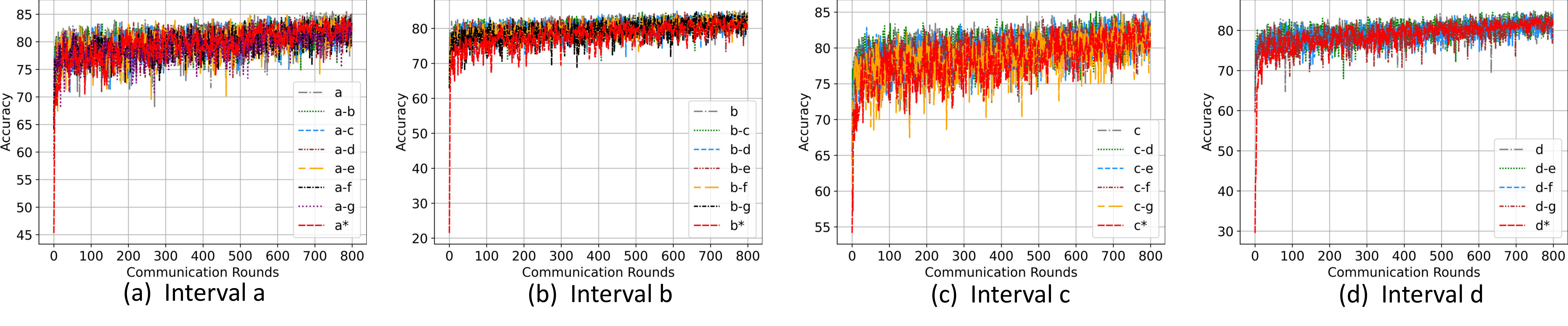}}
\vspace{-0.6cm}
\caption{Learning curves for all communication interval combinations in Table~\ref{tab:freq_ablation_femnist_appendix}, $Dir(0.1)$ setting.}
\label{fig:freq_ablation_femnist_d01}
\end{figure}
\begin{figure}[htbp]
\centering
{\includegraphics[width=1\linewidth]{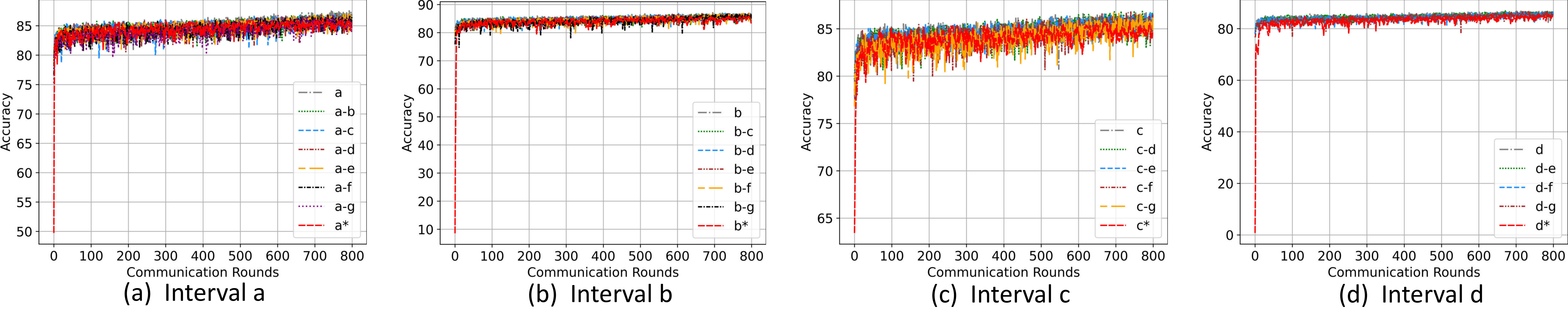}}
\vspace{-0.6cm}
\caption{Learning curves for all communication interval combinations in Table~\ref{tab:freq_ablation_femnist_appendix}, $Dir(0.3)$ setting.}
\label{fig:freq_ablation_femnist_d03}
\end{figure}

\clearpage

\section{Theoretical analysis}
\label{sec:theory}
In this section, we present the theoretical analysis of DynamicFL. Instead of pursuing a comprehensive analysis of the developed algorithm, we consider a simplified case to derive theoretical insights. 
Consider a federated learning problem that involves only three clients, denoted by $i=1,2,3$. Suppose client $i$ has $n_i$ data observations $z_{i,j}, j=1,\ldots,n_i$ and aims to learn their unknown mean based on a quadratic loss 
\begin{align}
    L_i (\theta) = \frac{1}{n_i} \sum_{j=1}^{n_i} \frac{1}{2} (\theta - z_{i,j})^2 . \label{eq1}
\end{align}
Suppose the global objective is to solve
\begin{align}
    \min_{\theta \in \R} L (\theta) = \sum_{i=1}^3 \lambda_i L_i (\theta)   \label{eq2}
\end{align}
where $\lambda_i = n_i/n$ and $n=\sum_{i=1}^3 n_i$.
We consider the following simplified version of the dynamicFL algorithm. 

\textbf{DynamicFL}: There are $r$ rounds. Within each round, 1) clients $1$ and $2$ will run a single local step and aggregate (namely, an FedSGD-type update) for $k$ times, 2) client $3$ will run $k$ local steps, and finally, 3) three clients' models are aggregated on the server-side (FedAvg-type update). 

We hope to develop the insight that even if the local minima of clients $1$ and $2$ are far from the global minimum, the performance (as measured by the convergence rate) of the dynamicFL algorithm is close to the following FedSGD baseline. 

\textbf{FedSGD}: all three clients perform a single local step and aggregate for $k \cdot r$ rounds. Note that the computational complexity as measured by the system-wide local steps is $K \cdot r$ for both the dynamicFL and FedSGD methods.

\begin{theorem}
    Assume the objective as defined in (\ref{eq2}). With the same learning rate $\eta \in (0,1)$ at each step for each client, the three-client DynamicFL and FedSGD algorithms will have the same rate of convergence at the same number of system-wide steps. More specifically, at $k \cdot r$ steps, where $k$ is the number of high-frequency local steps in DynamicFL and $r$ is the number of rounds, the updated model parameter satisfies
    \begin{align}
        \theta_r = \theta^* + (1- \eta)^{k \cdot r} (\theta_{0}  - \theta^*) \nonumber 
    \end{align}
    where $\theta^*$ is the global minimum to (\ref{eq2}) and $\theta_{0}$ is the initial model parameter.
\end{theorem}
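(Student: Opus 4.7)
The plan is to exploit the fact that for the quadratic objective in~\eqref{eq1}, each local gradient step is an affine contraction toward the corresponding sample mean. Writing $\mu_i \de \frac{1}{n_i}\sum_{j} z_{i,j}$ for the minimiser of $L_i$, we have $\nabla L_i(\theta) = \theta - \mu_i$, so a step of size $\eta$ sends $\theta$ to $(1-\eta)\theta + \eta \mu_i$; and the global minimiser of~\eqref{eq2} is $\theta^{*} = \sum_i \lambda_i \mu_i$. The whole proof reduces to bookkeeping with this affine recursion, so I would structure it as: (i) a warm-up on FedSGD to fix the target identity, (ii) a per-phase analysis of one DynamicFL round, (iii) the small algebraic identity that glues the phases together, and finally (iv) iterating over $r$ rounds.

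First I would dispense with the FedSGD baseline. In each aggregated step the three clients produce $(1-\eta)\theta + \eta \mu_i$, and weighted averaging with weights $\lambda_i$ yields $(1-\eta)\theta + \eta \theta^{*}$; subtracting $\theta^{*}$ shows the error contracts by exactly $(1-\eta)$ per step, so after $k\cdot r$ steps we obtain the stated formula. This serves as the target that must be recovered from DynamicFL.

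Next I would analyse a single DynamicFL round in its three phases, with $\theta_0$ denoting the round's starting parameter. In Phase~1, clients $1$ and $2$ alternate a local step with a two-client aggregation; applying the warm-up but restricted to the index set $\{1,2\}$ with renormalised weights $\lambda_i/(\lambda_1+\lambda_2)$ shows that after $k$ mini-rounds their common parameter is
\begin{equation*}
\theta_{12}^{(k)} = \theta_{12}^{*} + (1-\eta)^{k}(\theta_0 - \theta_{12}^{*}), \qquad \theta_{12}^{*} \de \frac{\lambda_1 \mu_1 + \lambda_2 \mu_2}{\lambda_1 + \lambda_2}.
\end{equation*}
In Phase~2, client $3$ performs $k$ isolated local steps, giving $\theta_{3}^{(k)} = \mu_3 + (1-\eta)^{k}(\theta_0 - \mu_3)$. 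Phase~3 then aggregates the two groups with weights $\lambda_1+\lambda_2$ and $\lambda_3$ to produce $\theta_1$.

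The crux is the algebraic identity $(\lambda_1+\lambda_2)\theta_{12}^{*} + \lambda_3 \mu_3 = \sum_i \lambda_i \mu_i = \theta^{*}$, together with $(\lambda_1+\lambda_2)+\lambda_3 = 1$ which lets the $\theta_0$ coefficients collapse. Substituting these into the Phase~3 weighted average produces $\theta_1 - \theta^{*} = (1-\eta)^{k}(\theta_0 - \theta^{*})$: one DynamicFL round enacts exactly $k$ FedSGD contractions, and iterating over $r$ rounds yields the theorem. The main ``obstacle'' is really just this identity---once one sees that the two intermediate attractors $\theta_{12}^{*}$ and $\mu_3$ recombine, with group weights $\lambda_1+\lambda_2$ and $\lambda_3$, back into the global attractor $\theta^{*}$, the decoupling of the two phases is harmless. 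I would flag in the write-up that this exact cancellation depends essentially on the affine gradient structure of the quadratic loss and on the use of a common learning rate $\eta$ across clients and phases, so any extension to non-quadratic losses will require a perturbation argument rather than an identity.
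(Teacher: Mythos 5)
Your proposal is correct and follows essentially the same route as the paper's proof: the affine-contraction view of each quadratic gradient step, the closed forms for the two-client aggregated group and for client $3$'s isolated updates, the identity $(\lambda_1+\lambda_2)\theta_{12}^{*}+\lambda_3\mu_3=\theta^{*}$ at the round-level aggregation, and the recursion over $r$ rounds. The only differences are notational ($\mu_i$ versus the paper's $\bar{z}_i$, and $\theta_{12}^{*}$ versus $\bar{z}_{1,2}$).
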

\begin{proof}
    The local loss in (\ref{eq1}) can be rewritten as 
    \begin{align}
        L_i (\theta) 
        &= \frac{1}{n_i} \sum_{j=1}^{n_i} \frac{1}{2} (\theta - z_{i,j})^2 \nonumber \\
        &= \frac{1}{2} (\theta - \bar{z}_i)^2 + \frac{1}{n_i} \sum_{j=1}^{n_i} \frac{1}{2} (\bar{z}_i - z_{i,j})^2 ,  \label{eq3}
    \end{align}
    where $\bar{z}_i = n_i^{-1} \sum_{j=1}^{n_i} z_{i,j}$. 
    It can be calculated that the global minimum of the objective function $L$ in (\ref{eq1}) is 
    \begin{align}
        \theta^* = \sum_{i=1}^3 \lambda_i \bar{z}_i. \label{eq6}
    \end{align}

    For client $3$, suppose it runs $k $ local steps of gradient descent with learning rate $\eta > 0$, starting from the server-sent model $\theta_{i,0} = \theta_{r-1}$ at the beginning of round $r$, $r \geq 1$. The local updated parameter at step $k$, denoted by $\theta_{i,k}$, follows
    \begin{align}
        \theta_{i,k} 
        &= \theta_{i,k-1} - \eta \nabla_{\theta}  L_i (\theta_{i,k-1})   \nonumber \\
        &=  \theta_{i,k-1} - \eta (\theta_{i,k-1} - \bar{z}_i). \label{eq12}
    \end{align}
    Rearranging it, we have
    \begin{align}
        \theta_{i,k} - \bar{z}_i
        &= \theta_{i,k-1} - \eta \nabla_{\theta}  L_i (\theta_{i,k-1})   \nonumber \\
        &=  (1- \eta)  (\theta_{i,k-1} - \bar{z}_i) \nonumber \\
        &= \cdots = (1- \eta)^k (\theta_{i,0} - \bar{z}_i) \nonumber\\
        & = (1- \eta)^k (\theta_{r-1} - \bar{z}_i) \nonumber.
    \end{align}
    Thus, we have
    \begin{align}
       & \theta_{i,k} =  \bar{z}_i + (1- \eta)^k (\theta_{r-1} - \bar{z}_i) ,   \label{eq4} \\
       & \theta_{i,1} =  \bar{z}_i + (1- \eta) (\theta_{r-1}- \bar{z}_i) 
        \label{eq5}
    \end{align}

    Next, we bring the equations (\ref{eq4}) and (\ref{eq5}) into the dynamicFL algorithm. Within each round $r$, at each iteration $\ell=1,\ldots ,k$, clients $1$ and $2$ will run a local step and aggregation to obtain
    \begin{align}
        \theta_{1\&2, \ell} 
        &\de  \sum_{i=1}^2 \frac{n_i}{n_1+n_2} \biggl(\bar{z}_i + (1- \eta) (\theta_{1\&2, \ell-1} - \bar{z}_i) \biggr), \nonumber \\
        &\textrm{with }\theta_{1\&2, 0} \de \theta_{r-1}.\nonumber
    \end{align}
    Subtracting both sides by $\bar{z}_{1,2} \de \sum_{i=1}^2 \frac{n_i}{n_1+n_2} \bar{z}_i$, we obtain
    \begin{align}
        &\theta_{1\&2, \ell} - \bar{z}_{1,2}
        =  (1- \eta) (\theta_{1\&2, \ell-1} - \bar{z}_{1,2}) ,
    \end{align}
    which implies that 
    \begin{align}
        \theta_{1\&2, k} = \bar{z}_{1,2} + (1- \eta)^k (\theta_{r-1} - \bar{z}_{1,2}) \label{eq8}.
    \end{align}
    Meanwhile, the client 3 will obtain $\theta_{3,k}$ after $k$ steps of updates alone, as calculated in (\ref{eq4}).
    Combining (\ref{eq4}) and (\ref{eq8}), DynamicFL will obtain the aggregated model parameter at the end of round $r$ as
    \begin{align}
        \theta_r &= \frac{n_1+n_2}{n} \theta_{1\&2, k} + \frac{n_3}{n} \theta_{3,k} \nonumber \\
        &= \frac{n_1+n_2}{n} \bar{z}_{1,2} + \frac{n_1+n_2}{n} (1- \eta)^k (\theta_{r-1} - \bar{z}_{1,2}) \nonumber \\
        &\quad + \frac{n_3}{n} (\bar{z}_3 + (1- \eta)^k (\theta_{r-1} - \bar{z}_3)) \nonumber \\
        &= \theta^* + (1- \eta)^k (\theta_{r-1}  - \theta^*) \label{eq9}.
    \end{align}
    where the last equality is from (\ref{eq6}).
    By applying recursion to (\ref{eq9}), we obtain 
    \begin{align}
        \theta_r = \theta^* + (1- \eta)^{k \cdot r} (\theta_{0}  - \theta^*) \label{eq10}
    \end{align}
    where $\theta_{0}$ is the initial model from the server (at round $0$). 
    
    On the other hand, if we run FedSGD $k \cdot r$ rounds, where in each round, all three clients perform a single local step (with the same learning rate $\eta$) and and then aggregate, we will have
    \begin{align}
        \theta^{\textrm{FedSGD}}_{\ell} 
        &= \theta^{\textrm{FedSGD}}_{\ell-1} - \eta \nabla_{\theta}  L(\theta^{\textrm{FedSGD}}_{\ell-1})   \nonumber \\
        &= \theta^{\textrm{FedSGD}}_{\ell-1} - \eta (\theta^{\textrm{FedSGD}}_{\ell-1}- \theta^*),  \label{eq13}
    \end{align}
    for $\ell = 1, \ldots , k \cdot r$. Similarly to how Equation (\ref{eq12}) implies (\ref{eq4}), Equation (\ref{eq13}) implies that $\theta^{\textrm{FedSGD}}_{k \cdot r} = \theta^* + (1- \eta)^{k \cdot r} (\theta_{0}  - \theta^*)$, which is exactly (\ref{eq10}). 
\end{proof}

\clearpage
\end{document}